\DocumentMetadata{pdfstandard=A-3u}

\PassOptionsToPackage{dvipsnames,svgnames,x11names}{xcolor}

\documentclass[
subscriptcorrection,
upint,
varvw,
barcolor=Goldenrod3,
mathalfa=cal=euler,
balance,
hyphenate,
french
]{asmejour}

\usepackage{amsthm}
\usepackage{textcomp}
\usepackage{float}
\usepackage{soul}
\usepackage{algorithmic}
\usepackage{algorithm}
\usepackage{comment}
\usepackage{tabularx}
\mathtoolsset{showonlyrefs}

\def\BibTeX{{\rm B\kern-.05em{\sc i\kern-.025em b}\kern-.08em
    T\kern-.1667em\lower.7ex\hbox{E}\kern-.125emX}}

\newtheorem*{definition*}{Definition}

\newtheorem{remark}{Remark}
\newtheorem{lemma}{Lemma}
\newtheorem{proposition}{Proposition}

\hypersetup{%
	pdfauthor={Ahmed Ali, Chiara Gabellieri, Antonio Franchi},                       		   	
	pdftitle={Lie Group Formulation of Recursive Dynamics Algorithms of Higher Order for Floating-Base Robots},                  	
	pdfkeywords={Floating-base systems, Lie group formulation, higher-order forward dynamics, higher-order hybrid dynamics, higher-order inverse dynamics, admissible Coriolis matrix},
	pdfsubject = {Robotics},			
	pdflicenseurl={https://ctan.org/pkg/asmejour},
}

\JourName{Mechanisms and Robotics}
\PreprintString{}
                   
\begin{document}

\SetAuthorBlock{Ahmed Ali\CorrespondingAuthor}{Robotics and Mechatronics Department,\
Electrical Engineering, Mathematics, and Computer Science (EEMCS) Faculty,\
University of Twente, 7500 AE Enschede, The Netherlands.\
Email: ahmed.ali@utwente.nl}

\SetAuthorBlock{Chiara Gabellieri}{Robotics and Mechatronics Department, EEMCS Faculty,\
University of Twente, 7500 AE Enschede, The Netherlands.\
Email: c.gabellieri@utwente.nl}

\SetAuthorBlock{Antonio Franchi}{Robotics and Mechatronics Department, EEMCS Faculty,\
University of Twente, 7500 AE Enschede, The Netherlands,\
and Department of Computer, Control and Management Engineering,\
Sapienza University of Rome, 00185 Rome, Italy.\
Email: schol@r.franchi.eu}

\title{Lie Group Formulation of Recursive Dynamics Algorithms of {{Higher Order}} for Floating-Base Robots}

\keywords{Floating-base systems, Lie group formulation, higher-order forward dynamics, higher-order hybrid dynamics, higher-order inverse dynamics, admissible Coriolis matrix}

\begin{abstract}
{{In this paper, {{we describe procedures for computing higher-order time derivatives of the Lie-group Newton-Euler, Articulated-Body Inertia, and hybrid dynamics}} algorithms for floating-base trees, where the base configuration evolves on $\mathrm{SE}(3)$ and the attached mechanism is an open kinematic tree with configuration on the $(n_1+n_2)$-dimensional manifold $\mathbb{T}^{n_1}\times \mathbb{R}^{n_2}$, using spatial representation of twists. After presenting the algorithms, we collect the resulting recursions into closed-form equations of motion, identifying an admissible Coriolis matrix satisfying the passivity property, and showing that the articulated inertia tensor remains unchanged across all time derivatives. We then apply the developed methods to a 12-DoF aerial manipulator to derive analytical expressions for its geometric forward and inverse dynamics along with their first time derivatives whereas the numerical simulations successfully evaluate these dynamics up to fifth order. {{Finally, to demonstrate their practical utility, we benchmark the proposed extensions and show that, in the considered tests, their computational cost scales quadratically with the derivative order, whereas the automatic-differentiation baseline exhibits exponential scaling.}}}}
\end{abstract}

\date{\today}

\maketitle 


\section{Introduction}
Rigid body dynamics has been a central area of study in the science of mechanics since the era of Galileo, and the field has now reached a mature stage of development \cite{bottema1990theoretical,siciliano2010robotics}. There are several approaches for deriving the dynamics models of articulated rigid body systems, with the most prominent paradigms being Lagrangian, Hamiltonian, and Newtonian frameworks. As robotic systems have grown increasingly complex, with designs characterized by having high degrees of freedom, different mechanisms of actuation and wider set of tasks they can perform, the demand for computationally efficient methods to model their dynamics has also surged \cite{angeles2003fundamentals}. Among these methods that are particularly suited for digital computation of dynamics is the Newtonian-based recursive approach. A comparison of the computational complexities of some of these methods applied to open chain serial kinematics is reported in \cite{comparision}.

Conventionally, this approach uses a 3D or 6D (spatial) vectorial representation of the kinematics to formulate the Newton-Euler equations of the dynamics between any two interconnected bodies, which can then be evaluated recursively \cite{STEPANENKO1976137,ORIN1979107}. This idea became the foundation behind a set of specialized recursive algorithms detailed in \cite{featherstone2014rigid,jain2010robot,lilly2002efficient,Rodriguez1,Rodriguez2} that deal with the problem of computing inverse and forward dynamics of mechanisms. While this approach has achieved a huge success, which can be seen from its implementation in widely-used software programs like MATLAB and ADAMS \cite{ComputerComparision,ADAMS}, it does not fully leverage the intrinsic geometric properties of the quantities involved, remaining essentially coordinate-dependent and thereby limiting its application to a local chart of the configuration manifold. Other non-geometric  approaches exist; an example is \cite{lee2005new}, where the authors presented a method exploiting a particular diagonal structure of the inverted mass matrix of serial manipulators to develop an efficient algorithm for its inversion, solving the forward dynamics problem for these manipulators.

Moreover, building on this classical the Newton-Euler framework, the authors of \cite{FI_NoDer_NOGeo} derive an algorithm for computing the dynamics of floating-base systems when a modified Denavit–Hartenberg (DH) description is used for the kinematics. A related approach appears in \cite{Snake_NOGeo}, which applies a classical Newton-Euler-based model to snake robots operating in constrained environments. Their approaches employ a minimal parameterization of the base. In contrast, the work in \cite{F_NODer_NoGeo_para} adopts a non-recursive approach to the forward dynamics problem, although the base orientation remains expressed using Euler angles. Similarly, \cite{Diff} presents an algorithm that yields the dynamic model through symbolic differentiation, using spatial algebra in the kinematic description. 

To address this limitation, research into geometric representations of rigid body mechanics has been carried out, enabling a global, coordinate-free formulation of dynamics. This line of research has produced a significant body of literature known as geometric mechanics. A seminal contribution in this direction is the influential paper by Brockett \cite{brockett2005robotic}, introducing the use of Lie algebra and screw theory in modeling the forward kinematics of manipulators through the concept of Product of Exponentials (PoE) as a geometric parallel to the classical Denavit–Hartenberg method. Lie groups and screw theory sit in the core of this geometric framework which is described in a number of textbooks \cite{davidson2004robots,murray2017mathematical,lynch2017modern}. In \cite{Liegroup,mechAnalysi}, an application of this approach is illustrated on a fixed-base closed-kinematics mechanism, and respectively, constructing an analysis for linkages' mobility.

In \cite{4542865}, singularity-free closed-form laws of motion are derived by formulating the Euler-Lagrange equations on global, non-Euclidean configuration space, while \cite{Inertially-Decoupled} extended the analysis further by providing a coordinate transformation by which the EoMs of articulated floating-base systems can be decoupled into two separate equations: one for the dynamics of whole system's CoM, i.e. the centroidal dynamics \cite{orin2013centroidal}, and one for the internal variables (joints). Employing Lie-theoretic tools for manipulators whose kinematics is represented by PoE, with the joint axes being screws, the method in \cite{park1995lie} calculates the corresponding inertia matrix, Coriolis terms, and gravitational torque. Founded on the finite element concept \cite{Geradin_Cardona_2001}, another geometric approach to obtain singularity-free closed-form EoM for multibody systems on $SE(3)$ is proposed in \cite{Oliver1,Olivier2}. It leads to a system of differential algebraic equations (DAEs), whose dimension is $6N_B+6N_J$ with $N_B$ and $N_J$ denoting the number of bodies and joints in the system, respectively, due to representing the kinematical connection between bodies in the form of holonomic constraints with the help of Lagrange multipliers. Although this method is well-suited to model complex kinematic constraints and flexibility in the interconnection, it produces higher-dimensional EoMs compared to methods yielding ordinary differential equations (ODEs) EoMs with minimal dimension, equal the system's DoFs, thanks to the use of the screw theory to describe kinematic pairs. Additional model reduction techniques are needed to decrease the system dimension. Hence, this geometric finite element method can be more computationally expensive than calculating closed-form EoMs from the geometric O(N) Newton-Euler method in \cite{park1995lie}. 

On the other hand, algorithms that recast the recursive Newton-Euler approach for manipulators into the geometric setting are presented in \cite{Coordinate-inva,park2018geometric,advances}. An overview of these algorithms for serial manipulators using hybrid, body-fixed and spatial reference frames is discussed in \cite{muller2018screw,muller2018screw2}. In \cite{FI_NoDer_ParaBase}, a compact inverse dynamic formulation is achieved thanks to the use of Lie group methods; however, the base remains parameterized, and the algorithm does not provide a recursive calculation for forward dynamics. Despite its extensive treatment of the manipulator case, this geometric recursive formulation does not explicitly consider articulated floating-base systems in which the base configuration is taken as the entire $SE(3)$ instead of some parameterization of it, except the work in \cite{ReducedEL} where it is hinted out that EoM of a floating-base system can be obtained by recursively computing each term in the closed-form Hamel’s equations of motion \cite{hamel2013theoretische} using the Newton-Euler algorithms \cite{park1995lie}.

Another representation of the base configuration can be quaternion. The work in \cite{I_NoDer_Quat} utilizes a geometric formulation for inverse dynamics, using dual quaternion representation for kinematics. A similar treatment is observed in \cite{I_Lagra_NoDer_Quat}, where the Lagrangian equations are casted in dual quaternions algebra. Despite its elegance, these methods inherit the issue of double covering in representing the group of spatial rotations $SO(3)$, which complicates the attitude modeling of free-floating bases.

{{None of these works derive analytical \emph{time} derivatives of inverse or forward dynamics for floating-base trees with the base modeled on $SE(3)$, despite many applications that require such derivatives. Although differentiation can be carried out via automatic, numerical, or analytical (closed-form or recursive) methods, analytical differentiation is often preferred for control design: e.g., feedback linearization of emerging multirotors \cite{ali2026pendumavsixinputomnidirectionalmav} or manipulators with elastic joints requires second-order inverse dynamics, obtained either by time differentiating EoMs \cite{Elastic,elasticSpong} or by computing second-order time derivatives of inverse dynamics via the classical Newton-Euler recursion \cite{Deluca}; similarly, many optimal trajectory planning problems \cite{time-optimal,Gasparetto2015} and optimal control \cite{optimalCont} evaluate time derivatives of inverse dynamics. Existing motion-planning works do not fill this gap: \cite{advances} provides no analytical higher-order \emph{time} derivatives; \cite{trodynamics} lacks an explicit ${SE}(3)$ floating base and reports partial derivatives w.r.t.\ joint variables (not time derivatives); \cite{optimalmotyionwilly} derives only first-order partial derivatives of a Lie-group Newton-Euler model w.r.t.\ path parameters, while the authors in \cite{hong2022geometric} treat a special $\mathrm{SE}(3)$ base case where all children attach directly to the base (or to a wing), thus excluding arbitrary open-tree topologies and time differentiation. Motivated by this, time derivatives of geometric inverse dynamics for manipulators are illustrated in \cite{Müller2021Closed-form,nthorder,spatialtwist,garofalo2013closed}.}}
\begin{table*}
\centering
\caption{Existing Methods for Dynamics Modeling}
\renewcommand{\arraystretch}{1.2}
\begin{tabular}{|p{3cm}|c|c|c|c|p{1cm}|p{1cm}|c|c|c|}
\hline
\textbf{Method} & \textbf{$SE(3)$} & \textbf{Geom.} & \textbf{Closed} & \textbf{Rec.} & \textbf{Geom. Der.} & \textbf{High Der.} & \textbf{O($N$)} & \textbf{Spatial} & \textbf{F/I} \\
\hline
Park et. al \cite{park2018geometric,park1995lie,Coordinate-inva}  & $\times$ & $\checkmark$ &  $\checkmark$ & $\checkmark$ & $\times$ & $\times$ & $\checkmark$ & $\times$ & F/I \\
Hollerbach et. al \cite{comparision} & $\times$ & $\times$ & $\times$ & $\checkmark$ & $\times$ & $\times$ & $\checkmark$ & $\times$ & I \\
Lilly, Orin \cite{lilly2002efficient,ORIN1979107} & $\times$ & $\times$ & $\times$ & $\checkmark$ & $\times$ & $\times$ & $\checkmark$ & $\times$ & F/I\\
Orin et al \cite{orin2013centroidal} & $\times$ & $\times$ & $\checkmark$ & $\checkmark$ & $\times$ & $\times$ & $\checkmark$ & $\times$ & I\\
Lee, Chirikjian \cite{lee2005new} & $\times$ & $\times$ & $\checkmark$ & $\times$ & $\times$ & $\times$ & $\checkmark$ & $\times$ & F\\
Featherstone \cite{featherstone2014rigid} & $\times$ & $\times$ & $\times$ & $\checkmark$ & $\times$ & $\times$ & $\checkmark$ & $\times$ & F/I\\
Rodriguez et. al \cite{Rodriguez1,Rodriguez2} & $\times$ & $\times$ & $\times$ & $\checkmark$ & $\times$ & $\times$ & $\checkmark$ & $\times$& F/I \\
Garofalo et. al \cite{garofalo2013closed} & $\times$ & $\checkmark$ & $\checkmark$ & $\checkmark$ & $\checkmark$ & $\times$ & $\checkmark$ & $\times$ & I\\
Garofalo et. al \cite{Inertially-Decoupled} & $\checkmark$ & $\checkmark$ & $\checkmark$ & $\times$ & $\times$ & $\times$ & - & $\times$ & I\\
Khalil et. al \cite{FI_NoDer_NOGeo} & $\times$ & $\times$ & $\times$ & $\checkmark$ & $\times$ & $\times$ & $\checkmark$ & $\times$ & F/I\\
Kumar et. al \cite{FI_NoDer_ParaBase} & $\times$ & $\checkmark$ & $\checkmark$ & $\checkmark$ & $\times$ & $\times$ & $\checkmark$ & $\checkmark$ & I\\
Afonso Silva et. al \cite{I_NoDer_Quat} & $\times$ & $\checkmark$ & $\times$ & $\checkmark$ & $\times$ & $\times$ & $\checkmark$ & $\times$ & I\\
Cohen et. al \cite{I_Lagra_NoDer_Quat} & $\times$ & $\checkmark$ & $\checkmark$ & $\times$ & $\times$ & $\times$ & - & $\times$ & I\\
Mishra et. al \cite{ReducedEL} & $\checkmark$ & $\checkmark$ & $\checkmark$ & $\times$ & $\times$ & $\times$ & - & $\times$ & I\\
O. Brüls et. al \cite{Oliver1} & $\checkmark$ & $\checkmark$ & $\checkmark$ & $\times$ & $\times$ & $\times$ & - & $\times$ & I\\
Müller \cite{muller2018screw2} & $\times$ & $\checkmark$ & $\checkmark$ & $\checkmark$ & $\times$ & $\times$ & $\checkmark$ & $\checkmark$ & I\\
Müller \cite{spatialtwist} & $\times$ & $\checkmark$ & $\times$ & $\checkmark$ & $\checkmark$ & $\times$ & $\checkmark$ & $\checkmark$ & I\\
Kumar, Müller \cite{nthorder} & $\times$ & $\checkmark$ & $\checkmark$ & $\checkmark$ & $\checkmark$ & $\checkmark$ & $\checkmark$ & $\times$& I \\
\textbf{Proposed method} & \textbf{$\checkmark$} & $\checkmark$ & $\checkmark$ & $\checkmark$ & $\checkmark$ & $\checkmark$ & $\checkmark$ & $\checkmark$ & F/I\\
\hline
\end{tabular}
\vspace{1mm}

\footnotesize{
\textbf{$SE(3)$}: Singularity-free Floating-base Representation; 
\textbf{Geom.}: Frame/coordinate Invariance of the EoM;  
\textbf{Closed}: Transformation to Closed-form Lagrangian EoM;
\textbf{Rec.}: Recursive Formulation;
\textbf{Geom. Der.}: Exact Geometric Time Derivatives; 
\textbf{High Der.}: higher-order time Derivatives; 
\textbf{O($N$)}: Linear Complexity; 
\textbf{Spatial}: Spatial frame representation;
\textbf{F/I}: Computes Forward/Inverse Dynamics.
}
\label{tab:method_comparison}
\end{table*}

{{This work fills the gap in \emph{recursive higher-order time differentiation of geometric inverse, forward and hybrid dynamics for floating-base systems with base configuration on $SE(3)$ and an attached open kinematic tree with prismatic and revolute joints (internal variables in $\mathbb{T}^{n_1}\times\mathbb{R}^{n_2}$), using spatial representation of twists}. Without loss of generality, the base may be actuated by a total propeller wrench applied at its CoM. In particular, the paper makes the following contributions:
\begin{itemize}
\item Algorithm \ref{alg:Dyn} (H-GRNE-FBS): Extending the {{existing}} Recursive Newton-Euler formulation of \cite{Coordinate-inva,spatialtwist} from fixed-base manipulators to compute higher-order time derivatives of geometric inverse dynamics for the aforementioned class of floating-base systems.
\item Algorithm \ref{alg:ForwardDyn} (H-GABI-FBS): Extending the {{existing}} articulated-body inertia algorithm of Featherstone \cite{featherstone2014rigid,park2018geometric} to compute higher-order time derivatives of geometric forward dynamics for the aforementioned class of floating-base systems.
\item Algorithm \ref{alg:HybridDyn} (H-GHYB-FBS): Extending the {{existing}} hybrid dynamics algorithm \cite{featherstone2014rigid,park2018geometric} to compute higher-order time derivatives of geometric hybrid dynamics for the aforementioned class of floating-base systems.
\item Proposition \ref{eomprop}: Systematically collecting the resulting recursions into closed-form Lagrangian equations of motion and its time derivatives.
\item Lemma \ref{corolilemma}: Stating that an admissible Coriolis matrix satisfying the standard passivity property can be easily constructed from the recursions for this class.
\item Lemma \ref{timedreivative_inv}: Providing a structural insight into higher-order geometric forward dynamics: the articulated body inertia \cite{featherstone2014rigid} remains unchanged across higher-order time derivatives of the dynamics, facilitating efficient time differentiation since this inertia can be computed once and transported to any order.
\item Extensive simulation and benchmarking, including comparison against the automatic differentiation method implemented in CasADi \cite{Andersson2018}, {{providing evidence of the efficiency of the proposed extensions and their favorable performance over automatic differentiation for computing higher-order time derivatives of the geometrically-exact dynamics in large tree-structured systems.}}
\end{itemize}}}

{{The advantages of this higher order formulation are inherited from those well-established in the Lie-group literature at the acceleration order (0th order), namely coordinate-invariant expressions, singularity-free, global and unique representation of $SE(3)$ base, and compact recursions~\cite{park2018geometric} even for higher order time derivatives (see Table \ref{tab:recursive_all}). At the same time, the resulting computations remain highly computationally tractable as the derivative order increases, as shown by the simulations in Sec.~7.}} Table \ref{tab:method_comparison} offers a non-exhaustive list contrasting some well-known methods against our proposed extensions.

The rest of the paper is organized as follows. Section~\ref{sec:preli} gives a brief summary of fundamental concepts used throughout the paper. Afterwards, the two passes of the proposed Higher Order Geometric Recursive Newton-Euler algorithm for Floating Base Systems (H-GRNE-FBS) are presented in Section~\ref{inverse} while the relation between such algorithm and the closed-form EoM is explained in Section~\ref{relation}. We devote Section~\ref{forward} for the introduction of the second algorithm: Higher Order Geometric Recursive Articulated Body Inertia algorithm for Floating Base Systems (H-GABI-FBS). {{Hybrid dynamics and their time derivatives (H-GHYB-FBS) are the subject of Section \ref{hybridsec}. An application of these algorithms to an aerial manipulator is explained in Section~\ref{example} with simulations and benchmarking}}. We conclude by some remarks on implementation and future work in Section~\ref{conc}.

\section{Preliminaries}
\label{sec:preli}
In this section we briefly recall the definitions of some basic concepts from Lie group and screw theory literature which are used extensively throughout the paper. The unfamiliar reader is encouraged to consult  \cite{murray2017mathematical,291887,bottema1990theoretical,brockett2005robotic,lynch2017modern,muller2018screw,muller2018screw2,park1995lie,featherstone2014rigid,sola2018micro} and references therein for a more formal introduction. Free rigid body motion in the 3D Euclidean space has a configuration manifold that can be characterized by the set of homogeneous transformation matrices, that also form a matrix Lie group, called the special Euclidean group, denoted by $SE(3)$ and defined as
\begin{equation}
    \begin{split}
    SE(3)=\Big\{(\mathbf{R},\boldsymbol{x})=&\left(\begin{smallmatrix}
        \mathbf{R} & \boldsymbol{x} \\ \mathbf{0}_{1\times3} & 1
    \end{smallmatrix}\right)\in  \mathrm{GL}(4,\mathbb{R})\,|\\&\, \boldsymbol{x}\in \mathbb{R}^3,\,\mathbf{R}^T\mathbf{R}=\mathbf{R}\mathbf{R}^T=\mathbf{I}_{3\times3}, \, \det(\mathbf{R})=+1\Big\}
\end{split}
\end{equation}
where $\mathrm{GL}(4,\mathbb{R})$ is the general linear group of the set of real invertible matrices in $\mathbb{R}^{4\times4}$. The associated Lie algebra is a vector space tangent to the group identity and equipped with a Lie bracket as an extra structure. It is denoted by $se(3)$ and given as
\begin{equation}
    se(3)=\Big\{\left(\begin{smallmatrix}
        [\boldsymbol{\omega}] & \boldsymbol{v} \\ \mathbf{0}_{1\times3} & 0
    \end{smallmatrix}\right)\in \mathbb{R}^{4\times4}\,|\,  [\boldsymbol{\omega}]+[\boldsymbol{\omega}]^T=\mathbf{0},(\boldsymbol{\omega},\boldsymbol{v})\in \mathbb{R}^6\Big\}
\end{equation}
where the operator $[\cdot]$ is a linear isomorphism from the vector space $\mathbb{R}^{p}$ into the corresponding Lie algebra. Since $se(3)$ is isomorphic to $\mathbb{R}^{6}$, $[\boldsymbol{g}]\in se(3)$ can take the vector form $\boldsymbol{g}=(\boldsymbol{\omega},\boldsymbol{v})^T \in \mathbb{R}^{6}$. The adjoint action of the Lie group $SE(3)$ on its Lie algebra $se(3)$ can be expressed in a matrix form, called the adjoint representation of the Lie group, which is modulated by the group element and denoted by $\boldsymbol{\mathrm{Ad}}_{\mathbf{C}}$. It is defined as 
\begin{equation}
\begin{split}
    \boldsymbol{\mathrm{Ad}}_{\mathbf{C}}:&\, SE(3)\times se(3) \rightarrow se(3),\,\boldsymbol{\mathrm{Ad}}_{\mathbf{C}}\,\\&\boldsymbol{g}=\mathbf{C}[\boldsymbol{g}]\mathbf{C}^{-1},\, [\boldsymbol{g}] \in se(3), \,\, \mathbf{C} \in SE(3)\\
    \boldsymbol{\mathrm{Ad}}_{\mathbf{C}}=&\left(\begin{matrix}
    \mathbf{R} & \mathbf{0} \\ [\boldsymbol{x}]\mathbf{R} & \mathbf{R}
    \end{matrix}\right), \text{ and for $SE(3)$ } \boldsymbol{\mathrm{Ad}}_{\mathbf{C}}^*:= \boldsymbol{\mathrm{Ad}}_{\mathbf{C}}^T
\end{split}
\end{equation}
where $\boldsymbol{\mathrm{Ad}}_{\mathbf{C}}^*$ is the co-adjoint matrix. The Lie bracketing of two $se(3)$ elements $\boldsymbol{g}_1$ and $\boldsymbol{g}_2$, denoted by $[\boldsymbol{g}_1,\boldsymbol{g}_2]$, introduces a non-commutative and non-associative skew-symmetric multiplication operation on the vector space, that is modulated by Lie algebra elements, as follows 
\begin{equation}
\begin{split}
    \boldsymbol{\mathrm{ad}}_{\boldsymbol{g}_1}:&\, se(3)\times se(3) \rightarrow se(3)\\
    [\boldsymbol{g}_1,\boldsymbol{g}_2]&=\boldsymbol{\mathrm{ad}}_{\boldsymbol{g}_1} \boldsymbol{g}_2, \quad \boldsymbol{g}_1,\boldsymbol{g}_2 \in se(3)  
\end{split}
\end{equation}
where the adjoint $\boldsymbol{\mathrm{ad}}_{\boldsymbol{g}}$ and the co-adjoint $\boldsymbol{\mathrm{ad}}_{\boldsymbol{g}}^*$ can then be put in this matrix form for the $SE(3)$ group
\begin{equation}
        \boldsymbol{\mathrm{ad}}_{\boldsymbol{g}}=\left(\begin{matrix}
    [\boldsymbol{\omega}] & \mathbf{0} \\ [\boldsymbol{v}] & [\boldsymbol{\omega}]
    \end{matrix}\right), \text{ and for $se(3)$ } \boldsymbol{\mathrm{ad}}_{\boldsymbol{g}}^*:= \boldsymbol{\mathrm{ad}}_{\boldsymbol{g}}^T
\end{equation}
If $\mathbf{C}(t)\in SE(3)$ changes over time $t$, then the time derivatives of the inverse of the adjoint map $\boldsymbol{\mathrm{Ad}}_{\mathbf{C}(t)}$ and the Lie algebra adjoint $\boldsymbol{\mathrm{ad}}_{\boldsymbol{V}}$ are related to the right-invariant (spatial) twist $\boldsymbol{V}^s \in se(3)$ by
\begin{equation}
\Dot{\boldsymbol{\mathrm{Ad}}}_{\mathbf{C}(t)}^{-1}=-\boldsymbol{\mathrm{Ad}}_{\mathbf{C}}^{-1}\boldsymbol{\mathrm{ad}}_{\boldsymbol{V}^s}, \text{ and } \Dot{\boldsymbol{\mathrm{ad}}}_{\boldsymbol{V}}= \boldsymbol{\mathrm{ad}}_{\Dot{\boldsymbol{V}}}
\label{Adjointderiv}
\end{equation}
The following properties of the adjoint maps $\boldsymbol{\mathrm{ad}}_{\boldsymbol{V}}$ and $\boldsymbol{\mathrm{Ad}}_{\mathbf{C}}$ are useful in the subsequent development
\begin{equation}
        \begin{split} 
           \boldsymbol{\mathrm{ad}}_{\boldsymbol{V}_1+\boldsymbol{V}_2}&= \boldsymbol{\mathrm{ad}}_{\boldsymbol{V}_1}+ \boldsymbol{\mathrm{ad}}_{\boldsymbol{V}_2},\\
           \boldsymbol{\mathrm{ad}}_{r\boldsymbol{V}}&= r\,\boldsymbol{\mathrm{ad}}_{\boldsymbol{V}}, \, \, r\in \mathbb{R}, \text{and } \boldsymbol{\mathrm{ad}}_{\boldsymbol{V}} {\boldsymbol{V}}=0, \\
           \boldsymbol{\mathrm{Ad}}_{\mathbf{C}_1 \mathbf{C}_2}&=\boldsymbol{\mathrm{Ad}}_{\mathbf{C}_1}\,\boldsymbol{\mathrm{Ad}}_{\mathbf{C}_2},\\
           \boldsymbol{\mathrm{Ad}}_{\mathbf{C}}^{-1}&=\boldsymbol{\mathrm{Ad}}_{\mathbf{C}^{-1}},\\
           \boldsymbol{\mathrm{Ad}}_{\mathbf{C}}[\boldsymbol{g}_1,\boldsymbol{g}_2]&=[\boldsymbol{\mathrm{Ad}}_{\mathbf{C}}\boldsymbol{g}_1,\boldsymbol{\mathrm{Ad}}_{\mathbf{C}}\boldsymbol{g}_2], \text{ \quad(Lie group homomorphism)}
           \\
           \boldsymbol{\mathrm{ad}}_{[\boldsymbol{g}_1,\boldsymbol{g}_2]}&=[\boldsymbol{\mathrm{ad}}_{\boldsymbol{g}_1},\boldsymbol{\mathrm{ad}}_{\boldsymbol{g}_2}] \text{ \quad(Lie algebra homomorphism)}
           \label{eqFormulas}
    \end{split}
\end{equation}
Mozzi–Chasles' theorem states that any rigid body motion can be fully described by a translation along a line, called the screw axis, followed by a rotation around an axis parallel to that line. This composes a screw displacement. Screws are elements $\in se(3)$ that are naturally suited to encode the kinematical constraints between two interconnected rigid bodies imposed by the connecting joints. In this work we express the screw coordinates relative to the inertial frame. The spatial representation of a joint screw is measured in the reference (home) configuration of the mechanism, i.e., when all joint variables are zeros, and expressed in the inertial frame \cite{Higherderivatives}. This spatial screw $[\boldsymbol{Y}] \in se(3)$ is constant and given by
\begin{equation} 
    \begin{split}
           \boldsymbol{Y}&= (\boldsymbol{e}, [\boldsymbol{p}]\boldsymbol{e})^T\in \mathbb{R}^6 \text{ for a revolute joint with parameter  $q \in \mathbb{S}^1$},\\
      \boldsymbol{Y}&= (\mathbf{0}, \boldsymbol{e})^T\in \mathbb{R}^6 \text{ for a prismatic joint with parameter $q \in \mathbb{R}$}
      \label{constantScrew}
    \end{split}
\end{equation}
where $\boldsymbol{e}$ and $\boldsymbol{p}$ are a unit axis in the direction of the joint's motion and the position of any point on that axis, respectively, both expressed in the inertial frame and measured when the robot is at home configuration.

A fundamental tool that we utilize later to express rigid body motions in terms of screws is the exponential map $\exp(\boldsymbol{g})$ defined as
\begin{equation}
     \exp(\boldsymbol{g}):\, \boldsymbol{g} \mapsto \mathbf{C}, \quad \boldsymbol{g} \in se(3),\, \mathbf{C} \in SE(3).
\end{equation}
This map can be computed in a closed form. It is a local diffeomorphism that associates elements from the Lie algebra $se(3)$ to corresponding elements in the Lie group $SE(3)$. For this group, this map is globally surjective and locally bijective in a neighborhood around the origin of $se(3)$.
\section{{{H-GRNE-FBS: Higher Order Inverse Dynamics}}}
\label{inverse}
\begin{figure}
    \centering
    \includegraphics[width=0.45\columnwidth]{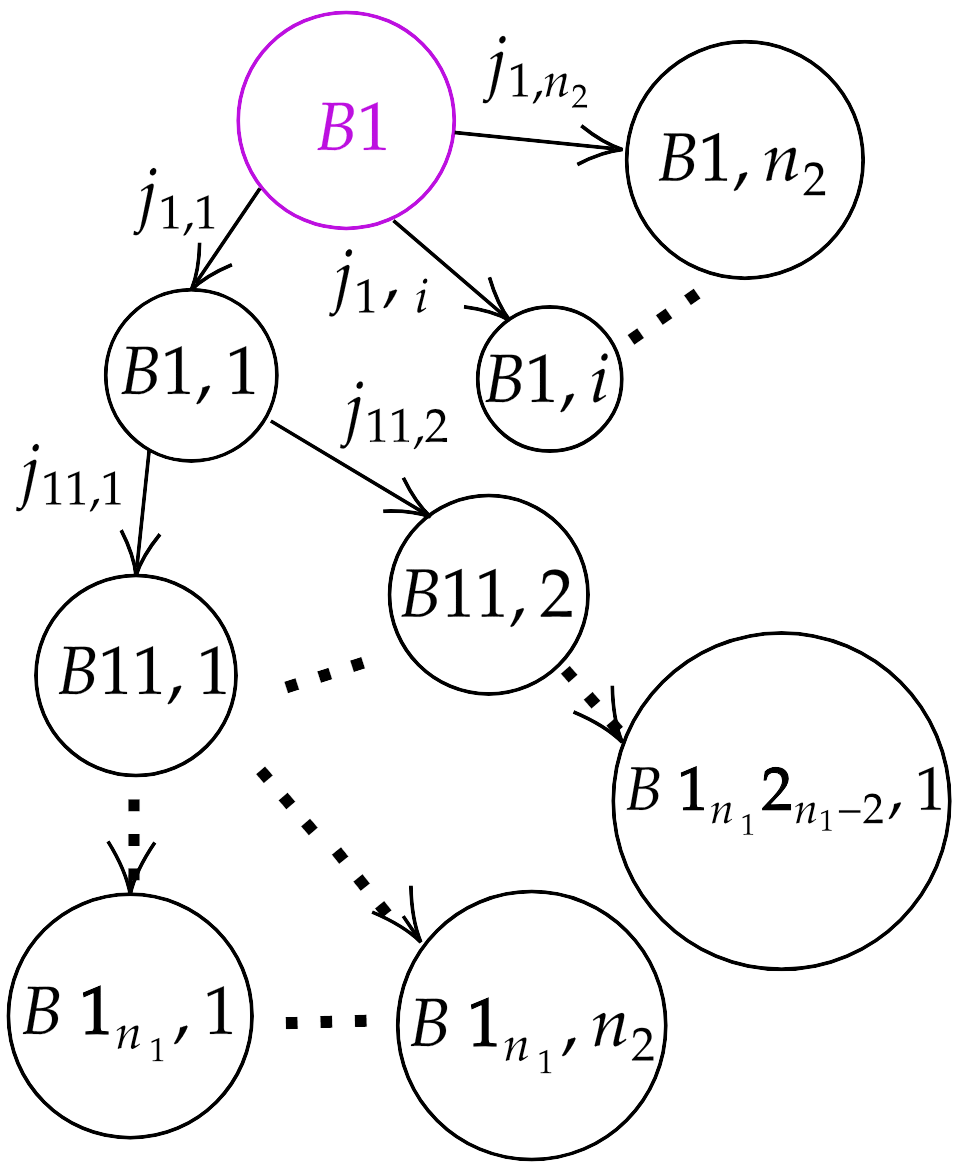}
    \caption{Directed graph with the nodes representing the rigid bodies while the edges are the joints $\in \mathbb{T}^{n_1}\times \mathbb{R}^{n_2}$, where $(n_1+n_2)$ is number of joints and B1 is the floating base. Notice that each body has only 1 parent.}
    \label{kinematicTree}
\end{figure}
We present {{higher order time derivatives of geometric Newton-Euler algorithm for inverse dynamics \cite{park1995lie} adapted to floating-base robots in a computationally-efficient and compact manner.}} This extends the $O(n)$ algorithm appeared in \cite{spatialtwist,Coordinate-inva} to yield the {{time derivatives of dynamics for trees with a floating base}}. In particular, we restrict our analysis to the case where the system has one main base evolving on $SE(3)$ that is connected to kinematic chains living in the $(n_1+n_2)$-dimensional manifold $\mathbb{T}^{n_1}\times \mathbb{R}^{n_2}$, with joint leading to body $j$ denoted by $q_j$, each of which has an open tree topology. In other words, as depicted with the directed graph in Fig.~\ref{kinematicTree}, the links attached to the floating-base form trees with no loop closure constraints (closed kinematical loops), meaning that each body has only one parent. 
The algorithm starts by a forward recursion to compute the {{higher order}} kinematics which is followed by a backward pass to obtain the {{corresponding order}} of inverse dynamics. {{Let $\mathbf{C}^{j}_{i}$, $\mathbf{A}^{j}_{i}$, $\mathbf{V}^0_i$, $(\boldsymbol{V}^{0}_{j})^{(i)}$ denote the relative configuration from frame $\mathcal{F}_i$ to $\mathcal{F}_j$, their relative constant home configuration, and spatial twist of $\mathcal{F}_i$ and its $i$-th component-wise time derivative, respectively. The total number of attached bodies is $N$ whereas the children and parent lists of $j$ are, respectively, $c\{j\}$ and $p\{j\}$. $(*)^{(r)}$ signifies $r$-th time derivative of $*$.}}

\subsection{Higher-order Forward Kinematics: Forward Recursion} 
The first recursion is traditionally called the forward recursion since the kinematics of each body on the tree depends on quantities that are deduced from its parent's parameters. Therefore, the kinematics can then be iteratively computed starting from the base body and passing once on each body by typically following a pre-order Depth-First Search (DFS) approach. We clarify {{initialization steps in Appendix \ref{append:initial}}} needed to run Algorithm (\ref{alg:kine}). Afterwards, the $r$th-order kinematics of body $i$, with the help of the product of exponential (PoE) \cite{brockett2005robotic} and the formulas \eqref{eqFormulas}, can be recursively computed by Algorithm (\ref{alg:kine}).

\begin{algorithm}
\caption{Higher-order Recursive Forward Kinematics for Floating-base Trees, extension of \cite{spatialtwist,Coordinate-inva}.}
\label{alg:kine}
\begin{algorithmic}
\STATE \textbf{Input:} $N,\, c\{j\},\, p\{j\}, \mathbf{A}^{0}_{1}, \mathbf{A}^{p\{j\}}_{j}, \mathbf{C}^{0}_{1}, \boldsymbol{V}^{0}_{1} \cdots (\boldsymbol{V}^{0}_{1})^{(r)}$,
\;$\boldsymbol{q} \cdots \dot{\boldsymbol{q}} \cdots  \boldsymbol{q}^{(r+1)}, \boldsymbol{Y}_1\,\, \boldsymbol{Y}_2 \cdots \boldsymbol{Y}_n$
\STATE \textbf{Call:} Forward($1$); \text{ \% pre-order DFS.}
\STATE \textbf{function} Forward($j$) 
\IF{$j>$1}
        \STATE \textbf{0-th order (position) kinematics:}
        \STATE $\mathbf{F}_{j} =\mathbf{F}_{p\{j\}} \exp([\boldsymbol{Y}_j]\,q_j),\text{    with $\mathbf{F}_1=\mathbf{C}^{0}_{1}$}$
        \STATE $\mathbf{A}^{0}_{j} = \mathbf{A}^{0}_{p\{j\}} \mathbf{A}^{p\{j\}}_j$ \text{    \% Computed once and used throughout.}
        \STATE $\mathbf{C}^{0}_{j} =  \mathbf{F}_j \mathbf{A}^{0}_j$
        \STATE $\boldsymbol{S}_j = \boldsymbol{\mathrm{Ad}}_{ {\mathbf{F}_j}}\, \boldsymbol{Y}_j$
        \STATE \textbf{1-st order (velocity) kinematics:}
        \STATE $\boldsymbol{V}^{0}_{j} = \boldsymbol{V}^{0}_{p\{j\}} + \boldsymbol{S}_j\,\dot{q}_j$
        \STATE $\dot{\boldsymbol{S}}_j = \boldsymbol{\mathrm{ad}}_{\boldsymbol{V}^{0}_{j}}\,{\boldsymbol{S}}_j$
        \STATE \textbf{2-nd order (acceleration) kinematics:}
        \STATE $\dot{\boldsymbol{V}}^{0}_{j} = \dot{\boldsymbol{V}}^{0}_{p\{j\}} + \boldsymbol{S}_j\,\ddot{q}_j + \dot{\boldsymbol{S}}_j\,\dot{q}_j$
        \STATE $\Ddot{\boldsymbol{S}}_j = \boldsymbol{\mathrm{ad}}_{\dot{\boldsymbol{V}}^{0}_{j}}\,{\boldsymbol{S}}_j+\boldsymbol{\mathrm{ad}}_{{\boldsymbol{V}}^{0}_{j}}\,\dot{\boldsymbol{S}}_j$
        \STATE \textbf{General $(r_1-1)$-th order kinematics}
        \STATE $\boldsymbol{S}_j^{(r_1)} = \sum_{r_2=0}^{r_1-1} \binom{r_1-1}{r_2} \boldsymbol{\mathrm{ad}}_{(\boldsymbol{V}^{0}_{j})^{(r_2)}} \boldsymbol{S}_j^{(r_1-1-r_2)}$
        \STATE $(\boldsymbol{V}^{0}_{j})^{(r_1)} = (\boldsymbol{V}^{0}_{p\{j\}})^{(r_1)} + \sum_{r_2=0}^{r_1} \binom{r_1}{r_2} \boldsymbol{S}_j^{(r_2)} q^{(r_1-r_2+1)}$
        \STATE \text{with the binomial coefficient} $\binom{r_1}{r_2} = \frac{r_1!}{r_2!(r_1-r_2)!}$
    \ENDIF
\FOR{$\forall i=1 : \text{length}(c\{$j$\})$} 
    \STATE Forward\big(c\{$j$\}($i$)\big) \% i-th child of body j
\ENDFOR
    \STATE \textbf{end function}
\end{algorithmic}
\end{algorithm}

\subsection{Higher-order Inverse Dynamics: Backward Recursion}
The second recursion starts from the last node in each branch of the tree and proceeds backward until it ends at the base, following a post-order DFS or a reversed breadth-first traversal. This is because the wrenches are transferred through the joints to the base. This recursion computing the higher-order geometric inverse dynamics can be implemented by Algorithm (\ref{alg:Dyn}). For the ease of exposition, explicit derivations are moved to Appendix \ref{append:Derivations}.  {{Let $\boldsymbol{W}^0_{j,\mathrm{app}}$, $\boldsymbol{W}^0_{1,\mathrm{prop}}$, $\boldsymbol{W}^0_{j,\mathrm{grav}}$, and $\boldsymbol{W}^0_{j}$ be the sum of applied external wrenches at body $j$, propeller wrench at the base 1, the gravitational wrench applied at body $j$, the constraint wrench that is transmitted through the joint $j$ from body $j$ to its parent (equals zero for the  $SE(3)$ floating base), respectively. $\tau_{j,\mathrm{ext}}$ is total non-conservative external torques on joint $j$ whereas $\tau_j$ is its actuation torque if present.}}

\begin{algorithm}
\caption{H-GRNE-FBS: Higher-order Geometric Recursive Inverse Dynamics for Floating-base Systems, extension of \cite{spatialtwist,Coordinate-inva}.}
\label{alg:Dyn}
\begin{algorithmic}
\STATE \textbf{Input:} Kinematics Algorithm  \ref{alg:kine}, $(\tau_{j,\mathrm{ext}})^{(r)}$ and $(\boldsymbol{W}^0_{j,\mathrm{app}})^{(r)}$.
\STATE \textbf{Output:} $\,(\boldsymbol{Q}_{j})^{(r)}$ \% generalized forces.
\STATE \textbf{Call:} Backward($1$); \text{    \% post-order DFS.}
\STATE \textbf{function} Backward($j$) 
    \FOR{$\forall i=1 : \text{length}(c\{$j$\})$} 
    \STATE Backward\big(c\{$j$\}($i$)\big) \text{    \% i-th child of body j}
    \ENDFOR
                \STATE $(\boldsymbol{W}^0_{j})^{(r)} =(\boldsymbol{\Pi}^0_{j})^{(r+1)}\eqref{Momu}- (\boldsymbol{W}^0_{j,\mathrm{app}})^{(r)}\eqref{extweriv} - (\boldsymbol{W}^0_{j,\mathrm{grav}})^{(r)} \eqref{gravweriv}+\sum_{i\in c\{j\}}(\boldsymbol{W}^0_{i})^{(r)}$
                \STATE
                \IF{$j=1$}
                \STATE {{$(\boldsymbol{Q}_{1})^{(r)}:=(\boldsymbol{W}^0_{1})^{(r)},$}} 
                \STATE{{ \text{hence}\,\,$(\boldsymbol{W}^0_{1,\mathrm{prop}})^{(r)}=(\boldsymbol{Q}_{1})^{(r)}$}}
                \ELSE
\STATE {{$(\boldsymbol{Q}_{j})^{(r)}:= \sum_{k=0}^{r} \binom{r}{k} (\boldsymbol{S}_{j}^T)^{(r-k)} (\boldsymbol{W}^0_{j})^{(k)}$,\, }}\STATE {{\text{hence} $(\tau_j)^{(r)}=(\boldsymbol{Q}_{j})^{(r)}-(\tau_{j,\mathrm{ext}})^{(r)}$}}
            \ENDIF
\STATE \textbf{end function}
\STATE
\end{algorithmic}
\end{algorithm}
\section{Relation to Lagrangian Equations of Motion}
\label{relation}
Here we give an example of how to put the dynamics equations obtained from Algorithm (\ref{alg:Dyn}) into the Lagrangian form of EoM. {{In Appendix \ref{prop1_proof}}}, we illustrate the derivations for the two cases {{laid out in Proposition \ref{eomprop}}}: the 0-th and 1-th order inverse dynamics. The closed-form EoM for higher orders follow the same procedure. Moreover, an admissible Coriolis matrix satisfying the passivity property is given {{Lemma \ref{corolilemma}}}. Following \cite{Coordinate-inva}, for the Algorithms \ref{alg:kine} and \ref{alg:Dyn} we can put all recursions in a matrix form. First, construct the two boolean matrices $\mathbf{G}_p$ and $\mathbf{G}_c$ such that the parents matrix $\mathbf{G}_p$ contains the identity matrix $\mathbf{I}_{6 \times 6}$ in all parent indexes of body $i$ to the base 1 and the zero matrix $\mathbf{0}_{6 \times 6}$ otherwise, while the children matrix $\mathbf{G}_c$ contains the identity matrix $\mathbf{I}_{6 \times 6}$ in all child indexes of body $i$ and the zero matrix otherwise. Both are obtained from the structure of the tree and are transpose of each other for this type of graphs.
\begin{table}
\centering
\caption{{{Definitions for Grouping Recursions}}}
\label{tab:defi}
\normalsize
\setlength{\tabcolsep}{3pt}
\renewcommand{\arraystretch}{1.15}
\begin{tabular}{@{}l@{\;}c@{\;}l@{}}
\toprule
$\mathbf{G}_c$ & $:=$ & $\mathbf{G}_p^T$\\
$\boldsymbol{V}$ & $:=$ & $(\,\boldsymbol{V}^0_1,\,\boldsymbol{V}^0_2,\,\boldsymbol{V}^0_3,\cdots,\,\boldsymbol{V}^0_N\,)^T$\\
$\boldsymbol{\mathrm{ad}}_{\boldsymbol{V}}$ & $:=$ & $\operatorname{blockdiag}(\,\boldsymbol{\mathrm{ad}}_{\boldsymbol{V}^0_1},\,\boldsymbol{\mathrm{ad}}_{\boldsymbol{V}^0_2},\,\boldsymbol{\mathrm{ad}}_{\boldsymbol{V}^0_3},\cdots,\,\boldsymbol{\mathrm{ad}}_{\boldsymbol{V}^0_N}\,)$\\
$\boldsymbol{\mathrm{Ad}}_{\mathbf{C}}$ & $:=$ & $\operatorname{blockdiag}(\,\boldsymbol{\mathrm{Ad}}_{\mathbf{C}^0_1},\,\boldsymbol{\mathrm{Ad}}_{\mathbf{C}^0_2},\,\boldsymbol{\mathrm{Ad}}_{\mathbf{C}^0_3},\cdots,\,\boldsymbol{\mathrm{Ad}}_{\mathbf{C}^0_N}\,)$\\
$\dot{\boldsymbol{V}}$ & $:=$ & $(\,\dot{\boldsymbol{V}}^0_1,\,\dot{\boldsymbol{V}}^0_2,\,\dot{\boldsymbol{V}}^0_3,\cdots,\,\dot{\boldsymbol{V}}^0_N\,)^T$\\
$\mathbf{S}$ & $:=$ & $\operatorname{blockdiag}(\mathbf{I}_{6\times6},\boldsymbol{S}_2,\boldsymbol{S}_3,\cdots,\boldsymbol{S}_N)$\\
$\dot{\mathbf{S}}$ & $:=$ & $\operatorname{blockdiag}(\mathbf{0}_{6\times6},\dot{\boldsymbol{S}}_2,\dot{\boldsymbol{S}}_3,\cdots,\dot{\boldsymbol{S}}_N)$\\
$\mathbf{M}$ & $:=$ & $\operatorname{blockdiag}(\mathbf{M}^0_1,\mathbf{M}^0_2,\mathbf{M}^0_3,\cdots,\mathbf{M}^0_N)$\\
$\boldsymbol{W}_{I,\mathrm{app}}^0$ & $:=$ & $(\,\boldsymbol{W}^0_{1,\mathrm{app}},\,\boldsymbol{W}^0_{2,\mathrm{app}},\,\boldsymbol{W}^0_{3,\mathrm{app}},\cdots,\,\boldsymbol{W}^0_{N,\mathrm{app}}\,)^T$\\
$\boldsymbol{W}_{I,\mathrm{app}}^b$ & $:=$ & $(\,\boldsymbol{W}^b_{1,\mathrm{app}},\,\boldsymbol{W}^b_{2,\mathrm{app}},\,\boldsymbol{W}^b_{3,\mathrm{app}},\cdots,\,\boldsymbol{W}^b_{N,\mathrm{app}}\,)^T$\\
$\boldsymbol{W}_{I,\mathrm{grav}}^0$ & $:=$ & $(\,\boldsymbol{W}^0_{1,\mathrm{grav}},\,\boldsymbol{W}^0_{2,\mathrm{grav}},\,\boldsymbol{W}^0_{3,\mathrm{grav}},\cdots,\,\boldsymbol{W}^0_{N,\mathrm{grav}}\,)^T$\\
$\boldsymbol{G}^0_I$ & $:=$ & $(\,\boldsymbol{G}^0,\,\boldsymbol{G}^0,\,\boldsymbol{G}^0,\cdots,\,\boldsymbol{G}^0\,)^T\in\mathbb{R}^{6N\times1}$\\
$\dot{\boldsymbol{\Pi}}_{I}$ & $:=$ & $(\,\dot{\boldsymbol{\Pi}}^0_{1},\,\dot{\boldsymbol{\Pi}}^0_{2},\,\dot{\boldsymbol{\Pi}}^0_{3},\cdots,\,\dot{\boldsymbol{\Pi}}^0_{N}\,)^T$\\
$\Bar{\boldsymbol{\tau}}$ & $:=$ & $(\,\mathbf{0}_{6\times1},\,\tau_2+\tau_{2,\mathrm{ext}},\,\tau_3+\tau_{3,\mathrm{ext}},\cdots,\,\tau_N+\tau_{N,\mathrm{ext}}\,)^T$\\
\midrule
${{\boldsymbol{\mathcal{Q}}}}$ & $:=$ & $\begin{bmatrix}\mathbf{C}_1^0& \boldsymbol{q}_2& \boldsymbol{q}_3& \cdots& \boldsymbol{q}_N\end{bmatrix}$\\
${{\boldsymbol{\nu}^{(k-1)}}}$ & $:=$ & $\begin{bmatrix}(\boldsymbol{V}_1^0)^{(k-1)}& \boldsymbol{q}_2^{(k)}&\boldsymbol{q}_3^{(k)}& \cdots& \boldsymbol{q}_N^{(k)}\end{bmatrix}$ $k\geq 1$\\
\bottomrule
\end{tabular}
\end{table}
Here $\boldsymbol{W}^0_{1,\mathrm{prop}}$ is included in $\boldsymbol{W}^0_{1,\mathrm{app}}$ for brevity. Otherwise, the first entry in $\Bar{\boldsymbol{\tau}}$ is replaced with $\boldsymbol{W}^0_{1,\mathrm{prop}}$. {{ In the following, these definitions in Table \ref{tab:defi} hold. Now, we give a result in Proposition \ref{eomprop} on transforming the higher order recursions to closed-form EoM for the considered class of floating base robots.}}

\begin{proposition}[{{Transformation into closed-form EoM}}]
\label{eomprop}
{{
 The $0$-th and $1$-st order geometric inverse dynamics computed from recursions of (Algorithm~\ref{alg:Dyn}) can be written in closed form as follows:

\begin{itemize}
    \item The 0-th order inverse dynamics
\begin{equation}
        \Bar{\boldsymbol{\tau}} =\Bar{\mathbf{M}}({{\boldsymbol{\mathcal{Q}}}}) {{\dot{\boldsymbol{\nu}}}}+\boldsymbol{h}({{\boldsymbol{\mathcal{Q}}}}, {{{\boldsymbol{\nu}}}})+\boldsymbol{g}({{\boldsymbol{\mathcal{Q}}}}) + \boldsymbol{\tau}_{\mathrm{ext}}({{\boldsymbol{\mathcal{Q}}}}).
\label{EoM}
\end{equation}
With the mass matrix $\Bar{\mathbf{M}}$, Coriolis and centrifugal forces $\boldsymbol{h}$, gravitational vector $\boldsymbol{g}$, and external torque $\boldsymbol{\tau}_{\mathrm{ext}}$ defined as
\begin{align}
     \Bar{\mathbf{M}}({{\boldsymbol{\mathcal{Q}}}})&=  \mathbf{S}^T \mathbf{G}_c \mathbf{M}  \mathbf{G}_p\, \mathbf{S}, \\
     \boldsymbol{h}({{\boldsymbol{\mathcal{Q}}}},{{\boldsymbol{\nu}}})&= \mathbf{S}^T \mathbf{G}_c \mathbf{M} \mathbf{G}_p \dot{\mathbf{S}}\, {{\boldsymbol{\nu}}} - \mathbf{S}^T \mathbf{G}_c \boldsymbol{\mathrm{ad}}^T_{\mathbf{G}_p \mathbf{S} {{\boldsymbol{\nu}}}} \mathbf{M} \mathbf{G}_p\, \mathbf{S}\, {{\boldsymbol{\nu}}}, \nonumber \\
     \boldsymbol{g}({{\boldsymbol{\mathcal{Q}}}})&= - \mathbf{S}^T \mathbf{G}_c \,\boldsymbol{W}_{I,\mathrm{grav}}^0=- \mathbf{S}^T \mathbf{G}_c \mathbf{M} \boldsymbol{G}^0_I, \\
     \boldsymbol{\tau}_{\mathrm{ext}}({{\boldsymbol{\mathcal{Q}}}})&= - \mathbf{S}^T \mathbf{G}_c \,\boldsymbol{W}_{I,\mathrm{app}}^0=  - \mathbf{S}^T \mathbf{G}_c \boldsymbol{\mathrm{Ad}}^T_{\mathbf{C}^{-1}} \boldsymbol{W}_{I,\mathrm{app}}^b. \label{eqsprop}
\end{align} 
\item The 1st order inverse dynamics
\begin{equation}
       \Bar{\mathbf{M}}({{\boldsymbol{\mathcal{Q}}}}){{\ddot{\boldsymbol{\nu}}}}+\dot{\Bar{\boldsymbol{h}}}({{\boldsymbol{\mathcal{Q}}}},{{\boldsymbol{\nu}}}, {{\dot{\boldsymbol{\nu}}}})+\dot{\boldsymbol{g}}({{\boldsymbol{\mathcal{Q}}}},{{\boldsymbol{\nu}}})+\dot{\boldsymbol{\tau}}_{\mathrm{ext}}({{\boldsymbol{\mathcal{Q}}}},{{\boldsymbol{\nu}}})=\dot{\Bar{\boldsymbol{\tau}}}.
   \label{1_st_EoM}
\end{equation}
where
\begin{align}
     \Bar{\mathbf{M}}({{\boldsymbol{\mathcal{Q}}}})&=  \mathbf{S}^T \mathbf{G}_c \mathbf{M}  \mathbf{G}_p \mathbf{S},\\
     \dot{\Bar{\boldsymbol{h}}}({{\boldsymbol{\mathcal{Q}}}},{{\boldsymbol{\nu}}}, {{\dot{\boldsymbol{\nu}}}})&= \mathbf{S}^T\boldsymbol{\mathrm{ad}}^T_{\mathbf{G}_p \mathbf{S} {{\boldsymbol{\nu}}}} \big(\mathbf{G}_c \mathbf{M}  \mathbf{G}_p \mathbf{S} {{\dot{\boldsymbol{\nu}}}} + \mathbf{G}_c \mathbf{M} \mathbf{G}_p \boldsymbol{\mathrm{ad}}_{\mathbf{G}_p \mathbf{S} {{\boldsymbol{\nu}}}}  \mathbf{S} {{\boldsymbol{\nu}}} \nonumber \\
        & - \mathbf{G}_c \boldsymbol{\mathrm{ad}}^T_{\mathbf{G}_p \mathbf{S} {{\boldsymbol{\nu}}}} \mathbf{M} \mathbf{G}_p \mathbf{S} {{\boldsymbol{\nu}}} \big)+ \mathbf{S}^T\Tilde{\Ddot{\boldsymbol{\Pi}}}({{\boldsymbol{\mathcal{Q}}}},{{\boldsymbol{\nu}}},{{\dot{\boldsymbol{\nu}}}}),\\
     \dot{\boldsymbol{g}}({{\boldsymbol{\mathcal{Q}}}},{{\boldsymbol{\nu}}})&= - \mathbf{S}^T\boldsymbol{\mathrm{ad}}^T_{\mathbf{G}_p \mathbf{S} {{\boldsymbol{\nu}}}}  \mathbf{G}_c \mathbf{M} \boldsymbol{G}^0_I \\
        &+\mathbf{S}^T \mathbf{G}_c(\mathbf{M} \boldsymbol{\mathrm{ad}}_{\mathbf{G}_p \mathbf{S} {{\boldsymbol{\nu}}}}+\boldsymbol{\mathrm{ad}}^T_{\mathbf{G}_p \mathbf{S} {{\boldsymbol{\nu}}}} \mathbf{M}) \boldsymbol{G}^0_I,\\
     \dot{\boldsymbol{\tau}}_{\mathrm{ext}}({{\boldsymbol{\mathcal{Q}}}},{{\boldsymbol{\nu}}})&= \mathbf{S}^T(\mathbf{G}_c\boldsymbol{\mathrm{ad}}^T_{\mathbf{G}_p \mathbf{S} {{\boldsymbol{\nu}}}} \boldsymbol{\mathrm{Ad}}^T_{\mathbf{C}^{-1}} - \boldsymbol{\mathrm{ad}}^T_{\mathbf{G}_p \mathbf{S} {{\boldsymbol{\nu}}}} \mathbf{G}_c \boldsymbol{\mathrm{Ad}}^T_{\mathbf{C}^{-1}}) \boldsymbol{W}_{I,\mathrm{app}}^b \nonumber \\
     &-\mathbf{S}^T \mathbf{G}_c\boldsymbol{\mathrm{Ad}}^T_{\mathbf{C}^{-1}} \dot{\boldsymbol{W}}_{I,\mathrm{app}}^b. \label{eqprop1}
\end{align} 
\end{itemize}
}}
\end{proposition}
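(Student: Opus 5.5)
The plan is to stack the recursions of Algorithms~\ref{alg:kine} and \ref{alg:Dyn} over all bodies into block vectors, using $\mathbf{G}_p$ and $\mathbf{G}_c=\mathbf{G}_p^T$ to encode the tree, and then eliminate the intermediate quantities.

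\textbf{Kinematics.} With $\mathbf{S}$ having the identity in its base block, the forward velocity recursion $\boldsymbol{V}^0_j=\boldsymbol{V}^0_{p\{j\}}+\boldsymbol{S}_j\dot q_j$ and the base initialization give
\[
\boldsymbol{V}=\mathbf{G}_p\mathbf{S}\boldsymbol{\nu}.
\]
The $\mathbf{G}_p$ summation accumulates each joint twist over all ancestors up to the base. Differentiating, and using that $\mathbf{G}_p$ is constant, gives
\[
\dot{\boldsymbol{V}}=\mathbf{G}_p(\mathbf{S}\dot{\boldsymbol{\nu}}+\dot{\mathbf{S}}\boldsymbol{\nu}).
\]
Two facts enter here. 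First, $\dot{\boldsymbol{S}}_j=\boldsymbol{\mathrm{ad}}_{\boldsymbol{V}^0_j}\boldsymbol{S}_j$, from \eqref{Adjointderiv} applied to $\boldsymbol{S}_j=\boldsymbol{\mathrm{Ad}}_{\mathbf{F}_j}\boldsymbol{Y}_j$. Second, the base block of $\dot{\mathbf{S}}$ is zero, because the base twist is already the spatial twist. Hence $\dot{\mathbf{S}}=\boldsymbol{\mathrm{ad}}_{\mathbf{G}_p\mathbf{S}\boldsymbol{\nu}}\mathbf{S}$ with the base block set to zero. I would check carefully that $\boldsymbol{\mathrm{ad}}_{\boldsymbol{V}^0_1}\mathbf{I}$ does not spuriously reappear, which is why the $\boldsymbol{\mathrm{ad}}$ form is only used where the $\dot{\mathbf{S}}$ block is nonzero.

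\textbf{Dynamics, 0-th order.} For each body the spatial momentum is $\boldsymbol{\Pi}^0_j=\mathbf{M}^0_j\boldsymbol{V}^0_j$, with $\mathbf{M}^0_j=\boldsymbol{\mathrm{Ad}}^T_{(\mathbf{C}^0_j)^{-1}}\mathbf{M}^b_j\boldsymbol{\mathrm{Ad}}_{(\mathbf{C}^0_j)^{-1}}$. Using \eqref{Adjointderiv},
\[
\dot{\mathbf{M}}^0_j=-\boldsymbol{\mathrm{ad}}^T_{\boldsymbol{V}^0_j}\mathbf{M}^0_j-\mathbf{M}^0_j\boldsymbol{\mathrm{ad}}_{\boldsymbol{V}^0_j},
\]
and therefore $\dot{\boldsymbol{\Pi}}^0_j=\mathbf{M}^0_j\dot{\boldsymbol{V}}^0_j-\boldsymbol{\mathrm{ad}}^T_{\boldsymbol{V}^0_j}\mathbf{M}^0_j\boldsymbol{V}^0_j$, since $\boldsymbol{\mathrm{ad}}_{\boldsymbol{V}}\boldsymbol{V}=0$.

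The backward wrench recursion, summed over children, is $\boldsymbol{W}=\mathbf{G}_c(\dot{\boldsymbol{\Pi}}_I-\boldsymbol{W}^0_{I,\mathrm{app}}-\boldsymbol{W}^0_{I,\mathrm{grav}})$, because $\mathbf{G}_c$ collects all descendants. Projecting with $\mathbf{S}^T$ gives $\Bar{\boldsymbol{\tau}}$; the base block is the identity, so the base row returns the total wrench. Substituting the block kinematics yields $\Bar{\mathbf{M}}$ and $\boldsymbol{h}$ directly. The gravity term uses $\boldsymbol{W}^0_{j,\mathrm{grav}}=\mathbf{M}^0_j\boldsymbol{G}^0$, and the applied-wrench term uses $\boldsymbol{W}^0=\boldsymbol{\mathrm{Ad}}^T_{\mathbf{C}^{-1}}\boldsymbol{W}^b$. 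Sign conventions on $\bar{\boldsymbol{\tau}}$ versus $\tau_{\mathrm{ext}}$ need to be matched to the algorithm's last line.

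\textbf{Dynamics, 1-st order.} Differentiate the 0-th order identity in its factored form $\Bar{\boldsymbol{\tau}}=\mathbf{S}^T\boldsymbol{W}$, using the Leibniz rule as in Algorithm~\ref{alg:Dyn}, so that $\dot{\Bar{\boldsymbol{\tau}}}=\dot{\mathbf{S}}^T\boldsymbol{W}+\mathbf{S}^T\dot{\boldsymbol{W}}$. Replace $\dot{\mathbf{S}}^T$ by $\mathbf{S}^T\boldsymbol{\mathrm{ad}}^T_{\mathbf{G}_p\mathbf{S}\boldsymbol{\nu}}$; this block-diagonal factor appears in front of the bracket in $\dot{\Bar{\boldsymbol{h}}}$, $\dot{\boldsymbol g}$ and $\dot{\boldsymbol\tau}_{\mathrm{ext}}$. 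Then insert the 0-th order expression for $\boldsymbol{W}$, which gives the three terms inside the bracket.

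In $\mathbf{S}^T\dot{\boldsymbol{W}}=\mathbf{S}^T\mathbf{G}_c(\ddot{\boldsymbol{\Pi}}_I-\dots)$:
\begin{itemize}
\item Isolate the part of $\ddot{\boldsymbol{\Pi}}_I$ multiplying $\ddot{\boldsymbol\nu}$, namely $\mathbf{G}_c\mathbf{M}\mathbf{G}_p\mathbf{S}\ddot{\boldsymbol\nu}$. This gives the unchanged $\Bar{\mathbf{M}}$.
\item Collect the rest into $\tilde{\ddot{\boldsymbol{\Pi}}}$, defined as $\mathbf{G}_c$ times the remainder.
\item Differentiate the gravity and applied-wrench terms using the same $\dot{\mathbf{M}}^0_j$ and $\dot{\boldsymbol{\mathrm{Ad}}}^T_{\mathbf{C}^{-1}}$ formulas.
\end{itemize}
The commutation between block-diagonal $\boldsymbol{\mathrm{ad}}^T$ and $\mathbf{G}_c$ explains the difference of the two orderings appearing in $\dot{\boldsymbol\tau}_{\mathrm{ext}}$.

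\textbf{Main obstacle.} I expect the hard part to be bookkeeping rather than conceptual: keeping track of which $\boldsymbol{\mathrm{ad}}$ operator acts before versus after $\mathbf{G}_c$. In particular, $\boldsymbol{\mathrm{ad}}_{\boldsymbol V^0_j}$ must be evaluated at the body where the wrench originates, not at the joint where it is projected, and the base-block exceptions must be handled consistently. I would verify the final formulas by checking a single-body chain and a two-body chain explicitly.
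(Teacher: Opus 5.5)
Your proposal follows essentially the same route as the paper's proof. You stack the recursions with $\mathbf{G}_p$ and $\mathbf{G}_c=\mathbf{G}_p^T$ to get $\boldsymbol{V}=\mathbf{G}_p\mathbf{S}\boldsymbol{\nu}$, $\dot{\boldsymbol{V}}$ and $\mathbf{G}_c\dot{\boldsymbol{\Pi}}_I=\mathbf{G}_c(\mathbf{M}\dot{\boldsymbol{V}}-\boldsymbol{\mathrm{ad}}^T_{\boldsymbol{V}}\mathbf{M}\boldsymbol{V})$, and project with $\mathbf{S}^T$. At first order you then expand $\dot{\mathbf{S}}^T\boldsymbol{W}+\mathbf{S}^T\dot{\boldsymbol{W}}$, replace $\dot{\mathbf{S}}^T$ by $\mathbf{S}^T\boldsymbol{\mathrm{ad}}^T_{\mathbf{G}_p\mathbf{S}\boldsymbol{\nu}}$, split off $\bar{\mathbf{M}}\ddot{\boldsymbol{\nu}}$, and lump the rest of $\ddot{\boldsymbol{\Pi}}$ into $\tilde{\ddot{\boldsymbol{\Pi}}}$.

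On the base-block worry you raise yourself, the two orders behave differently:
\begin{itemize}
\item At 0-th order the substitution $\dot{\mathbf{S}}\boldsymbol{\nu}=\boldsymbol{\mathrm{ad}}_{\mathbf{G}_p\mathbf{S}\boldsymbol{\nu}}\mathbf{S}\boldsymbol{\nu}$ is exact, because the base block only contributes $\boldsymbol{\mathrm{ad}}_{\boldsymbol{V}^0_1}\boldsymbol{V}^0_1=\mathbf{0}$.
\item At first order the replacement is not an identity: $\mathbf{S}^T\boldsymbol{\mathrm{ad}}^T_{\mathbf{G}_p\mathbf{S}\boldsymbol{\nu}}\boldsymbol{W}$ differs from $\dot{\mathbf{S}}^T\boldsymbol{W}$ by $\boldsymbol{\mathrm{ad}}^T_{\boldsymbol{V}^0_1}\boldsymbol{W}^0_1$ in the base row.
\end{itemize}
This extra term vanishes along solutions only when the propeller wrench is lumped into $\boldsymbol{W}^0_{1,\mathrm{app}}$, so that $\boldsymbol{W}^0_1=\bar{\boldsymbol{\tau}}_1=\mathbf{0}$. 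Otherwise the base entry of $\dot{\bar{\boldsymbol{\tau}}}$ must be read as $\boldsymbol{\mathrm{ad}}^T_{\boldsymbol{V}^0_1}\boldsymbol{W}^0_{1,\mathrm{prop}}+\dot{\boldsymbol{W}}^0_{1,\mathrm{prop}}$, i.e.\ the body-frame propeller-wrench derivative mapped back to the spatial frame, rather than as $\dot{\boldsymbol{W}}^0_{1,\mathrm{prop}}$. The paper makes the same replacement without comment, so your derivation reproduces the stated formulas exactly as the paper's does, including this caveat.
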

\begin{proof}
{{See Appendix \ref{prop1_proof}.}}
\end{proof}
{{\begin{remark}
The notation used in Table \ref{tab:defi} follows the right-invariant (spatial)
representation of the floating-base velocity $V_1^0\in se(3)$. Hence, the first component of the generalized velocity $\boldsymbol\nu$ is not the matrix derivative $\dot C_1^0\in T_{C_1^0}SE(3)$.
If tangent-space $r$-th derivative $(C_1^0)^{(r)}$ is needed, it can be retrieved using Leibniz' rule from
\begin{equation}
    (C_1^0)^{(r)}
=
\sum_{k=0}^{r-1}
\binom{r-1}{k}
[(V_1^0)^{(k)}](C_1^0)^{(r-1-k)},
\qquad r\geq 1
\end{equation}
\end{remark}}}
{{C\&C forces $\boldsymbol{h}$ in the 0-th order EoM can be compactly factorized with this admissible Coriolis matrix $\mathbf{C}$ given in the next Lemma \ref{corolilemma}, which can also be computed recursively.}}
\begin{lemma}[{{Admissible Coriolis Matrix}}]
\label{corolilemma}
    This matrix $\mathbf{C}$ 
   {{\begin{equation}
    \mathbf{C}({{\boldsymbol{\mathcal{Q}}}},{{\boldsymbol{\nu}}})= \mathbf{S}^T \mathbf{G}_c \mathbf{M} \mathbf{G}_p \dot{\mathbf{S}} - \mathbf{S}^T \mathbf{G}_c \boldsymbol{\mathrm{ad}}^T_{\mathbf{G}_p \mathbf{S} {{\boldsymbol{\nu}}}} \mathbf{M} \mathbf{G}_p \mathbf{S}.
    \label{Coroli}
\end{equation}}}
    satisfies the skew-symmetric property of the presented class of the Lagrangian systems $\tfrac{1}{2}\dot{\Bar{\mathbf{M}}}-\mathbf{C}$.
\end{lemma}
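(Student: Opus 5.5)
The plan is to compute $\dot{\Bar{\mathbf{M}}}$ directly from $\Bar{\mathbf{M}}=\mathbf{S}^T\mathbf{G}_c\mathbf{M}\mathbf{G}_p\mathbf{S}$ and show that $\dot{\Bar{\mathbf{M}}}=\mathbf{C}+\mathbf{C}^T$. Once this holds, $\dot{\Bar{\mathbf{M}}}-2\mathbf{C}=\mathbf{C}^T-\mathbf{C}$ is skew-symmetric, and so is $\tfrac12\dot{\Bar{\mathbf{M}}}-\mathbf{C}$. The factorization $\boldsymbol{h}=\mathbf{C}\boldsymbol{\nu}$ follows immediately by comparing \eqref{Coroli} with the expression for $\boldsymbol{h}$ in Proposition~\ref{eomprop}, so admissibility reduces to this identity.

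\textbf{Step 1 (time derivatives of the blocks).} The matrices $\mathbf{G}_c$ and $\mathbf{G}_p$ are constant. For $\mathbf{S}$, the first block is $\mathbf{I}$, with derivative $\mathbf{0}$, and the other blocks satisfy $\dot{\boldsymbol{S}}_j=\boldsymbol{\mathrm{ad}}_{\boldsymbol{V}^0_j}\boldsymbol{S}_j$ (Algorithm~\ref{alg:kine}); this gives $\dot{\mathbf{S}}$ as defined in Table~\ref{tab:defi}. For the spatial inertias $\mathbf{M}^0_j=\boldsymbol{\mathrm{Ad}}^T_{(\mathbf{C}^0_j)^{-1}}\mathbf{M}^b_j\boldsymbol{\mathrm{Ad}}_{(\mathbf{C}^0_j)^{-1}}$, I use \eqref{Adjointderiv} to obtain
\[
\dot{\mathbf{M}}^0_j=-\boldsymbol{\mathrm{ad}}^T_{\boldsymbol{V}^0_j}\mathbf{M}^0_j-\mathbf{M}^0_j\boldsymbol{\mathrm{ad}}_{\boldsymbol{V}^0_j}.
\]
In stacked form, I note that $\mathbf{G}_p\mathbf{S}\boldsymbol{\nu}=\boldsymbol{V}$: the velocity recursion $\boldsymbol{V}^0_j=\boldsymbol{V}^0_{p\{j\}}+\boldsymbol{S}_j\dot q_j$ unrolls to a sum over ancestors, with the base entry giving $\boldsymbol{V}^0_1$. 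Hence
\[
\dot{\mathbf{M}}=-\boldsymbol{\mathrm{ad}}^T_{\mathbf{G}_p\mathbf{S}\boldsymbol{\nu}}\mathbf{M}-\mathbf{M}\boldsymbol{\mathrm{ad}}_{\mathbf{G}_p\mathbf{S}\boldsymbol{\nu}}.
\]

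\textbf{Step 2 (product rule).} Differentiating $\Bar{\mathbf{M}}$ gives four terms:
\[
\dot{\mathbf{S}}^T\mathbf{G}_c\mathbf{M}\mathbf{G}_p\mathbf{S}+\mathbf{S}^T\mathbf{G}_c\mathbf{M}\mathbf{G}_p\dot{\mathbf{S}}-\mathbf{S}^T\mathbf{G}_c\boldsymbol{\mathrm{ad}}^T_{\boldsymbol{V}}\mathbf{M}\mathbf{G}_p\mathbf{S}-\mathbf{S}^T\mathbf{G}_c\mathbf{M}\boldsymbol{\mathrm{ad}}_{\boldsymbol{V}}\mathbf{G}_p\mathbf{S}.
\]
The second and third terms are exactly $\mathbf{C}$. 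Using $\mathbf{G}_c=\mathbf{G}_p^T$ and the symmetry of $\mathbf{M}$, the transpose of the first term is the second term, and the transpose of the fourth term is $-\mathbf{S}^T\mathbf{G}_c\boldsymbol{\mathrm{ad}}^T_{\boldsymbol{V}}\mathbf{M}\mathbf{G}_p\mathbf{S}$. The crucial point is that this uses $\mathbf{G}_p$ on the left of $\boldsymbol{\mathrm{ad}}_{\boldsymbol{V}}$, whereas the corresponding term in $\mathbf{C}$ has $\mathbf{G}_c$ to the left of $\boldsymbol{\mathrm{ad}}^T$. So the first and fourth terms equal $\mathbf{C}^T$ only up to a commutation of $\mathbf{G}_p$ with the block-diagonal $\boldsymbol{\mathrm{ad}}_{\boldsymbol{V}}$.

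\textbf{Step 3 (main obstacle: this commutation).} Since $\mathbf{G}_p$ mixes ancestor blocks, $\boldsymbol{\mathrm{ad}}_{\boldsymbol{V}}\mathbf{G}_p\neq\mathbf{G}_p\boldsymbol{\mathrm{ad}}_{\boldsymbol{V}}$ in general, so the naive cancellation fails. My plan is to show the discrepancy is absorbed by the $\dot{\mathbf{S}}$ terms, exploiting that in the spatial representation $\dot{\boldsymbol{S}}_j=\boldsymbol{\mathrm{ad}}_{\boldsymbol{V}^0_j}\boldsymbol{S}_j$ uses the twist of body $j$ itself. Concretely, I will check the identity entrywise on the $(i,k)$ block. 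The block of $\dot{\Bar{\mathbf{M}}}$ is a sum over bodies $l$ that are descendants of both $i$ and $k$ of terms $\tfrac{d}{dt}(\boldsymbol{S}_i^T\mathbf{M}^0_l\boldsymbol{S}_k)$. Each such term expands as
\[
\boldsymbol{S}_i^T\big(\boldsymbol{\mathrm{ad}}^T_{\boldsymbol{V}_i}-\boldsymbol{\mathrm{ad}}^T_{\boldsymbol{V}_l}\big)\mathbf{M}^0_l\boldsymbol{S}_k+\boldsymbol{S}_i^T\mathbf{M}^0_l\big(\boldsymbol{\mathrm{ad}}_{\boldsymbol{V}_k}-\boldsymbol{\mathrm{ad}}_{\boldsymbol{V}_l}\big)\boldsymbol{S}_k,
\]
and I will match this against the corresponding blocks of $\mathbf{C}+\mathbf{C}^T$. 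If the exact equality $\dot{\Bar{\mathbf{M}}}=\mathbf{C}+\mathbf{C}^T$ turns out not to hold blockwise, the fallback is to prove the weaker, still sufficient, statement $\boldsymbol{x}^T(\tfrac12\dot{\Bar{\mathbf{M}}}-\mathbf{C})\boldsymbol{x}=0$ for all $\boldsymbol{x}$. For this I would use the scalar identity $\boldsymbol{a}^T\boldsymbol{\mathrm{ad}}^T_{\boldsymbol{V}}\mathbf{M}\boldsymbol{a}=-\boldsymbol{a}^T\mathbf{M}\boldsymbol{\mathrm{ad}}_{\boldsymbol{V}}\boldsymbol{a}$ only after showing the offending terms rearrange to the form $\boldsymbol{\mathrm{ad}}_{\boldsymbol{V}_l}$ acting on $\mathbf{G}_p\mathbf{S}\boldsymbol{x}$. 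This rearrangement is the step I expect to require the most care.
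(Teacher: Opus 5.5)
Your Steps 1 and 2 are the paper's own argument, and they already prove the lemma. You differentiate $\Bar{\mathbf{M}}$ using $\dot{\mathbf{M}}=-\boldsymbol{\mathrm{ad}}^T_{\boldsymbol{V}}\mathbf{M}-\mathbf{M}\boldsymbol{\mathrm{ad}}_{\boldsymbol{V}}$ and pair terms via $\mathbf{G}_c=\mathbf{G}_p^T$ and $\mathbf{M}^T=\mathbf{M}$. The paper splits $\tfrac12\dot{\Bar{\mathbf{M}}}-\mathbf{C}$ into $\tfrac12(\mathbf{X}_1-\mathbf{X}_1^T)+\tfrac12(\mathbf{X}_2-\mathbf{X}_2^T)$ with $\mathbf{X}_1=\dot{\mathbf{S}}^T\mathbf{G}_c\mathbf{M}\mathbf{G}_p\mathbf{S}$ and $\mathbf{X}_2=\mathbf{S}^T\mathbf{G}_c\boldsymbol{\mathrm{ad}}^T_{\boldsymbol{V}}\mathbf{M}\mathbf{G}_p\mathbf{S}$; these are its unnamed $\mathbf{A}_1,\mathbf{A}_2$. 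That is the same as your identity $\dot{\Bar{\mathbf{M}}}=\mathbf{C}+\mathbf{C}^T$. The only problem is that you misread your own Step 2 and then announce an obstacle in Step 3 that does not exist. As written, the proposal therefore stops short of a conclusion it has already reached.

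Transposing $\mathbf{C}$ with $\mathbf{G}_p^T=\mathbf{G}_c$, $\mathbf{G}_c^T=\mathbf{G}_p$ and $\mathbf{M}^T=\mathbf{M}$ gives
$\mathbf{C}^T=\dot{\mathbf{S}}^T\mathbf{G}_c\mathbf{M}\mathbf{G}_p\mathbf{S}-\mathbf{S}^T\mathbf{G}_c\mathbf{M}\boldsymbol{\mathrm{ad}}_{\boldsymbol{V}}\mathbf{G}_p\mathbf{S}$.
This is exactly your first plus fourth term. You even computed that the transpose of the fourth term is $-\mathbf{S}^T\mathbf{G}_c\boldsymbol{\mathrm{ad}}^T_{\boldsymbol{V}}\mathbf{M}\mathbf{G}_p\mathbf{S}$, which is literally the second term of $\mathbf{C}$, with $\mathbf{G}_c$ on the left just as in $\mathbf{C}$. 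In both $\dot{\mathbf{M}}$ and $\mathbf{C}$, the block-diagonal factor $\boldsymbol{\mathrm{ad}}_{\boldsymbol{V}}$ sits between $\mathbf{G}_c$ and $\mathbf{G}_p$, next to $\mathbf{M}$, and transposition keeps it there. So $\mathbf{G}_p$ never has to be commuted past $\boldsymbol{\mathrm{ad}}_{\boldsymbol{V}}$, and the spatial structure of $\dot{\boldsymbol{S}}_j$ plays no role. It follows that $\dot{\Bar{\mathbf{M}}}=\mathbf{C}+\mathbf{C}^T$ and $\tfrac12\dot{\Bar{\mathbf{M}}}-\mathbf{C}=\tfrac12(\mathbf{C}^T-\mathbf{C})$, which is skew-symmetric.

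Your planned blockwise check would only confirm this. Set $\boldsymbol{S}_1:=\mathbf{I}$ and $\dot{\boldsymbol{S}}_1:=\mathbf{0}$. Then the $(i,k)$ blocks of both $\dot{\Bar{\mathbf{M}}}$ and $\mathbf{C}+\mathbf{C}^T$ equal
$\sum_l\big(\dot{\boldsymbol{S}}_i^T\mathbf{M}^0_l\boldsymbol{S}_k+\boldsymbol{S}_i^T\mathbf{M}^0_l\dot{\boldsymbol{S}}_k-\boldsymbol{S}_i^T(\boldsymbol{\mathrm{ad}}^T_{\boldsymbol{V}^0_l}\mathbf{M}^0_l+\mathbf{M}^0_l\boldsymbol{\mathrm{ad}}_{\boldsymbol{V}^0_l})\boldsymbol{S}_k\big)$,
where the sum runs over bodies $l$ descending from (or equal to) both $i$ and $k$.

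Your fallback is also not a weaker statement. Since $\dot{\Bar{\mathbf{M}}}$ is symmetric, $\boldsymbol{x}^T(\tfrac12\dot{\Bar{\mathbf{M}}}-\mathbf{C})\boldsymbol{x}=0$ for all $\boldsymbol{x}$ is equivalent to $\dot{\Bar{\mathbf{M}}}=\mathbf{C}+\mathbf{C}^T$. It could therefore never succeed where the exact identity failed.
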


\begin{proof}
{{See Appendix \ref{append:proof_Lemma1}.}}
\end{proof}
\section{{{H-GABI-FBS: Higher Order Forward Dynamics}}}
\label{forward}
The forward dynamics and its time derivatives are not only required for simulation purposes, but also can be necessary for dynamic control schemes \cite{Deluca,ali2026pendumavsixinputomnidirectionalmav}. We can obtain these dynamics either from running the inverse dynamics algorithm and then solve a system of linear equations at each step or by using more computationally-efficient recursive algorithms such as composite inertia and articulated body inertia \cite{featherstone2014rigid}. In this section, we adapt the articulated-body algorithm (ABA) presented in \cite{featherstone2014rigid,Coordinate-inva} which is the fastest forward dynamics algorithm for kinematic trees with $N\geq6$, to compute higher order geometric forward dynamics for the aforementioned class of floating-base trees using Lie group theory and spatial representation of screws. This screw representation allows a more compact formulation when compared to its body-fixed representation counterpart \cite{Coordinate-inva} as it does not involve intermediate frame transformations of twists and wrenches.  
As with the inverse dynamics, the algorithm requires calculating the forward kinematics first. Then, it proceeds with a backward recursion in {{which the articulated inertia, which remains unchanged across orders as shown in Lemma \ref{timedreivative_inv} enabling efficient computation by calculating it once and carrying it over across orders}}, and bias wrench are evaluated for each body and propagated back to the floating base. 

{{
The derivatives of the wrenches, inertia tensors, $\boldsymbol{V}_{i,\mathrm{bias}}^0$, $\boldsymbol{\Pi}_{j,\mathrm{bias}}$, and the momentum map are calculated in Appendix \ref{append:Derivations}. Algorithm \ref{alg:ForwardDyn} provides the general procedure.}}

\begin{lemma}[{{Articulated-body inertia as an invariant leading coefficient under time differentiation}}]
\label{timedreivative_inv}
{{
For a rigid body $i$, denote the spatial articulated inertia by $\mathbf{M}_j^A$and the spatial bias wrench at order $r$ by $\boldsymbol{W}_{j,r}^A$. Then, for any $r\ge 0$, the $r$-th derivative of equations of motion are
\begin{equation}
(\boldsymbol{W}_j)^{(r)} = \mathbf{M}_j^A (\boldsymbol{V}_j^0)^{(r+1)} + \boldsymbol{W}_{j,r}^A,
\end{equation}
where $\mathbf{M}_j^A$, $\forall r\ge 0$, is given by 
\begin{equation}
     \mathbf{M}_{j}^A =  \mathbf{M}^0_{j} +\sum_{i\in c\{j\}} \mathbf{M}_{i}^A
        \!\left( \mathbf{I} - \boldsymbol{S}_{i} (\boldsymbol{S}_{i}^T \mathbf{M}_{i}^A \boldsymbol{S}_{i})^{-1} 
        \boldsymbol{S}_{i}^T \mathbf{M}_{i}^A \right),
\end{equation}
}}
\end{lemma}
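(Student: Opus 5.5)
We argue by induction over the tree in post-order, i.e. the same backward pass used by the articulated-body algorithm, combined with Leibniz differentiation of each body's Newton--Euler equation. The guiding observation is that, in spatial representation, time-differentiating an equation of the form $\boldsymbol{W}=\mathbf{M}\dot{\boldsymbol{V}}+\boldsymbol{b}$ once more produces $\mathbf{M}\ddot{\boldsymbol{V}}$ plus terms involving only lower-order derivatives of $\boldsymbol{V}$. Here $\dot{\mathbf{M}}$ is given by the co-adjoint transport $\dot{\mathbf{M}}^0_j=-\boldsymbol{\mathrm{ad}}^T_{\boldsymbol{V}^0_j}\mathbf{M}^0_j-\mathbf{M}^0_j\boldsymbol{\mathrm{ad}}_{\boldsymbol{V}^0_j}$, which follows from \eqref{Adjointderiv}. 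The highest derivative therefore always enters linearly, and always through the undifferentiated inertia. All derivative terms of the inertia can be moved into the bias wrench $\boldsymbol{W}^A_{j,r}$.

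\textbf{Base case (leaves).} For a leaf body $j$ the equation of motion is $\boldsymbol{W}_j=\dot{\boldsymbol{\Pi}}^0_j-\boldsymbol{W}^0_{j,\mathrm{app}}-\boldsymbol{W}^0_{j,\mathrm{grav}}$ with $\boldsymbol{\Pi}^0_j=\mathbf{M}^0_j\boldsymbol{V}^0_j$. Applying Leibniz' rule gives
\[
(\boldsymbol{W}_j)^{(r)}=\mathbf{M}^0_j(\boldsymbol{V}^0_j)^{(r+1)}+\sum_{k=1}^{r+1}\binom{r+1}{k}(\mathbf{M}^0_j)^{(k)}(\boldsymbol{V}^0_j)^{(r+1-k)}-(\boldsymbol{W}^0_{j,\mathrm{app}})^{(r)}-(\boldsymbol{W}^0_{j,\mathrm{grav}})^{(r)}.
\]
So $\mathbf{M}^A_j=\mathbf{M}^0_j$, which is independent of $r$, and everything else is collected into $\boldsymbol{W}^A_{j,r}$. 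This agrees with the stated formula when $c\{j\}=\emptyset$.

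\textbf{Inductive step.} Assume the claim holds for every child $i\in c\{j\}$, with a common $\mathbf{M}^A_i$ for all $r$. Differentiating the joint relation $\boldsymbol{V}^0_i=\boldsymbol{V}^0_j+\boldsymbol{S}_i\dot q_i$ $(r+1)$ times, as in Algorithm~\ref{alg:kine}, gives
\[
(\boldsymbol{V}^0_i)^{(r+1)}=(\boldsymbol{V}^0_j)^{(r+1)}+\boldsymbol{S}_i q_i^{(r+2)}+\boldsymbol{c}_{i,r},
\]
where $\boldsymbol{c}_{i,r}=\sum_{k\ge1}\binom{r+1}{k}\boldsymbol{S}_i^{(k)}q_i^{(r+2-k)}$ contains only lower-order joint derivatives. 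Next, differentiate the joint projection $\boldsymbol{S}_i^T\boldsymbol{W}_i=Q_i$ $r$ times, exactly as in Algorithm~\ref{alg:Dyn}:
\[
\boldsymbol{S}_i^T(\boldsymbol{W}_i)^{(r)}=(Q_i)^{(r)}-\sum_{k<r}\binom{r}{k}(\boldsymbol{S}_i^T)^{(r-k)}(\boldsymbol{W}_i)^{(k)}.
\]
Substituting the inductive hypothesis and the velocity relation, this equation can be solved for $q_i^{(r+2)}$. The solution involves $(\boldsymbol{S}_i^T\mathbf{M}^A_i\boldsymbol{S}_i)^{-1}$, which is a positive scalar because $\mathbf{M}^A_i$ is positive definite; this is inherited inductively from $\mathbf{M}^0_j\succ0$ and the Schur-complement form. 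Inserting $q_i^{(r+2)}$ back yields
\[
(\boldsymbol{W}_i)^{(r)}=\mathbf{M}^A_i\bigl(\mathbf{I}-\boldsymbol{S}_i(\boldsymbol{S}_i^T\mathbf{M}^A_i\boldsymbol{S}_i)^{-1}\boldsymbol{S}_i^T\mathbf{M}^A_i\bigr)(\boldsymbol{V}^0_j)^{(r+1)}+\boldsymbol{p}_{i,r},
\]
where $\boldsymbol{p}_{i,r}$ collects $\boldsymbol{W}^A_{i,r}$, $\boldsymbol{c}_{i,r}$, $(Q_i)^{(r)}$ and the lower-order Leibniz terms. The $r$-th derivative of body $j$'s own equation contributes $\mathbf{M}^0_j(\boldsymbol{V}^0_j)^{(r+1)}$ plus lower-order terms, by the same argument as in the base case. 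Adding $\sum_{i\in c\{j\}}(\boldsymbol{W}_i)^{(r)}$ then gives the claim with the stated $\mathbf{M}^A_j$. The coefficient is independent of $r$ because neither $\mathbf{M}^A_i$ nor $\boldsymbol{S}_i$ depends on $r$.

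\textbf{Main obstacle.} The main difficulty is bookkeeping rather than anything conceptual. One must verify that every term other than $(\boldsymbol{V}^0_j)^{(r+1)}$ and $q_i^{(r+2)}$ depends only on quantities already known at order $r$, namely derivatives of order at most $r$ of the twists and at most $r+1$ of the joint variables. Otherwise the "bias" would secretly contain the unknown. I would check this with a single order-counting argument on the Leibniz expansions, using the formula for $\boldsymbol{S}_j^{(r_1)}$ in Algorithm~\ref{alg:kine}, which shows that it involves only $(\boldsymbol{V}^0_j)^{(\le r_1-1)}$. The floating base ($j=1$) needs no separate treatment, since it has no joint projection and its equation is simply the top-level instance of the same formula.
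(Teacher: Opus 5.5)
Your proposal is correct and follows essentially the same route as the paper's proof: differentiate each body's Newton--Euler equation and the joint projection $\boldsymbol{S}_i^T\boldsymbol{W}_i$, solve for the highest joint derivative, and substitute back to recover the same Schur-complement inertia at every order. The paper only carries this out explicitly for a single parent--child pair at $r=0$ and $r=1$ and then appeals to induction. Your version runs the post-order tree induction for general $r$ and makes explicit the Leibniz order-counting and the positivity of $\boldsymbol{S}_i^T\mathbf{M}^A_i\boldsymbol{S}_i$.
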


\begin{proof}
    {{See Appendix \ref{append:proof_Lemma2}}}.
\end{proof}
\section{{{H-GHYB-FBS: Higher Order Hybrid Dynamics}}}
\label{hybridsec}
{{For completeness, Algorithm \ref{alg:HybridDyn} provides higher order time derivatives of Featherstone’s hybrid dynamics algorithm \cite{featherstone2014rigid} and adapt it for floating-base trees using Lie groups, where joints in $\mathcal{J}q$ have prescribed accelerations and higher derivatives while joints in $\mathcal{J}\tau$ have prescribed torques with their corresponding derivatives. For the $\mathrm{SE}(3)$ base, the propeller wrench can be given so that the algorithm yields the base acceleration twist and its derivatives, or the vice versa. This is useful, e.g., for trees with passive joints while the base tracks a desired twist trajectory, allowing computation of the propeller wrench and the evolution of the passive joints along that base trajectory.}}
\begin{algorithm}[!t]
\caption{H-GABI-FBS: Higher-Order Geometric Recursive Forward Dynamics for Floating-base Systems, extension of \cite{featherstone2014rigid,Coordinate-inva}.}
\label{alg:ForwardDyn}
\begin{minipage}{\columnwidth}
\small
\setlength{\abovedisplayskip}{1pt}
\setlength{\belowdisplayskip}{1pt}
\setlength{\abovedisplayshortskip}{1pt}
\setlength{\belowdisplayshortskip}{1pt}
\renewcommand{\algorithmicindent}{0.9em}
\begin{algorithmic}
\STATE \textbf{Input:} Alg. \ref{alg:kine}, 
$(\boldsymbol{W}^0_{1,\mathrm{prop}})^{(r)}$, $(\tau_{j})^{(r)}$,$(\tau_{j,\mathrm{ext}})^{(r)}$ $(\boldsymbol{W}^0_{j,\mathrm{app}})^{(r)}$.
\STATE \textbf{Output:} $\boldsymbol{q}^{(r+2)},(\boldsymbol{V}^0)^{(r+1)}$
\STATE \textbf{Call:} \texttt{Backward}(1); \hfill \% post-order DFS
\STATE \textbf{function} \texttt{Backward}($j$)
    \FOR{$\forall i=1 : \mathrm{length}(c\{j\})$}
        \STATE \texttt{Backward}\big(c\{j\}(i)\big) \hfill \% i-th child of body $j$
    \ENDFOR
    \STATE
    \begin{equation}
    \begin{split}
        \mathbf{M}_{j}^A &=  \mathbf{M}^0_{j} +\sum_{i\in c\{j\}} 
        \mathbf{M}_{i}^A\!\left( \mathbf{I} - \boldsymbol{S}_{i} 
        (\boldsymbol{S}_{i}^T \mathbf{M}_{i}^A \boldsymbol{S}_{i})^{-1} 
        \boldsymbol{S}_{i}^T \mathbf{M}_{i}^A \right),\\
        (\boldsymbol{W}_{j}^A)^{(r)} &=   
        -\boldsymbol{W}_{j,\mathrm{app}}^{(r)} - \boldsymbol{W}_{j,\mathrm{grav}}^{(r)} 
        + \boldsymbol{\Pi}_{j,\mathrm{bias}}^{(r+1)}
         +\sum_{i\in c\{j\}}\!
        \Big((\boldsymbol{W}_{i}^A)^{(r)}\\&+ 
        \mathbf{M}_{i}^A ( \boldsymbol{S}_{i} \Tilde{\boldsymbol{q}}^{(r+2)}_{i} 
        + (\boldsymbol{V}_{i,\mathrm{bias}}^0)^{(r+1)}\!\Big)
    \end{split}
    \label{eqMr}
    \end{equation}
    \STATE where $\Tilde{\boldsymbol{q}}^{(r+2)}_{i}$ and $\Tilde{\boldsymbol{\tau}}_{i}^{(r)}$ are given by
    \begin{equation}
    \begin{split}
        \Tilde{\boldsymbol{q}}^{(r+2)}_{i} &= 
        \!\left( \boldsymbol{S}_{i}^T \mathbf{M}^A_{i} \boldsymbol{S}_{i} \right)^{-1}
        \!\Big( {\boldsymbol{\tau}}_{i}^{(r)}+(\tau_{i,\mathrm{ext}})^{(r)}-\Tilde{\boldsymbol{\tau}}_{i}^{(r)}\\
        &\quad -\boldsymbol{S}_{i}^T 
        ( \mathbf{M}^A_{i}(\boldsymbol{V}_{i,\mathrm{bias}}^0)^{(r+1)}
        +(\boldsymbol{W}_{i}^A)^{(r)})\!\Big), \\
        &\text{ where $\Tilde{\boldsymbol{\tau}}_{i}^{(0)}=0$ otherwise given by}\\
        \Tilde{\boldsymbol{\tau}}_{i}^{(r)} &=
\sum_{k=0}^{r-1} \binom{r}{k}\, (\boldsymbol{S}_{i}^{T})^{(r-k)}
\left(\mathbf{M}_{i}^{A}(\boldsymbol{V}_{i}^{0})^{(k+1)} + (\boldsymbol{W}_{i}^{A})^{(k)}\right).
    \end{split}
    \label{dddqti}
    \end{equation}
\STATE \textbf{end function}
\vspace{0.5em}
\STATE \textbf{Call:} \texttt{Forward}(1); \hfill \% pre-order DFS
\STATE \textbf{function} \texttt{Forward}($j$)
\IF{$j=1$}
    \STATE Solve $(\boldsymbol{W}^0_{1,\mathrm{prop}})^{(r)} 
    = \mathbf{M}_{1}^A (\boldsymbol{V}_{1}^0)^{(r+1)} + (\boldsymbol{W}_{1}^A)^{(r)}$
    for $(\boldsymbol{V}_{1}^0)^{(r+1)}$ using $LDL^T$.
\ELSE
    \STATE
    \begin{equation}
    \begin{split}
        \boldsymbol{q}^{(r+2)}_{j} &= 
        -\!\left( \boldsymbol{S}_{j}^T \mathbf{M}^A_{j} \boldsymbol{S}_{j} \right)^{-1}
        \!\boldsymbol{S}_{j}^T \mathbf{M}^A_{j} (\boldsymbol{V}_{p\{j\}}^0)^{(r+1)}
        +\Tilde{\boldsymbol{q}}^{(r+2)}_{j},\\
        (\boldsymbol{V}_{j}^0)^{(r+1)} &= 
        (\boldsymbol{V}_{p\{j\}}^0)^{(r+1)}+ 
        \boldsymbol{S}_{j} \boldsymbol{q}^{(r+2)}_{j}
         +(\boldsymbol{V}_{j,\mathrm{bias}}^0)^{(r+1)}.
    \end{split}
    \end{equation}
\ENDIF
\FOR{$\forall i=1 : \mathrm{length}(c\{j\})$}
    \STATE \texttt{Forward}\big(c\{j\}(i)\big) \hfill \% i-th child of body $j$
\ENDFOR
\STATE \textbf{end function}
\end{algorithmic}
\end{minipage}
\end{algorithm}
\begin{algorithm}[!t]
\caption{\textcolor{black}{H-GHYB-FBS: Higher-Order Geometric Recursive Hybrid Dynamics for Floating-base Systems, extension of \cite{featherstone2014rigid}}}
\label{alg:HybridDyn}
\begin{minipage}{\columnwidth}
\small
\setlength{\abovedisplayskip}{1pt}
\setlength{\belowdisplayskip}{1pt}
\setlength{\abovedisplayshortskip}{1pt}
\setlength{\belowdisplayshortskip}{1pt}
\renewcommand{\algorithmicindent}{0.9em}
\begin{algorithmic}
\STATE \textbf{Input:} Alg.~\ref{alg:kine}; $\mathcal{J}_q,\mathcal{J}_\tau$, where $\mathcal{J}_q\cup\mathcal{J}_\tau=\{2,\dots,N\}$;
\STATE $\bar{\boldsymbol{q}}_{j}^{(r+2)}\ \forall j\in\mathcal{J}_q$;\quad $(\bar{\tau}_j)^{(r)}\ \forall j\in\mathcal{J}_\tau$;\quad $(\tau_{j,\mathrm{ext}})^{(r)}$; $(\boldsymbol{W}^0_{j,\mathrm{app}})^{(r)}$;
\STATE \text{either} $(\boldsymbol{W}^0_{1,\mathrm{prop}})^{(r)}$ \text{or} $(\boldsymbol{V}_1^0)^{(r+1)}$.
\STATE \textbf{Output:} $\boldsymbol{q}_{j}^{(r+2)}\ \forall j\in\mathcal{J}_\tau$;$(\boldsymbol{W}^0_{1,\mathrm{prop}})^{(r)}$ or $(\boldsymbol{V}_1^0)^{(r+1)}$;\ $(\tau_j)^{(r)}\ \forall j\in\mathcal{J}_q$;\ $(\boldsymbol{V}^0)^{(r+1)}$.
\STATE \textbf{Call:} Backward$(1)$; \hfill \% post-order DFS

\STATE \textbf{function} Backward$(j)$
\FOR{$\forall i=1:\mathrm{length}(c\{j\})$}
    \STATE Backward$\big(c\{j\}(i)\big)$
\ENDFOR
\STATE $\displaystyle
\mathbf{M}_{j}^A=\mathbf{M}_j^0
+\!\!\sum_{i\in c\{j\}\cap\mathcal{J}_q}\!\!\mathbf{M}_i^A
+\!\!\sum_{i\in c\{j\}\cap\mathcal{J}_\tau}\!\!
\mathbf{M}_{i}^A\!\left( \mathbf{I}-\boldsymbol{S}_{i}
(\boldsymbol{S}_{i}^T \mathbf{M}_{i}^A \boldsymbol{S}_{i})^{-1}
\boldsymbol{S}_{i}^T \mathbf{M}_{i}^A \right)$.
\STATE $\displaystyle
\begin{aligned}
(\boldsymbol{W}_{j}^A)^{(r)} &=
-(\boldsymbol{W}^0_{j,\mathrm{app}})^{(r)}-(\boldsymbol{W}^0_{j,\mathrm{grav}})^{(r)}
+\boldsymbol{\Pi}_{j,\mathrm{bias}}^{(r+1)}\\
&\quad+\sum_{i\in c\{j\}\cap\mathcal{J}_q}\Big((\boldsymbol{W}_{i}^A)^{(r)}+
\mathbf{M}_{i}^A(\boldsymbol{S}_{i}\bar{\boldsymbol{q}}_{i}^{(r+2)}
+(\boldsymbol{V}_{i,\mathrm{bias}}^0)^{(r+1)}\Big)\\
&\quad+\sum_{i\in c\{j\}\cap\mathcal{J}_\tau}\Big((\boldsymbol{W}_{i}^A)^{(r)}+
\mathbf{M}_{i}^A(\boldsymbol{S}_{i}\Tilde{\boldsymbol{q}}_{i}^{(r+2)}
+(\boldsymbol{V}_{i,\mathrm{bias}}^0)^{(r+1)})\Big).
\end{aligned}$
\STATE where $\Tilde{\boldsymbol{q}}_{i}^{(r+2)}$, $\Tilde{\boldsymbol{\tau}}_{i}^{(r)}$ are from \eqref{dddqti} with $(\tau_i)^{(r)}:=(\bar{\tau}_i)^{(r)}$ for $i\in\mathcal{J}_\tau$.
\STATE \textbf{end function}

\STATE \textbf{Call:} Forward$(1)$; \hfill \% pre-order DFS

\STATE \textbf{function} Forward$(j)$
\IF{$j=1$}
    \IF{$(\boldsymbol{W}^0_{1,\mathrm{prop}})^{(r)}$ is given}
        \STATE Solve $(\boldsymbol{W}^0_{1,\mathrm{prop}})^{(r)}=\mathbf{M}_1^A(\boldsymbol{V}_1^0)^{(r+1)}+(\boldsymbol{W}_1^A)^{(r)}$ for $(\boldsymbol{V}_1^0)^{(r+1)}$.
    \ELSE
        \STATE $(\boldsymbol{W}^0_{1,\mathrm{prop}})^{(r)}:=\mathbf{M}_1^A(\boldsymbol{V}_1^0)^{(r+1)}+(\boldsymbol{W}_1^A)^{(r)}$.
    \ENDIF
\ELSE
    \IF{$j\in\mathcal{J}_\tau$}
        \STATE $\displaystyle \boldsymbol{q}^{(r+2)}_{j}:=
        -(\boldsymbol{S}_{j}^T \mathbf{M}^A_{j} \boldsymbol{S}_{j})^{-1}
        \boldsymbol{S}_{j}^T \mathbf{M}^A_{j} (\boldsymbol{V}_{p\{j\}}^0)^{(r+1)}
        +\Tilde{\boldsymbol{q}}^{(r+2)}_{j}$.
    \ELSE
        \STATE $\boldsymbol{q}^{(r+2)}_{j}:=\bar{\boldsymbol{q}}^{(r+2)}_{j}$.
    \ENDIF
    \STATE $\displaystyle (\boldsymbol{V}_{j}^0)^{(r+1)} :=
    (\boldsymbol{V}_{p\{j\}}^0)^{(r+1)}+\boldsymbol{S}_{j}\boldsymbol{q}^{(r+2)}_{j}+(\boldsymbol{V}_{j,\mathrm{bias}}^0)^{(r+1)}$.
    \IF{$j\in\mathcal{J}_q$}
        \STATE $\displaystyle (\boldsymbol{Q}_{j})^{(r)}:= \sum_{k=0}^{r} \binom{r}{k} (\boldsymbol{S}_{j}^T)^{(r-k)} (\boldsymbol{W}^0_{j})^{(k)}$;\STATE hence\ 
        $(\tau_j)^{(r)}:=(\boldsymbol{Q}_{j})^{(r)}-(\tau_{j,\mathrm{ext}})^{(r)}$.
    \ENDIF
\ENDIF
\FOR{$\forall i=1:\mathrm{length}(c\{j\})$}
    \STATE Forward$\big(c\{j\}(i)\big)$
\ENDFOR
\STATE \textbf{end function}
\end{algorithmic}
\end{minipage}
\end{algorithm}
\section{{{Simulations and Benchmarking}}}
\label{example}

\begin{figure*}
\centering
\begin{subfigure}{1\columnwidth}
    \centering
    \includegraphics[width=1\columnwidth]{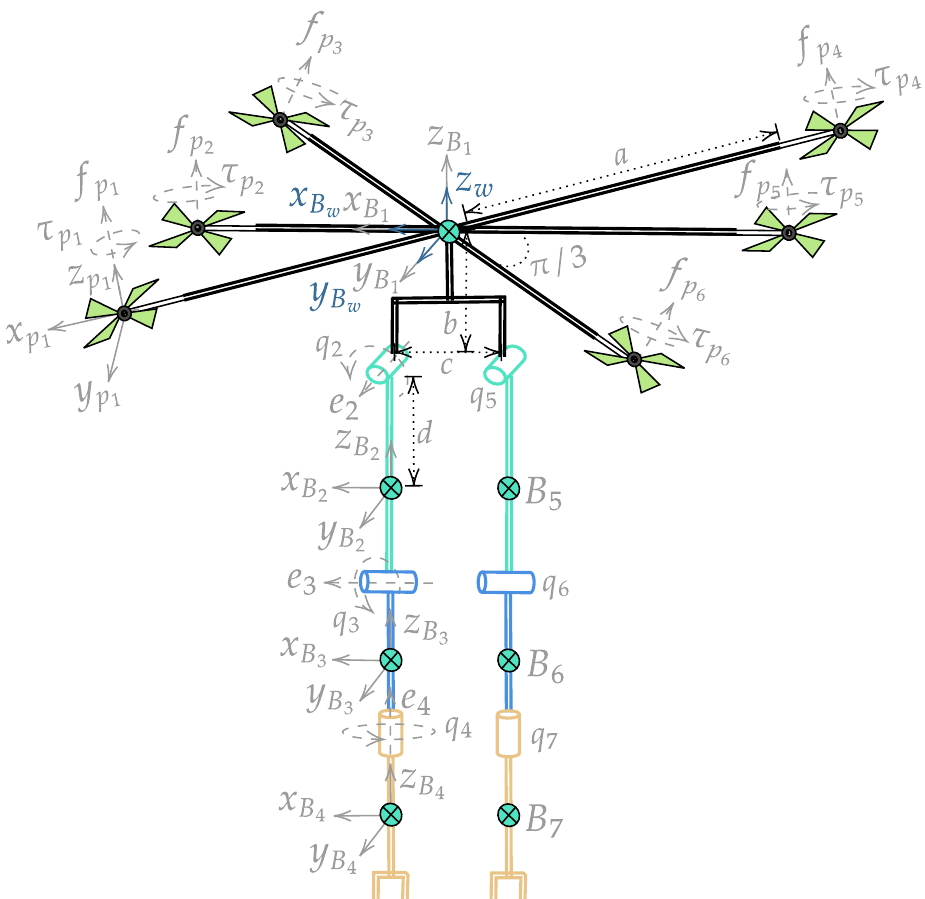}
    \caption{Fully-actuated base carrying 2 identical manipulators}
\end{subfigure}
\hspace{0.5cm}
\begin{subfigure}{0.5\columnwidth}
    \centering
    \includegraphics[width=1\columnwidth]{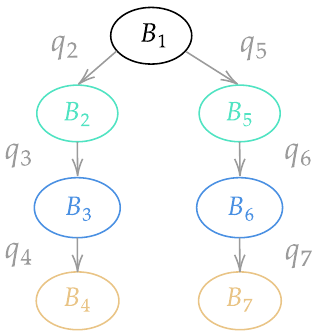}
    \caption{Graph of the system}
\end{subfigure}
\caption{a) Schematic representation of the floating base system in home configuration $\mathcal{H}_{0}$. The base is a fully-actuated hexarotor called TiltHex~\cite{TilThex2015}, and the attached manipulators have 3 revolute joints each, composing two 3-DoF flying end-effectors. The propeller $p_i$ is rigidly attached to the base and tilted around 2 axes. Frames $\mathcal{F}_{B_i}$ are at the configuration $\mathcal{H}_{0}$ and oriented identically to $\mathcal{F}_{w}$. b) Kinematic tree representing the system, with $B$ denoting a body (node) and $q$ a revolute joint (edge).}
\label{main_vehicle}
\end{figure*}
Aerial manipulation is typically enabled by an open kinematic chain mounted on a multirotor aerial vehicle \cite{aerialmanip,manipulationAerial}. In general, expressing their geometrically-exact dynamics and its time derivatives in a compact form that can also be efficiently computed in recursions is challenging due to the high DoFs and the presence of a free-floating base. The aerial manipulator in Fig.~\ref{main_vehicle}(a) provides an instance of the class of floating-base systems considered herein. 

{{In order to validate the proposed methods, we provide numerical simulations, whose results are depicted in Fig. \ref{fig:IDinout} and Fig. \ref{fig:IDinput}, of the higher-order inverse dynamics (H-GRNE-FBS), and forward dynamics algorithms (H-GABI-FBS) applied on this aerial manipulator robot, with the order ranging from the acceleration-level dynamics ($r=0$) up to its 5th time derivatives ($r=5$). We first describe the robot structure and the analytical application of the algorithms, summarized in Table~\ref{tab:recursive_all}. As an input to H-GRNE-FBS, we generate smooth reference trajectories for the floating-base configuration and joints, together with their derivatives, and plot them in Fig.~\ref{fig:IDinput}. Afterwards, the resulting higher-order base wrench and joint torques, shown in Fig.~\ref{fig:IDoutput}, are fed to H-GABI-FBS, which reproduces these reference trajectories and their derivatives at its output, as illustrated by the dashed lines in Fig.~\ref{fig:IDinput}. This confirms the successful operation of the algorithms. All simulation parameters are provided in Table~\ref{tab:tilthex_sim_params}. All computations are run on an Intel Core i7-12700H CPU (14 cores, up to 4.70~GHz).}} 

{{Moreover, we benchmark the computational cost of the proposed routines by measuring the mean time per call over 1000 repeated calls across multiple derivative orders $r$ and increasing number of bodies in the tree. This reports a quantitative indication of the practical complexity as a function of both the tree size and the derivative order. The benchmark setup settings are listed in Table \ref{tab:bench_params_short}. As shown in Fig. \ref{fig:IDbenchmark} and Fig. \ref{fig:FDbenchmark}, for a fixed order $r$, the algorithm computational time grows linearly in $N$, i.e. $O(N)$, while for a fixed $N$, they scale quadratically $O(r^2)$ in $r$ when tested on the tree of Fig. \ref{main_vehicle} but with the increased size of 5 branches and 20 bodies per branch. Although these results already highlight satisfactory scaling trends on par with other available packages such as Pinocchio \cite{carpentier2019pinocchio}, we note that further speed-ups can be possible through dedicated low-level implementations and code optimization techniques. Possible implementation is available at \url{https://github.com/ahmed11406/Dynamics/}.}}

{ Finally, we report a comparison against a baseline software: CasADi \cite{Andersson2018}, to quantify the cost of obtaining higher-order time derivatives via automatic differentiation of the 0-th order dynamic versus our proposed algorithms. Specifically, we implement the tree in Fig. \ref{main_vehicle} with the same increased size as used for benchmarking our proposed algorithms, symbolically in CasADi, and define the inverse-dynamics map $y_{\mathrm{ID}}(t)=\big[\boldsymbol{W}^0_{\text{prop},B_1}(t);\boldsymbol{\tau}(t)\big]$ and the forward-dynamics map $y_{\mathrm{FD}}(t)=\big[\dot{\boldsymbol{V}}^{0}_{B_1}(t);\ddot{\boldsymbol{q}}(t)\big]$. All CasADi evaluations are performed at $t_0=0$ with JIT compilation disabled. For each derivative order $r$, we generate the $r$-th time derivative $d^r y/dt^r$ once using CasADi’s built-in directional derivative operator \texttt{jtimes}, and we time only the subsequent repeated evaluations over 1000 calls, hence CasADi exponential time complexity displayed in Fig. \ref{fig:CasadAD} reflects the evaluation cost of an already-constructed $r$-th derivative expression graph without the one-time symbolic differentiation and graph-construction overhead. This isolates the online cost of evaluating higher-order derivatives when their expressions are built offline. Compared with Fig.~\ref{fig:IDbenchmark} and Fig.~\ref{fig:FDbenchmark}, {{the results suggest that the proposed algorithms can provide computational advantages over automatic differentiation in the considered benchmarks when evaluating geometrically exact higher-order time derivatives of large floating-base tree dynamics.}}}

\subsection{Initialization}
The robot has a base with a relative configuration $\mathbf{C}^0_{B_1} \in SE(3)$, denoted by $B_1$ and equipped with 6 rigidly attached and tilted propellers labeled $p_1$ to $p_6$~\cite{TilThex2015}. Two manipulator arms are attached to the base vertically at a distance $b$ below its CoM location. One arm has 3 revolute joints labeled $q_2,\,q_3,$ and $q_4$ $\in \mathbb{S}^1$ connecting $B_1$ to $B_2$, $B_2$ to $B_3$, and $B_3$ to $B_4$, respectively. The other arm is similarly attached, as depicted in Fig.~\ref{main_vehicle}, forming an aerial manipulator with 2 parallel arms. Frame $\mathcal{F}_{B_i}:=\{x_{B_i},y_{B_i},z_{B_i}\}$ is placed at $B_i$'s CoM, while $\mathcal{F}_{w}$ is the inertial frame. The configuration of the system is thus described by $\mathcal{Q}=(\mathbf{C}^0_{B_1},\boldsymbol{q})\in SE(3)\times \mathbb{T}^6$, while the configuration velocity is $\mathcal{V}=(\boldsymbol{V}^0_{B_1},\dot{\boldsymbol{q}})\in se(3)\times T_q\mathbb{T}^6\cong se(3)\times\mathbb{R}^6$, where $\boldsymbol{V}^0_{B_1}$ is the spatial (right-invariant) base twist.

We start by laying down the description of the system needed for the initialization of the algorithms. The graph has 2 branches, each having 3 nodes, amounting to 7 bodies ($N=7$). Let $P(1,j)$, $E_{P(1,j)}$, $c(j)$, and $p(j)$ denote the ordered set of nodes (bodies) from $B_1$ to $B_j$ in Fig.~\ref{main_vehicle}(b), the associated directed path (joints) set, the child set of body $B_j$, and the parent set of body $B_j$, respectively. For each $j \in\{1,\cdots,7\}$, they are given by  
\begin{equation}
\resizebox{1\columnwidth}{!}{$
\begin{aligned}
c(1)&=\{2,5\},\, c(2)=\{3\},\,c(3)=\{4\},\,c(5)=\{6\},\,c(6)=\{7\},\, c(7)=\{\phi\},\\
p(1)&=\{\phi\},\, p(2)=\{1\},\,p(3)=\{2\},\,p(4)=\{3\},\,p(5)=\{1\},\,p(6)=\{5\},\,p(7)=\{6\},\\
P(1,1)&=\{B_1\},\,P(1,2)=\{B_1,B_2\},\,P(1,3)=\{B_1,B_2,B_3\},\\
P(1,4)&=\{B_1,B_2,B_3,B_4\},\,P(1,5)=\{B_1,B_5\},\,P(1,6)=\{B_1,B_5,B_6\},\\
P(1,7)&=\{B_1,B_5,B_6,B_7\},\,E_{P(1,2)}=\{q_2\},\,E_{P(1,3)}=\{q_2,q_3\},\\
E_{P(1,4)}&=\{q_2,q_3,q_4\},\,E_{P(1,5)}=\{q_5\},\,E_{P(1,6)}=\{q_5,q_6\},\,E_{P(1,7)}=\{q_5,q_6,q_7\}.
\end{aligned}
$}
\end{equation}

The wrench of propeller $p_i$ applied at and expressed in the base frame $\mathcal{F}_{B_1}$ is given by $\boldsymbol{W}^{B_1}_{1,p_i}\in se(3)^{*}$:
\begin{equation}
\resizebox{1\columnwidth}{!}{$
\begin{aligned}
\boldsymbol{W}^{B_1}_{1,p_i}&=\boldsymbol{\mathrm{Ad}}^{-T}_{\mathbf{C}^{B_1}_{p_i}} \boldsymbol{W}^{p_i}_{1,p_i},\qquad 
\mathbf{C}^{B_1}_{p_i}=\mathbf{C}_{R_z}((2-i)\pi/3)\mathbf{C}_{T_x}(a)\mathbf{C}_{R_x}(\alpha)\mathbf{C}_{R_y}(\beta),\\
&i=\{1,\cdots,6\},\qquad
\boldsymbol{W}^{B_1}_{B_1,\mathrm{prop}}= \sum_{i=1}^6\boldsymbol{W}^{B_1}_{1,p_i}.
\end{aligned}
$}
\end{equation}
Here, $\alpha,\beta \in \mathbb{S}^1$ are fixed angles, $\mathbf{C}_{R_e}(w)$ denotes a pure rotation transformation around axis $e$ with angle $w$, and $\mathbf{C}_{T_e}(w)$ denotes a pure translation transformation along axis $e$ with displacement $w$. The local propeller wrench is $\boldsymbol{W}^{p_i}_{1,p_i}=(\,0\;\;0\;\;c_{d_i}\;\;0\;\;0\;\;c_{f_i}\,)^T \omega_{p_i}^2$, where $c_f$ and $c_d$ are the thrust and drag coefficients, respectively, and $\omega_{p_i}$ is the propeller speed (control input). The total propeller wrench in the base frame is denoted $\boldsymbol{W}^{B_1}_{B_1,\mathrm{prop}}$. Moreover, joint $q_j$ can have several sources of external torques applied at it, the summation of which is denoted by $\Bar{\boldsymbol{\tau}}_{j}$.

Initially, at home configuration $\mathcal{H}_{0}$ shown in Fig.~\ref{main_vehicle}(a), all frames $\mathcal{F}_{B_j}$, $j=\{1,..,7\}$, are oriented identically with the world frame $\mathcal{F}_{w}$ for ease of computation. Thus, the constant home configurations of $\mathbf{A}^{p(B_j)}_{B_j}$ are, with $i=\{2,5\}$ and $k=\{3,4,6,7\}$:
\begin{equation}
\mathbf{A}^{B_{p(i)}}_{B_i}= \mathbf{C}_{T_z}(-b-d)\mathbf{C}_{T_x}\big((-1)^{i}\tfrac{c}{2}\big),\qquad
\mathbf{A}^{B_{p(k)}}_{B_k}= \mathbf{C}_{T_z}(-2d).
\end{equation}

Finally, let $\Bar{\boldsymbol{p}}_j=(0,\,0,\,d)^T$ be the position vector in frame $\mathcal{F}_{B_j}$, $j=\{2,..,7\}$, of a point on the respective joint axis, expressed in the inertial frame $\mathcal{F}_{w}$ at the home configuration $\mathcal{H}_{0}$:
\begin{equation}
\begin{split}
\boldsymbol{e}_i=(0,\,1,\,0)^T,\qquad 
\boldsymbol{e}_k=(1,\,0,\,0)^T,\qquad 
\boldsymbol{e}_l=(0,\,0,\,1)^T,
\end{split}
\end{equation}
where $i=\{2,5\}$, $k=\{3,6\}$, and $l=\{4,7\}$.

\begin{table}
\centering
\caption{{{Simulation Parameters.}}}
\label{tab:tilthex_sim_params}
\scriptsize

\setlength{\tabcolsep}{0.5pt}
\renewcommand{\arraystretch}{1}
\begin{tabularx}{\columnwidth}{@{}l l X@{}}
\toprule
\textbf{Category} & \textbf{Symbol} & \textbf{Value / definition} \\
\midrule

\multicolumn{3}{@{}l}{\textit{Simulation horizon and dynamics orders}}\\
Simulation horizon & $T$ & $30~\mathrm{s}$ \\
Time step & $\Delta t$ & $0.01~\mathrm{s}$ \\
Number of algorithm calls & $N_t$ & $3000$ \\
Orders & $r$ & $0,1,2,3,4,5$ \\
\midrule

\multicolumn{3}{@{}l}{\textit{Base reference pose (used to generate $\mathbf{C}^0_{B_1}(t)$)}}\\
Position: Planar circle at $z=0$ with radius & $-$ & $1.0~\mathrm{m}$ \\
Position: Trajectory phase & $-$ & $2\pi/20~\mathrm{rad/s}$ \\
Attitude: Yaw angular frequency & $-$ & $2\pi/30~\mathrm{rad/s}$ \\
Attitude: Yaw amplitude & $-$ & $25^\circ$ \\
\midrule

\multicolumn{3}{@{}l}{\textit{Joint reference trajectories (half-cosine), $j\in\{2,\dots,7\}$}}\\
Trajectory form & $q_j(t)$ &
$q_{0,j}+\dfrac{\eta_j}{2}\bigl(1-\cos(\pi t/T)\bigr)$ \\
Initial joint angles & $\boldsymbol{q}_0$ &
$\mathrm{deg2rad}\!\left([\,0,\ 10,\ -5,\ 15,\ -10,\ 7,\ -12\,]\right)$ \\
Half-cosine amplitudes & $\boldsymbol{\eta}$ &
$\mathrm{deg2rad}\!\left([\,0,\ 60,\ 50,\ 40,\ 55,\ 45,\ 35\,]\right)$ \\
\midrule

\multicolumn{3}{@{}l}{\textit{Geometry at home configuration (used in $\mathbf{A}^{B_{p(j)}}_{B_j}$ and $\mathbf{Y}_j$)}}\\
Vertical offsets & $b,\ d$ & $b=0.10~\mathrm{m},\ d=0.06~\mathrm{m}$ \\
Arm lateral spacing & $c$ & $c=0.18~\mathrm{m}$ \\
Base home constant transform & $\mathbf{A}^0_{B_1}$ & $\mathbf{A}^0_{B_1}=\mathbf{I}_4$ \\
\midrule

\multicolumn{3}{@{}l}{\textit{Inertial parameters (used in $\,\mathbf{M}^b_{B_j}$), with $j\in\{2,\dots,7\}$}}\\
Base mass & $m_{B_1}$ & $2.5~\mathrm{kg}$ \\
Base angular moment of inertia & $\mathbf{J}_{B_1}$ & $\mathrm{diag}(0.03,\,0.03,\,0.05)\ \mathrm{kg\,m^2}$ \\
Link mass & $m_{B_j}$ & $0.25~\mathrm{kg}$ \\
Link angular moment of inertia & $\mathbf{J}_{B_j}$ & $\mathrm{diag}(0.002,\,0.002,\,0.001)\ \mathrm{kg\,m^2}$ \\
Body-fixed inertia tensor at CoM & $\mathbf{M}^b_{B_j}$ &
$\mathbf{M}^b_{B_j}=
\begin{bmatrix}
\mathbf{J}_{B_j} & \mathbf{0}\\
\mathbf{0} & m_{B_j}\mathbf{I}_3
\end{bmatrix}$ \\
\midrule

\multicolumn{3}{@{}l}{\textit{External wrenches/torques $\forall j$}}\\
External body wrenches & $\boldsymbol{W}^{0\,(k)}_{B_j,\mathrm{app}}$ &
$\boldsymbol{W}^{0\,(k)}_{B_j,\mathrm{app}}=\mathbf{0},\quad k=0,\dots,r$  \\
\bottomrule
\end{tabularx}
\end{table}

\begin{table*}[h]
\centering
\caption{{{Benchmark setup.} }
{{Model: aerial manipulator in Fig. \ref{main_vehicle} with $N_B{=}5$ branches; kinematic and dynamics parameters are in Table \ref{tab:tilthex_sim_params}.
CasADi benchmark measures \emph{evaluation time} of the r-differentiated dynamics; Automatic differentiation occurs once per $r>0$ via built-in \texttt{jtimes(dy,t,1)} outside the timed loop.}}}
\label{tab:bench_params_short}
\scriptsize

\setlength{\tabcolsep}{4pt}
\renewcommand{\arraystretch}{1.15}
\begin{tabularx}{\textwidth}{@{}l l X@{}}
\toprule
\textbf{Benchmark} & \textbf{Setting} & \textbf{Value / definition} \\
\midrule

H-GRNE-FBS & Sweeps &
vs $N$: $N=1+5N_{\mathrm{BOD}}$, $N_{\mathrm{BOD}}\in[1,199] \text{ per branch}$ ($N\le 996$), total $100$ sweeps per $r$, $r\in\{0,\dots,5\}$; 
vs $r$: fixed $N=101$, $r\in\{0,\dots,10\}$. \\
 H-GRNE-FBS & Inputs &
\texttt{(C0\_1,V0\_1\_derivs,q\_derivs,W\_app,r)} with $C0\_1=\mathbf{I}_4$, $W_{\mathrm{app}}=\mathbf{0}$, 
$V0\_1\_derivs\in\mathbb{R}^{6\times12}$ and $q\_derivs\in\mathbb{R}^{N\times13}$ chosen randomly per sweep. \\

H-GABI-FBS & Sweeps &
vs $N$: same $N$ range, $r\in\{0,\dots,5\}$; vs $r$: fixed $N=101$, $r\in\{0,\dots,10\}$. \\
H-GABI-FBS & Inputs &
\texttt{(C0\_1,V0\_1,q0,qdot,Wprop,tau\_bar,W\_app,r)} with $C0\_1=\mathbf{I}_4$, $V0\_1=\mathbf{0}$,
$q,\dot q$ randomly chosen per sweep, and $\tau=\mathbf{0}$, $W_{\mathrm{prop}}=\mathbf{0}$, $W_{\mathrm{app}}=\mathbf{0}$. \\

CasADi ID & Problem size / orders &
Fixed $N=101$, $r\in\{0,\dots,8\}$, $N_{\mathrm{rep}}=1000$, warmup $=5$, JIT disabled. \\
CasADi ID & Timing &
For each $r$: build \texttt{Function f(t,p)\{dy\}} once and time repeated evaluation \texttt{f(\{t0,Pdm[i]\})}.
Here $dy=\frac{d^r}{dt^r}y(t)$ obtained by repeated built-in \texttt{jtimes(dy,t,1)} (performed \emph{outside} the timing loop). \\

CasADi FD & Problem size / orders &
Fixed $N=101$, $r\in\{0,\dots,8\}$, $N_{\mathrm{rep}}=1000$, warmup $=5$, JIT disabled. \\
CasADi FD & Timing &
For each $r$: time evaluation of \texttt{Function f(t,p)\{dy\}} at $t_0=0$ with random $\mathbf{p}$; 
$dy=\frac{d^r}{dt^r}y(t)$ generated once per $r$ via \texttt{jtimes}. The FD graph includes CasADi built-ins
\texttt{mtimes} and \texttt{solve} (for base twist). \\

All & Timing metric &
Mean seconds/call: \texttt{elapsed\_time/1000}. \\
\bottomrule
\end{tabularx}
\end{table*}

\subsection{H-GRNE-FBS Algorithm $(r=\{0,1\})$: Forward Recursion}
Algorithm~\ref{alg:kine} yields $\mathbf{C}^0_{B_j}$ and the $(r+2)$-th order kinematics $(\boldsymbol{V}^0_{B_j})^{(r+1)}$ and instantaneous screws $\boldsymbol{S}_{q_j}^{(r+1)}$ for each $j=\{2,\cdots,7\}$. The forward kinematics of each body up to the 3rd order are computed by evaluating the expressions, for each $j\in P(1,4)$ and $j\in P(1,7)$ in parallel starting from 1 until reaching 4 and 7, respectively, shown in Table~\ref{tab:kinematics_orders}.
The constant spatial (right-invariant) screws $\mathbf{Y}_j\in se(3)$, measured in the home configuration $\mathcal{H}_{0}$ and represented in the inertial frame $\mathcal{F}_{w}$, are computed using~\eqref{constantScrew} from quantities in the 0-th order kinematics. Since they are constant, they only need to be computed once. The time derivatives of $\mathbf{M}^0_{B_j}$ are obtained from~\eqref{Mass}. At the end of the backward recursion, the inverse dynamics and their first derivatives for each body are obtained.
\subsection{H-GRNE-FBS Algorithm $(r=\{0,1\})$: Backward Recursion}
Running Algorithm~\ref{alg:Dyn} up to the 1st order yields the equations of motion and their first time derivatives by executing these computations, displayed in Table~\ref{tab:inverse_dynamics}, in parallel for each $j\in P(4,1)$ and $j\in P(7,1)$, starting from 4 and 7, respectively, and ending at 1.

The time derivatives of the spatial momentum $\boldsymbol{\Pi}^0_{B_{j}}$, the gravitational wrench $\boldsymbol{W}_{\mathrm{grav},B_{j}}^0$, and the external wrench $\boldsymbol{W}_{\mathrm{ext},B_{j}}^0$ applied at body $B_{j}$ are given by~\eqref{Momu},~\eqref{gravweriv}, and~\eqref{extweriv}, respectively. Observe that the 1-st order inverse dynamics block requires computing the 0-th order recursion first, as $\boldsymbol{Q}_{B_j}$ is passed from $r=0$ to $r=1$.  
As mentioned in Sec.~\ref{relation}, the equations of motion can be written in the conventional EoM form using the child and parent Boolean matrices $\mathbf{G}_c$ and $\mathbf{G}_p$:
\begin{equation}
\mathbf{G}_p=
\begin{pmatrix}
\mathbf{I}_{6} & \mathbf{0}_{6} & \mathbf{0}_{6} & \mathbf{0}_{6} & \mathbf{0}_{6} & \mathbf{0}_{6} & \mathbf{0}_{6} \\
\mathbf{I}_{6} & \mathbf{I}_{6} & \mathbf{0}_{6} & \mathbf{0}_{6} & \mathbf{0}_{6} & \mathbf{0}_{6} & \mathbf{0}_{6} \\
\mathbf{I}_{6} & \mathbf{I}_{6} & \mathbf{I}_{6} & \mathbf{0}_{6} & \mathbf{0}_{6} & \mathbf{0}_{6} & \mathbf{0}_{6} \\
\mathbf{I}_{6} & \mathbf{I}_{6} & \mathbf{I}_{6} & \mathbf{I}_{6} & \mathbf{0}_{6} & \mathbf{0}_{6} & \mathbf{0}_{6} \\
\mathbf{I}_{6} & \mathbf{0}_{6} & \mathbf{0}_{6} & \mathbf{0}_{6} & \mathbf{I}_{6} & \mathbf{0}_{6} & \mathbf{0}_{6} \\
\mathbf{I}_{6} & \mathbf{0}_{6} & \mathbf{0}_{6} & \mathbf{0}_{6} & \mathbf{I}_{6} & \mathbf{I}_{6} & \mathbf{0}_{6} \\
\mathbf{I}_{6} & \mathbf{0}_{6} & \mathbf{0}_{6} & \mathbf{0}_{6} & \mathbf{I}_{6} & \mathbf{I}_{6} & \mathbf{I}_{6}
\end{pmatrix},
\qquad
\mathbf{G}_c = \mathbf{G}_p^T.
\end{equation}
\subsection{H-GABI-FBS Algorithm $(r=\{0,1\})$: Backward Recursion}
Running Algorithms~\ref{alg:ForwardDyn} yields the 0-th order forward dynamics $(\dot{\boldsymbol{V}}_{B_{1}}^0,\ddot{\boldsymbol{q}}_{j})$ and their time derivatives $(\ddot{\boldsymbol{V}}_{B_{1}}^0,\dddot{\boldsymbol{q}}_{j})$ for each body in the graph of Fig.~\ref{main_vehicle}(b). Table~\ref{tab:fd_backward_recursion} illustrates the backward recursion, executed in parallel for each $j\in P(4,1)$ and $j\in P(7,1)$, starting from 4 and 7, respectively, and ending at 1.

Observe that the 1-st order forward dynamics block requires calculating both the backward and forward recursions of the 0-th order forward dynamics first, as $(\dot{\boldsymbol{V}}_{B_{1}}^0,\ddot{\boldsymbol{q}}_{j})$ and $(\mathbf{M}_{B_{j}}^A,\boldsymbol{W}_{B_{j}}^A)$ are passed forward from $r=0$ to $r=1$.

\subsection{H-GABI-FBS Algorithm $(r=\{0,1\})$: Forward Recursion}
The algorithm concludes by executing the expressions in Table~\ref{tab:fd_forward_recursion} for each $j\in P(1,4)$ and $j\in P(1,7)$ in parallel, starting from 1 until reaching 4 and 7, respectively.

\begin{figure*}
    \centering
    \begin{subfigure}{1.01\linewidth}
        \centering
        \includegraphics[width=1.015\linewidth]{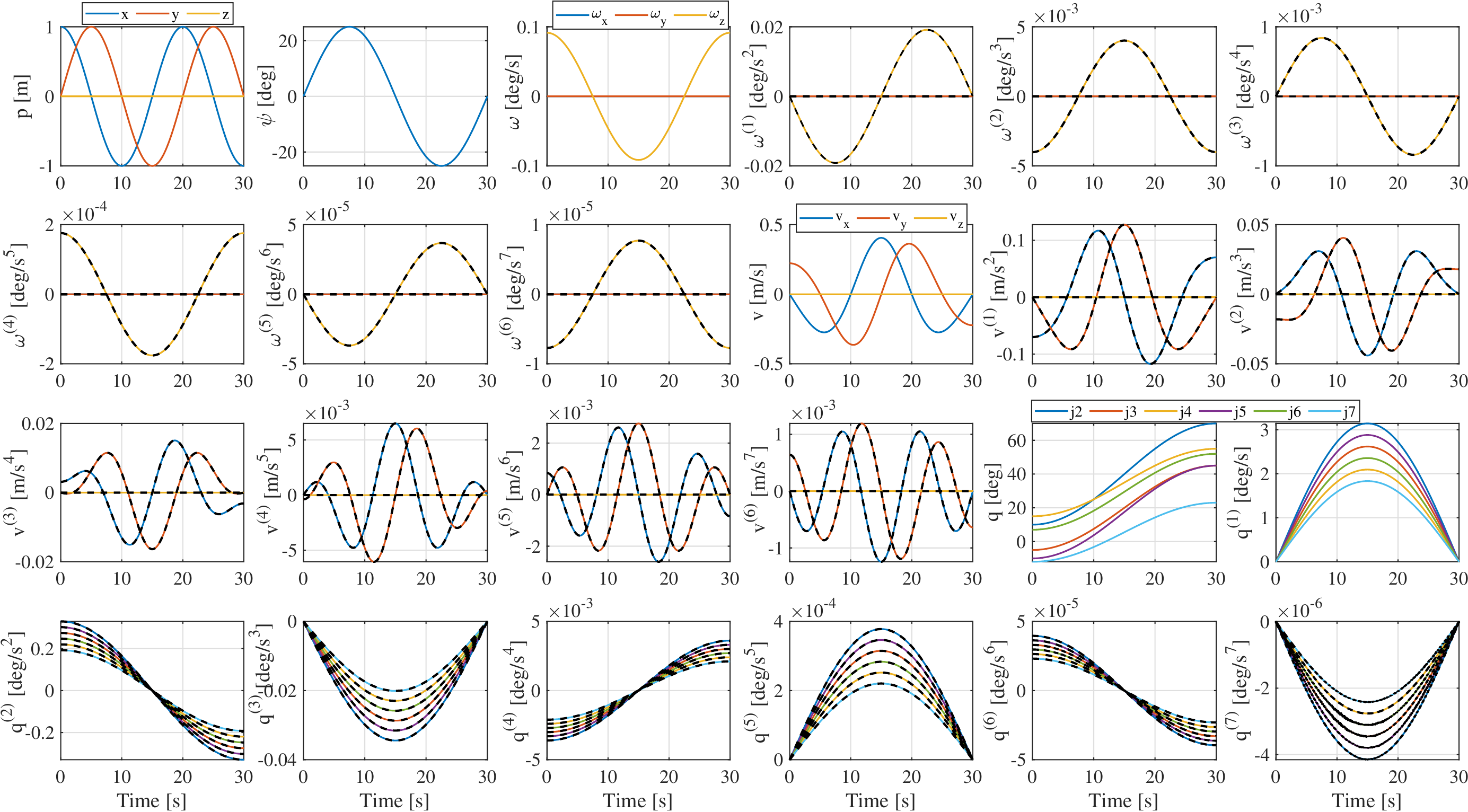}
        \caption{Input to H-GRNE-FBS (solid) and output of H-GABI-FBS (dashed) for $r=0$ to $r=5$}
        \label{fig:IDinput}
    \end{subfigure}
    \vspace{0.5em}
    \begin{subfigure}{1.01\linewidth}
        \centering
        \includegraphics[width=1.015\linewidth]{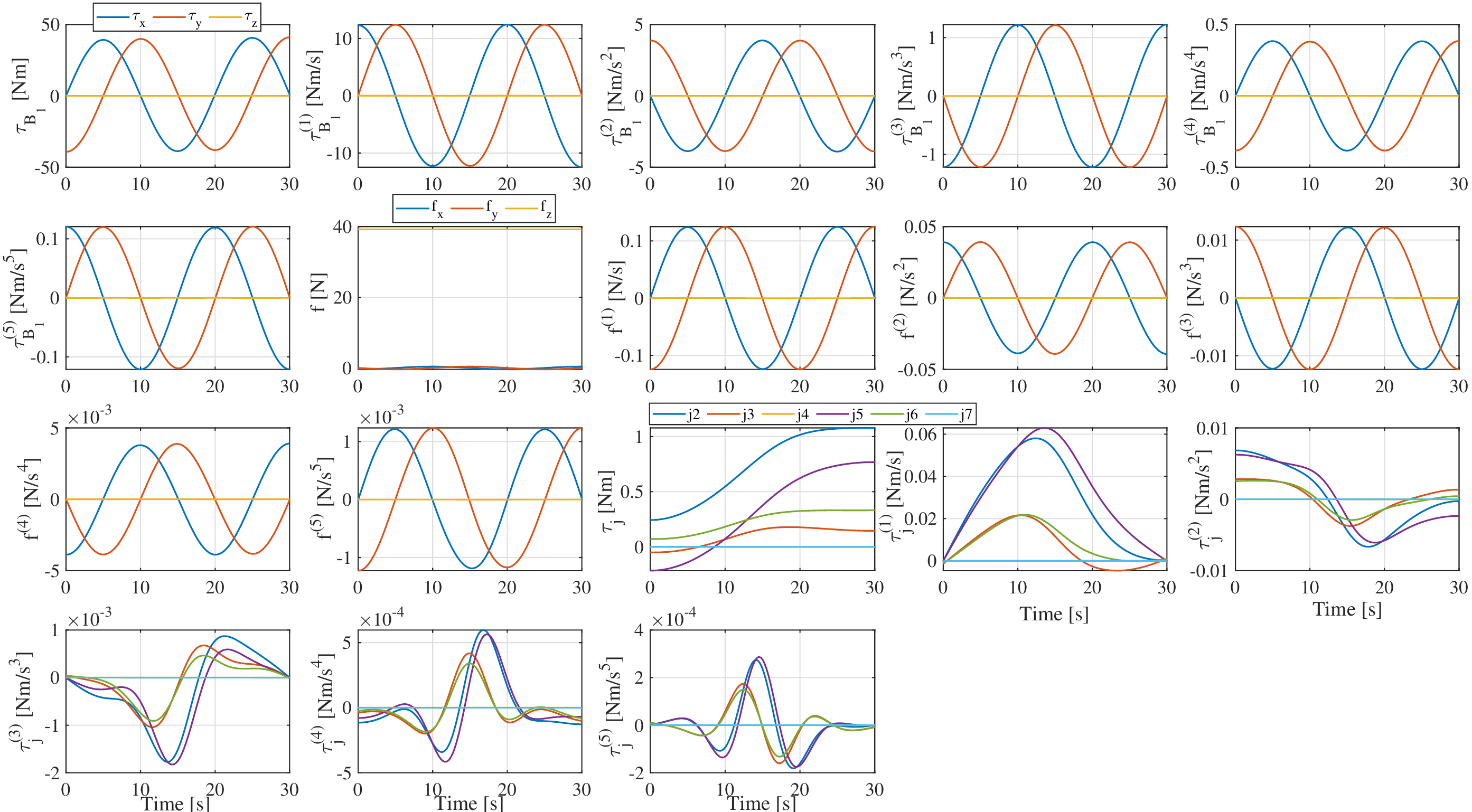}
        \caption{Output of H-GRNE-FBS (which is given as input to H-GABI-FBS as well) for $r=0$ to $r=5$}
        \label{fig:IDoutput}
    \end{subfigure}
    
    \caption{{{Inputs and outputs of H-GRNE-FBS and H-GABI-FBS up to order $r=5$. $\boldsymbol{W}^{0}_{B_1,\mathrm{prop}}=(\tau_{B_1},f)$ is obtained via $Q_1$ while $\tau_j$ is joint $j$ torque given by $Q_j$ from Alog. \ref{alg:Dyn}. Output of H-GABI-FBS up to r=5 is shown in dashed lines in (a). Observe that it retrieved back the input trajectories, as demonstrated by the dashed trajectories coinciding with their solid counterparts in (a), when given the state $(C^0_{B_1}(t),V^0_{B_1}(t),q(t))$, base wrench and joint torques in (b), verifying successful operation of our higher-order algorithms. For $r=0,\ldots,5$, the average runtime per call for H-GRNE-FBS is $(2.326,3.078,2.867,4.355,5.960,8.171)[\mu\text{s}]$, respectively, while for H-GABI-FBS it is $(4.106,4.802,6.486,6.461,8.589,11.085)[\mu\text{s}]$, respectively.}}}
    \label{fig:IDinout}
\end{figure*}
\begin{figure}[!t]
    \centering
    \begin{subfigure}{1\columnwidth}
        \centering
        \includegraphics[width=1\columnwidth]{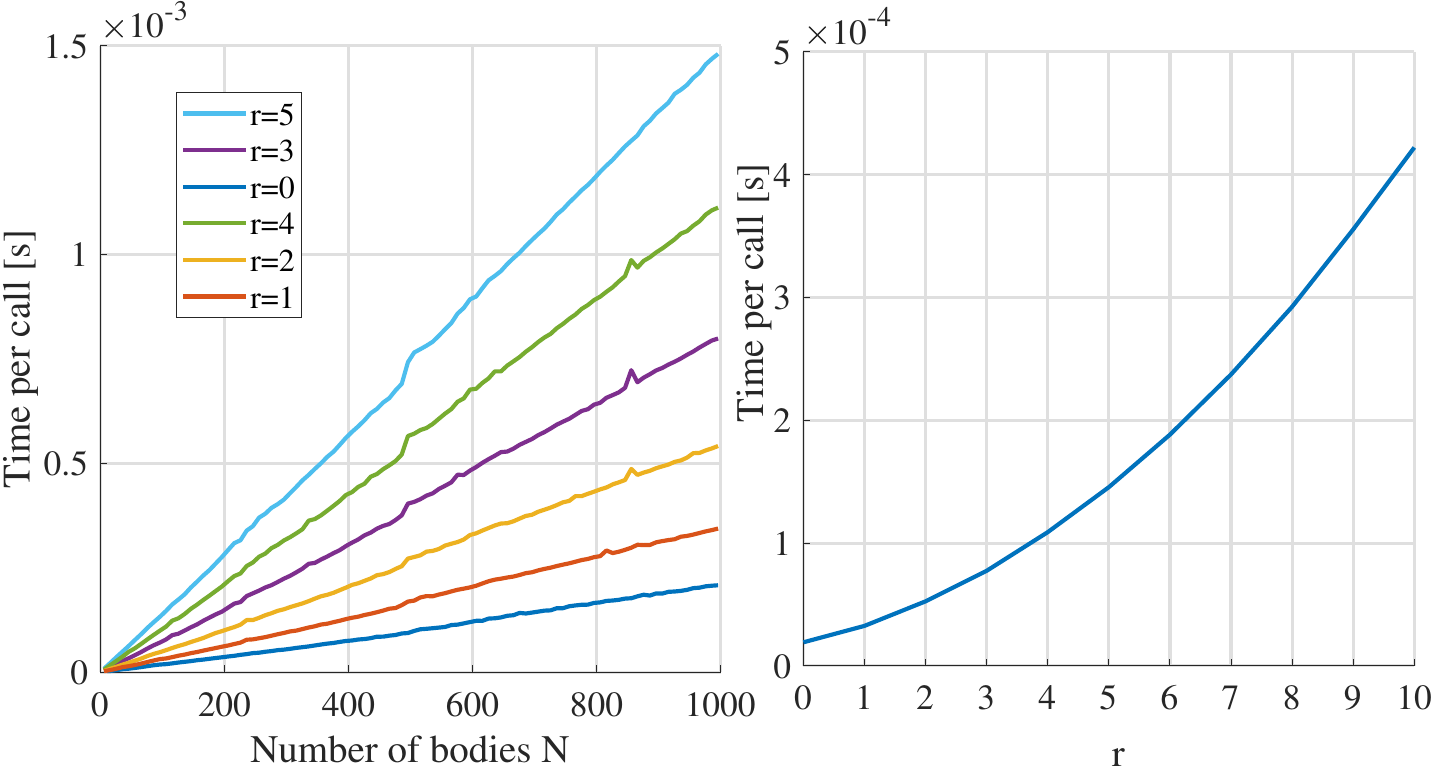}
        \caption{H-GRNE-FBS}
        \label{fig:IDbenchmark}
    \end{subfigure}
    \vspace{0.5em}
    \begin{subfigure}{1\columnwidth}
        \centering
        \includegraphics[width=1\columnwidth]{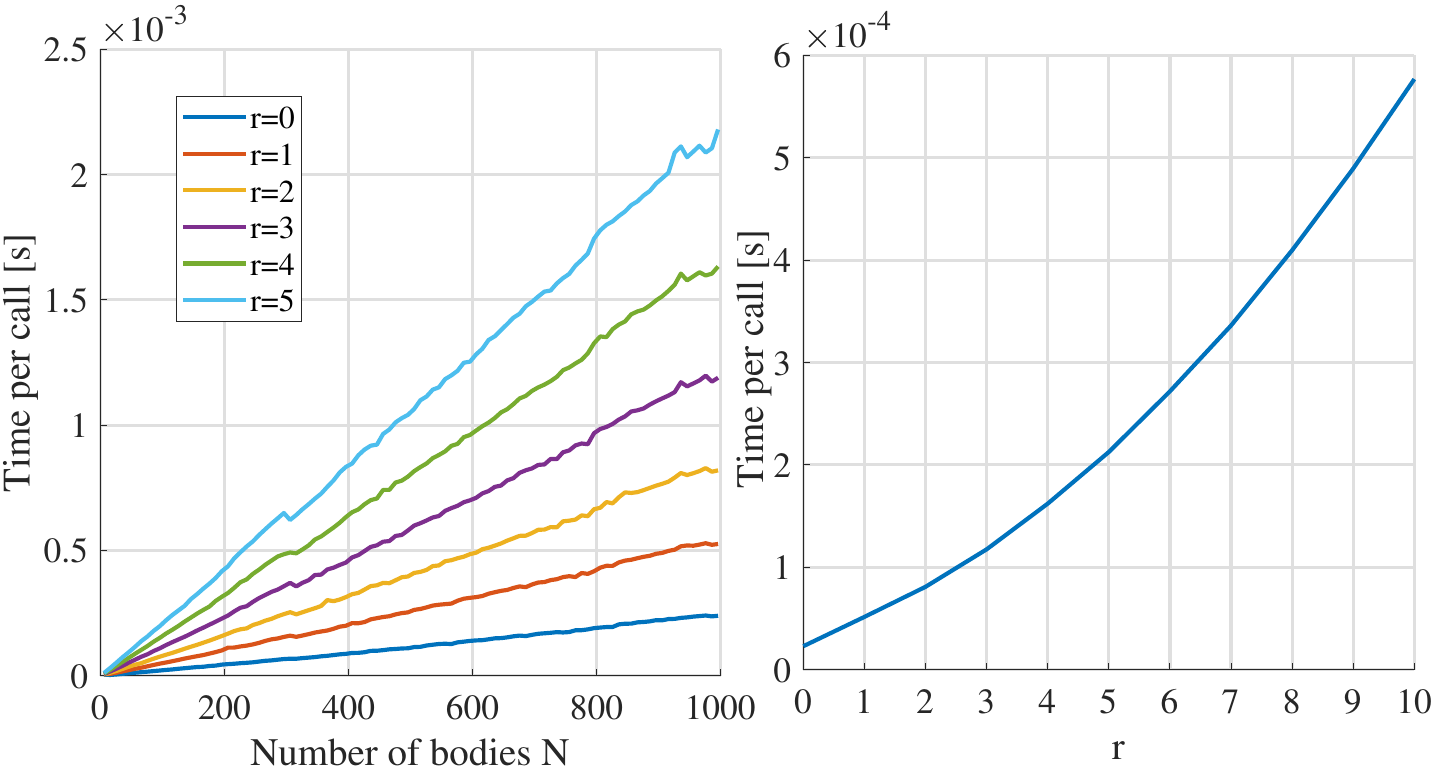}
        \caption{H-GABI-FBS}
        \label{fig:FDbenchmark}
    \end{subfigure}
\vspace{0.5em}
\begin{subfigure}{1\columnwidth}
        \centering
    \includegraphics[width=0.6\columnwidth]{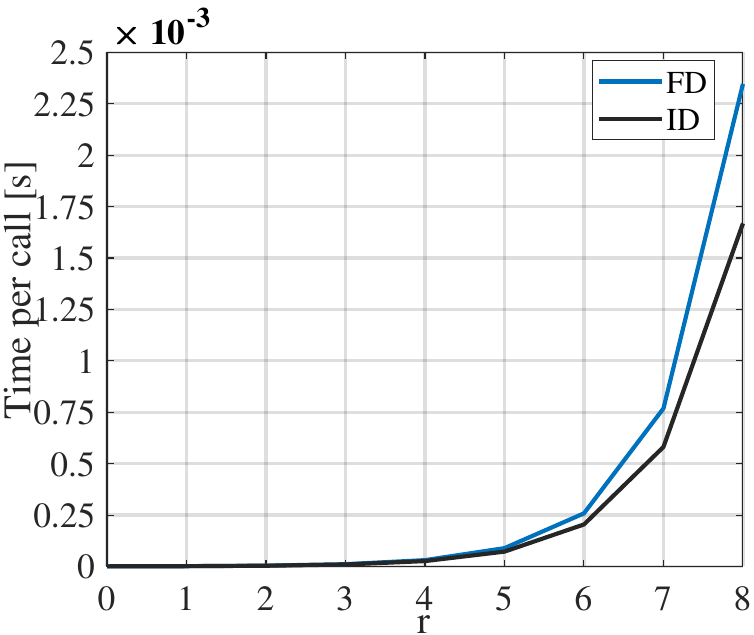}
    \caption{CasADi}
    \label{fig:CasadAD}
    \end{subfigure}
    \caption{{{Benchmark results demonstrate that time differentiating geometric dynamics of floating-base robots through our proposed methods becomes significantly faster than automatic differentiation when the order increases for the same tree size.}}}
    \label{fig:benchmark}
\end{figure}

\begin{table*}
\centering
\caption{{{Application of H-GRNE-FBS and H-GABI-FBS to aerial manipulator in Fig. \ref{main_vehicle}}}}
\label{tab:recursive_all}

\begin{subtable}{\textwidth}
\centering
\caption{Recursive Kinematic Relations up to Third Order}
\label{tab:kinematics_orders}
\renewcommand{\arraystretch}{1.3}
\begin{tabular}{|p{0.47\textwidth}|p{0.47\textwidth}|}
\hline
\textbf{0-th Order (Position Kinematics)} & 
\textbf{1-st Order (Velocity Kinematics)} \\ 
\hline
$\mathbf{A}^0_{B_j} = \prod_{{B_{i}}\in P(1,j)} \mathbf{A}^{B_{p(i)}}_{B_{i}}, \quad 
\mathbf{F}_{B_{j}} = \exp(\mathbf{Y}_{j} q_{j}), \quad 
\mathbf{F}_{B_{1}} = \mathbf{I}_4$ &
$\boldsymbol{V}^{0}_{B_{j}} = \boldsymbol{V}^{0}_{B_{1}} + 
\sum_{{i}\in E_{P(1,j)}} \boldsymbol{S}_{q_i}\,\dot{q}_{i}$ \\[4pt]
$\mathbf{C}^0_{B_j} = \mathbf{C}^0_{B_1}
\!\left(\prod_{{B_{i}}\in P(1,j)} \mathbf{F}_{B_{i}}\right)
\!\mathbf{A}^0_{B_j}$ &
$\boldsymbol{\Pi}^{0}_{B_{j}} = \mathbf{M}^0_{B_j} \boldsymbol{V}^{0}_{B_j}$ \\[4pt]
$(p_j\,1)^T = \mathbf{A}^0_{B_j} (\Bar{\boldsymbol{p}}_j\,1)^T, \;
\mathbf{Y}_{j} = (\boldsymbol{e}_j, [p_j]\boldsymbol{e}_j), \;
\boldsymbol{S}_{q_j} = \boldsymbol{\mathrm{Ad}}_{\mathbf{C}^0_{B_j} (\mathbf{A}^0_{B_j})^{-1}}\, \mathbf{Y}_{j}$ &
$\dot{\boldsymbol{S}}_{q_j} = \boldsymbol{\mathrm{ad}}_{\boldsymbol{V}^{0}_{B_j}}\,\boldsymbol{S}_{q_j}$ \\
\hline
\textbf{2-nd Order (Acceleration Kinematics)} &
\textbf{3-rd Order (Jerk Kinematics)} \\ 
\hline
$\dot{\boldsymbol{V}}^{0}_{B_j} = \dot{\boldsymbol{V}}^{0}_{B_1} +  
\sum_{{i}\in E_{P(1,j)}} (\boldsymbol{S}_{q_i}\,\ddot{q}_{i} +  
\dot{\boldsymbol{S}}_{q_i}\,\dot{q}_{i})$ &
$\ddot{\boldsymbol{V}}^{0}_{B_j} = \ddot{\boldsymbol{V}}^{0}_{B_1} +
\sum_{{i}\in E_{P(1,j)}} (\boldsymbol{S}_{q_i}\,\dddot{q}_{i} +
2\,\dot{\boldsymbol{S}}_{q_i}\,\ddot{q}_{i} + \ddot{\boldsymbol{S}}_{q_i}\,\dot{q}_{i})$ \\[4pt]
$\dot{\boldsymbol{\Pi}}^{0}_{B_j} =
\mathbf{M}^0_{B_j} \dot{\boldsymbol{V}}^{0}_{B_j} +
\dot{\mathbf{M}}^0_{B_j} \boldsymbol{V}^{0}_{B_j}$ &
$\ddot{\boldsymbol{\Pi}}^{0}_{B_j} =
\mathbf{M}^0_{B_j} \ddot{\boldsymbol{V}}^{0}_{B_j} +
2 \dot{\mathbf{M}}^0_{B_j} \dot{\boldsymbol{V}}^{0}_{B_j} +
\ddot{\mathbf{M}}^0_{B_j} \boldsymbol{V}^{0}_{B_j}$ \\[4pt]
$\ddot{\boldsymbol{S}}_{q_j} = 
\boldsymbol{\mathrm{ad}}_{\dot{\boldsymbol{V}}^{0}_{B_j}} \boldsymbol{S}_{q_j} +
\boldsymbol{\mathrm{ad}}_{\boldsymbol{V}^{0}_{B_j}}\,\dot{\boldsymbol{S}}_{q_j}$ &
$\dddot{\boldsymbol{S}}_{q_j} =
\boldsymbol{\mathrm{ad}}_{\ddot{\boldsymbol{V}}^{0}_{B_j}} \boldsymbol{S}_{q_j} +
2\,\boldsymbol{\mathrm{ad}}_{\dot{\boldsymbol{V}}^{0}_{B_j}} \dot{\boldsymbol{S}}_{q_j} +
\boldsymbol{\mathrm{ad}}_{\boldsymbol{V}^{0}_{B_j}}\,\ddot{\boldsymbol{S}}_{q_j}$ \\
\hline
\end{tabular}
\end{subtable}

\vspace{0.8em}

\begin{subtable}{\textwidth}
\centering
\caption{Recursive Inverse Dynamics Relations up to First Order}
\label{tab:inverse_dynamics}
\renewcommand{\arraystretch}{1.3}
\begin{tabular}{|p{0.47\textwidth}|p{0.47\textwidth}|}
\hline
\textbf{0-th Order Inverse Dynamics} &
\textbf{1-st Order Inverse Dynamics} \\
\hline
$\Tilde{\boldsymbol{W}}_{j} =
\dot{\boldsymbol{\Pi}}^0_{B_{j}} -
\boldsymbol{W}_{\mathrm{grav},B_{j}}^0 -
\boldsymbol{W}_{\mathrm{ext},B_{j}}^0, \quad
\boldsymbol{Q}_{j} = \Tilde{\boldsymbol{W}}_{j} +
\sum_{{i}\in c({j})} \boldsymbol{Q}_{i}$ &
$\dot{\Tilde{\boldsymbol{W}}}_{j} =
\ddot{\boldsymbol{\Pi}}^0_{B_{j}} -
\dot{\boldsymbol{W}}_{\mathrm{grav},B_{j}}^0 -
\dot{\boldsymbol{W}}_{\mathrm{ext},B_{j}}^0, \quad
\dot{\boldsymbol{Q}}_{j} = \dot{\Tilde{\boldsymbol{W}}}_{j} +
\sum_{{i}\in c({j})} \dot{\boldsymbol{Q}}_{i}$ \\[6pt]
$\Bar{\boldsymbol{\tau}}_{j} =
\boldsymbol{S}_{q_{j}}^{T} \boldsymbol{Q}_{j}$ &
$\dot{\Bar{\boldsymbol{\tau}}}_{j} =
\dot{\boldsymbol{S}}_{q_{j}}^{T} \boldsymbol{Q}_{j} +
\boldsymbol{S}_{q_{j}}^{T} \dot{\boldsymbol{Q}}_{j}$ \\[6pt]
\text{When $j=1$:}\;
$\boldsymbol{W}^{B_1}_{B_1,\mathrm{prop}} =
\boldsymbol{\mathrm{Ad}}^{T}_{\mathbf{C}^{0}_{B_1}} \boldsymbol{Q}_{B_1}$ &
\text{When $j=1$:}\;
$\dot{\boldsymbol{W}}^{B_1}_{B_1,\mathrm{prop}} =
\boldsymbol{\mathrm{Ad}}^{T}_{\mathbf{C}^{0}_{B_1}}
\!\left( \boldsymbol{\mathrm{ad}}^{T}_{\boldsymbol{V}^{0}_{B_1}}
\boldsymbol{Q}_{B_1} + \dot{\boldsymbol{Q}}_{B_1} \right)$ \\[6pt]
\hline
\end{tabular}
\end{subtable}

\vspace{0.8em}

\begin{subtable}{\textwidth}
\centering
\caption{Recursive Forward Dynamics (Backward Recursion) up to First Order}
\label{tab:fd_backward_recursion}
\renewcommand{\arraystretch}{1.1}
\resizebox{\textwidth}{!}{%
\begin{tabular}{|p{0.47\textwidth}|p{0.47\textwidth}|}
\hline
\textbf{0-th Order Forward Dynamics (Backward Recursion)} &
\textbf{1-st Order Forward Dynamics (Backward Recursion)} \\
\hline
$\begin{aligned}
\Tilde{\boldsymbol{W}}_{B_j} &= 
-\,\boldsymbol{\mathrm{ad}}^{T}_{\boldsymbol{V}^{0}_{B_j}}\,\mathbf{M}^0_{B_j}\boldsymbol{V}^{0}_{B_j}
 - \boldsymbol{W}_{\mathrm{grav},B_{j}}^0 - \boldsymbol{W}_{\mathrm{ext},B_{j}}^0,\\
\Tilde{\mathbf{M}}_{B_j} &= \mathbf{M}^0_{B_j},\\
\mathbf{M}_{B_j}^A &= \Tilde{\mathbf{M}}_{B_j} 
+ \sum_{{i}\in c({j})}\!\left(
\mathbf{M}_{B_i}^A - 
\mathbf{M}_{B_i}^A \boldsymbol{S}_{q_i}
\big(\boldsymbol{S}_{q_i}^T \mathbf{M}_{B_i}^A \boldsymbol{S}_{q_i}\big)^{-1}
\boldsymbol{S}_{q_i}^T \mathbf{M}_{B_i}^A
\right),\\
\boldsymbol{W}_{B_j}^A &= 
\Tilde{\boldsymbol{W}}_{B_j} + 
\sum_{{i}\in c({j})} \boldsymbol{W}_{B_i}^A
+ \mathbf{M}_{B_i}^A \!\left( 
\boldsymbol{S}_{q_i}\,\ddot{\Tilde{q}}_{i} + 
\dot{\boldsymbol{S}}_{q_i}\,\dot{q}_{i}\right),\\
\ddot{\Tilde{q}}_{i} &= 
\big(\boldsymbol{S}_{q_i}^T \mathbf{M}^A_{B_i} \boldsymbol{S}_{q_i}\big)^{-1}
\!\left(\Bar{\boldsymbol{\tau}}_{i} - 
\boldsymbol{S}_{q_i}^T \big(\mathbf{M}^A_{B_i}\dot{\boldsymbol{S}}_{q_i}\dot{q}_{i}+ 
\boldsymbol{W}_{B_i}^A\big)\right)
\end{aligned}$ &
$\begin{aligned}
\dot{\Tilde{\boldsymbol{W}}}_{B_j} &= 
2\,\dot{\mathbf{M}}^0_{B_j}\,\dot{\boldsymbol{V}}_{B_j}^0
 + \ddot{\mathbf{M}}^0_{B_j}\,\boldsymbol{V}_{B_j}^0
 - \dot{\boldsymbol{W}}_{\mathrm{grav},B_{j}}^0
 - \dot{\boldsymbol{W}}_{\mathrm{ext},B_{j}}^0,\\
\dot{\boldsymbol{W}}_{B_j}^A &= 
\dot{\Tilde{\boldsymbol{W}}}_{B_j}
+ \sum_{{i}\in c({j})}\dot{\boldsymbol{W}}_{B_i}^A
+ \mathbf{M}_{B_i}^A \!\left(
\boldsymbol{S}_{q_i}\,\dddot{\Tilde{q}}_{i}
 + 2\,\dot{\boldsymbol{S}}_{q_i}\,\ddot{q}_{i}
 + \ddot{\boldsymbol{S}}_{q_i}\,\dot{q}_{i}\right),\\
\dddot{\Tilde{q}}_{i} &= 
\big(\boldsymbol{S}_{q_i}^T \mathbf{M}^A_{B_i} \boldsymbol{S}_{q_i}\big)^{-1}\!\Big(
\dot{\Bar{\boldsymbol{\tau}}}_{i}-\dot{\Tilde{\boldsymbol{\tau}}}_{i}
 - \boldsymbol{S}_{q_i}^T\big(
 \mathbf{M}^A_{B_i}\,(2\,\dot{\boldsymbol{S}}_{q_i}\,\ddot{q}_{i}
 \\& \hspace{5cm}+ \ddot{\boldsymbol{S}}_{q_i}\,\dot{q}_{i})+
 \dot{\boldsymbol{W}}_{B_i}^A\big)
\Big),\\
\dot{\Tilde{\boldsymbol{\tau}}}_{i} &= 
\dot{\boldsymbol{S}}_{q_i}^T\big(
\mathbf{M}_{B_i}^A\,\dot{\boldsymbol{V}}_{B_i}^0 +
\boldsymbol{W}_{B_i}^A\big)
\end{aligned}$
\\
\hline
\end{tabular}%
}
\end{subtable}

\vspace{0.8em}

\begin{subtable}{\textwidth}
\centering
\caption{Recursive Forward Dynamics (Forward Recursion) up to First Order}
\label{tab:fd_forward_recursion}
\renewcommand{\arraystretch}{1.1}
\resizebox{\textwidth}{!}{%
\begin{tabular}{|p{0.47\textwidth}|p{0.47\textwidth}|}
\hline
\textbf{0-th Order Forward Dynamics (Forward Recursion)} &
\textbf{1-st Order Forward Dynamics (Forward Recursion)} \\ 
\hline
$\begin{aligned}
&\text{when } j=1:\;
\dot{\boldsymbol{V}}_{B_1}^0 = 
(\mathbf{M}_{B_1}^A)^{-1}
\Big( \boldsymbol{\mathrm{Ad}}^{-T}_{\mathbf{C}^{0}_{B_1}}
\boldsymbol{W}^{B_1}_{B_1,\mathrm{prop}}
- \boldsymbol{W}_{B_1}^A \Big),\\
&\ddot{\boldsymbol{q}}_{j} =
- \big( \boldsymbol{S}_{q_j}^T \mathbf{M}^A_{B_j} \boldsymbol{S}_{q_j} \big)^{-1}
\boldsymbol{S}_{q_j}^T \mathbf{M}^A_{B_j}
\dot{\boldsymbol{V}}^0_{B_{p(j)}} + \ddot{\Tilde{\boldsymbol{q}}}_{j},\\
&\dot{\boldsymbol{V}}_{B_j}^0 =
\dot{\boldsymbol{V}}^{0}_{B_1} +
\sum_{{i}\in E_{P(1,j)}}
\big( \boldsymbol{S}_{q_i}\,\ddot{q}_{i} +
\dot{\boldsymbol{S}}_{q_i}\,\dot{q}_{i} \big)
\end{aligned}$ &
$\begin{aligned}
&\text{when } j=1:\;
\ddot{\boldsymbol{V}}_{B_1}^0 =
(\mathbf{M}_{B_1}^A)^{-1} \Big(
\boldsymbol{\mathrm{Ad}}^{-T}_{\mathbf{C}^{0}_{B_1}}
\dot{\boldsymbol{W}}^{B_1}_{B_1,\mathrm{prop}}
- \dot{\boldsymbol{W}}_{B_1}^A
\\& \hspace{3.5cm}- \boldsymbol{\mathrm{ad}}^{T}_{\boldsymbol{V}^{0}_{B_1}}
(\mathbf{M}_{B_1}^A \dot{\boldsymbol{V}}_{B_1}^0 +
\boldsymbol{W}_{B_1}^A) \Big),\\
&\dddot{\boldsymbol{q}}_{j} =
- \big( \boldsymbol{S}_{q_j}^T \mathbf{M}^A_{B_j} \boldsymbol{S}_{q_j} \big)^{-1}
\boldsymbol{S}_{q_j}^T \mathbf{M}^A_{B_j}
\ddot{\boldsymbol{V}}^0_{B_{p(j)}} +
\dddot{\Tilde{\boldsymbol{q}}}_{j},\\
&\ddot{\boldsymbol{V}}_{B_j}^0 =
\ddot{\boldsymbol{V}}^{0}_{B_1} +
\sum_{{i}\in E_{P(1,j)}}
\big( \boldsymbol{S}_{q_i}\,\dddot{q}_{i} +
2\,\dot{\boldsymbol{S}}_{q_i}\,\ddot{q}_{i} +
\ddot{\boldsymbol{S}}_{q_i}\,\dot{q}_{i} \big)
\end{aligned}$
\\
\hline 
\end{tabular}%
}
\end{subtable}

\end{table*}
\section{Conclusion}
\label{conc}
We presented novel extensions of the three widely-used $O(N)$ algorithms in the literature on rigid-body dynamics—the recursive Newton-Euler algorithm for inverse dynamics, the Articulated Body Inertia method for forward dynamics and hybrid dynamics algorithm—that compute geometrically-exact higher-order forward and inverse dynamics for floating-base open kinematics systems, where the base configuration is in $SE(3)$, in a compact and computationally-efficient form. This is accomplished drawing upon tools from lie groups and screw theory extending the approaches in \cite{Coordinate-inva,featherstone2014rigid,spatialtwist} which are intended for fixed-base manipulators or a floating-base system with some parameterization of $SE(3)$. Closed-form EoM and their first time derivatives were then derived from quantities evaluated in the recursive algorithm. We also showed how an admissible Coriolis matrix satisfying the passivity property can conveniently be constructed from such quantities. {{An interesting insight reducing computational burden is that the articulated body inertia assumes the same value in any order. Moreover, the developed theory is applied to simulate the higher order dynamics of a 12-DoF floating base system, showing a successful comparably fast implementation of the algorithms. Benchmarking indicates our methods can be applied in practice and preferable over automatic differentiation since they converge faster and scale better}}. Future work may focus on expanding the class of floating-base systems considered herein to include the case of floating-base systems with closed kinematics chains similar to \cite{closedKin}, and where the attached mechanism having a different configuration than the manifold $\mathbb{T}^{n_1}\times \mathbb{R}^{n_2}$, adapting these algorithms to compute the partial derivatives of the higher-order forward dynamics, and {{carrying a comparison against body-frame or hybrid-frame higher-order formulations for the considered class of robots to highlight the trade-offs in compactness and computational complexities.}}  
\section*{Acknowledgement}
This work was partially funded by the Horizon Europe research agreement no. 101120732 (AUTOASSESS) and by the NWO OTP project AVIATOR.






\selectlanguage{english} 



\bibliographystyle{asmejour}   

\bibliography{Bibliography} 

\appendix

\section{{{Proof of Lemma 1}}}
\label{append:proof_Lemma1}
    The proof follows from direct computation of the expression $\tfrac{1}{2}\dot{\Bar{\mathbf{M}}}-\mathbf{C}$ with $\Bar{\mathbf{M}}$ defined in \eqref{eqsprop}.
    \begin{equation}
    \begin{split}
        \tfrac{1}{2}\dot{\Bar{\mathbf{M}}}-\mathbf{C}=&\tfrac{1}{2}(\dot{\mathbf{S}}^T \mathbf{G}_c \mathbf{M}  \mathbf{G}_p \mathbf{S}+ \mathbf{S}^T \mathbf{G}_c \dot{\mathbf{M}}  \mathbf{G}_p \mathbf{S}+\mathbf{S}^T \mathbf{G}_c \mathbf{M}  \mathbf{G}_p \dot{\mathbf{S}})\\
        &-\mathbf{S}^T \mathbf{G}_c \mathbf{M} \mathbf{G}_p \dot{\mathbf{S}} + \mathbf{S}^T \mathbf{G}_c \boldsymbol{\mathrm{ad}}^T_{\mathbf{G}_p \mathbf{S} {{\boldsymbol{\nu}}}} \mathbf{M} \mathbf{G}_p \mathbf{S}\\
        =&\tfrac{1}{2}\Big(\dot{\mathbf{S}}^T \mathbf{G}_c \mathbf{M}  \mathbf{G}_p \mathbf{S}+\mathbf{S}^T \mathbf{G}_c \mathbf{M}  \mathbf{G}_p \dot{\mathbf{S}}\\
        &- \mathbf{S}^T \mathbf{G}_c (\mathbf{M}\boldsymbol{\mathrm{ad}}_{\boldsymbol{V}}+\boldsymbol{\mathrm{ad}}^T_{\boldsymbol{V}}\mathbf{M})  \mathbf{G}_p \mathbf{S}\Big)\\
        &-\mathbf{S}^T \mathbf{G}_c \mathbf{M} \mathbf{G}_p \dot{\mathbf{S}} +\mathbf{S}^T \mathbf{G}_c \boldsymbol{\mathrm{ad}}^T_{\boldsymbol{V}} \mathbf{M} \mathbf{G}_p \mathbf{S}.
    \end{split}
    \end{equation}
    A straightforward calculations show that $\mathbf{A}_1$ and $\mathbf{A}_2$ are skew-symmetric matrices, provided that $\mathbf{G}_c=\mathbf{G}_p^T$ and $\mathbf{M}^T=\mathbf{M}$, which is indeed the case for the multibody system in question. By using the fact that the difference of skew-symmetric matrices yields a skew-symmetric matrix, hence $\mathbf{A}_1-\mathbf{A}_2$ is skew-symmetric. As a result, $\tfrac{1}{2}\dot{\Bar{\mathbf{M}}}-\mathbf{C}$ is skew-symmetric.
\section{{{Proof of Proposition 1}}} 
\label{prop1_proof}
Observe that the 1st and 2nd order kinematics can be expressed as
\begin{align}
        \boldsymbol{V}&= \mathbf{G}_p\, \mathbf{S}\, {{\boldsymbol{\nu}}},\\
        \dot{\boldsymbol{V}}&= \mathbf{G}_p\, \mathbf{S}\, {{\dot{\boldsymbol{\nu}}}} + \mathbf{G}_p\, \boldsymbol{\mathrm{ad}}_{\mathbf{G}_p \mathbf{S} {{\boldsymbol{\nu}}}} \, \mathbf{S}\, {{\boldsymbol{\nu}}},\\
        \dot{\boldsymbol{\Pi}}&= \mathbf{G}_c \, \dot{\boldsymbol{\Pi}}_{I} 
        = \mathbf{G}_c \mathbf{M} \dot{\boldsymbol{V}} - \mathbf{G}_c \boldsymbol{\mathrm{ad}}^T_{\mathbf{G}_p \mathbf{S} {{\boldsymbol{\nu}}}} \mathbf{M} \boldsymbol{V} \nonumber\\
        &= \mathbf{G}_c \mathbf{M}  \mathbf{G}_p\, \mathbf{S}\, {{\dot{\boldsymbol{\nu}}}} + \mathbf{G}_c \mathbf{M} \mathbf{G}_p \boldsymbol{\mathrm{ad}}_{\mathbf{G}_p \mathbf{S} {{\boldsymbol{\nu}}}} \, \mathbf{S}\, {{\boldsymbol{\nu}}} - \mathbf{G}_c \boldsymbol{\mathrm{ad}}^T_{\mathbf{G}_p \mathbf{S} {{\boldsymbol{\nu}}}} \mathbf{M} \mathbf{G}_p\, \mathbf{S}\, {{\boldsymbol{\nu}}}.
\end{align}
For the 0-th order gravitational and external wrenches we have 
\begin{align}
        \boldsymbol{F}_{\mathrm{app}}&= - \mathbf{G}_c \,\boldsymbol{W}_{I,\mathrm{app}}^0,\,\,
        \boldsymbol{F}_{\mathrm{grav}}= - \mathbf{G}_c \,\boldsymbol{W}_{I,\mathrm{grav}}^0.
\end{align}
Thus, 0-th order inverse dynamics admits the closed form
\begin{equation}
\begin{aligned}
        \Bar{\boldsymbol{\tau}} &= \mathbf{S}^T(\dot{\boldsymbol{\Pi}} + \boldsymbol{F}_{\mathrm{app}} + \boldsymbol{F}_{\mathrm{grav}})\\
        &= \underbrace{\mathbf{S}^T \mathbf{G}_c \mathbf{M}  \mathbf{G}_p\, \mathbf{S}}_{\Bar{\mathbf{M}}({{\boldsymbol{\mathcal{Q}}}})}\, {{\dot{\boldsymbol{\nu}}}} 
        + \underbrace{
        \begin{aligned}[t]
        &\mathbf{S}^T \mathbf{G}_c \mathbf{M} \mathbf{G}_p \boldsymbol{\mathrm{ad}}_{\mathbf{G}_p \mathbf{S} {{\boldsymbol{\nu}}}} \mathbf{S}\, {{\boldsymbol{\nu}}} \\
        &- \mathbf{S}^T \mathbf{G}_c \boldsymbol{\mathrm{ad}}^T_{\mathbf{G}_p \mathbf{S} {{\boldsymbol{\nu}}}} \mathbf{M} \mathbf{G}_p\, \mathbf{S}\, {{\boldsymbol{\nu}}}
        \end{aligned}
        }_{\boldsymbol{h}({{\boldsymbol{\mathcal{Q}}}},{{\boldsymbol{\nu}}})} \\
        &- \underbrace{\mathbf{S}^T \mathbf{G}_c \,\boldsymbol{W}_{I,\mathrm{grav}}^0}_{\boldsymbol{g}({{\boldsymbol{\mathcal{Q}}}})}  
         - \underbrace{\mathbf{S}^T \mathbf{G}_c \,\boldsymbol{W}_{I,\mathrm{app}}^0}_{\boldsymbol{\tau}_{\mathrm{ext}}({{\boldsymbol{\mathcal{Q}}}})} \\
        & = \Bar{\mathbf{M}}({{\boldsymbol{\mathcal{Q}}}}) {{\dot{\boldsymbol{\nu}}}}+\boldsymbol{h}({{\boldsymbol{\mathcal{Q}}}}, {{\boldsymbol{\nu}}})+\boldsymbol{g}({{\boldsymbol{\mathcal{Q}}}}) + \boldsymbol{\tau}_{\mathrm{ext}}({{\boldsymbol{\mathcal{Q}}}}).
\end{aligned}
\label{EoM}
\end{equation}
with the mass matrix, Coriolis and centrifugal forces, gravitational vector, and external torque defined as \eqref{eqsprop}. The Coriolis and centrifugal vector $\boldsymbol{h}$ can be further factorized into an admissible Coriolis matrix $\mathbf{C}$ as shown in \eqref{Coroli}.

Higher order inverse dynamics can follow the same procedure. To illustrate, we further derive the 1-st order inverse dynamics in a closed form. Notice that by taking the first time derivative of \eqref{EoM}, it yields 
\begin{equation}
       \Bar{\mathbf{M}}({{\boldsymbol{\mathcal{Q}}}}){{\ddot{\boldsymbol{\nu}}}}+\dot{\Bar{\mathbf{M}}}({{\boldsymbol{\mathcal{Q}}}},{{\boldsymbol{\nu}}}) {{\dot{\boldsymbol{\nu}}}}+\dot{\boldsymbol{h}}({{\boldsymbol{\mathcal{Q}}}},{{\boldsymbol{\nu}}}, {{\dot{\boldsymbol{\nu}}}})+\dot{\boldsymbol{g}}({{\boldsymbol{\mathcal{Q}}}},{{\boldsymbol{\nu}}})+\dot{\boldsymbol{\tau}}_{\mathrm{ext}}({{\boldsymbol{\mathcal{Q}}}},{{\boldsymbol{\nu}}})=\dot{\Bar{\boldsymbol{\tau}}}.
       \nonumber
\end{equation}
Let $\dot{\Bar{\boldsymbol{h}}}({{\boldsymbol{\mathcal{Q}}}},{{\boldsymbol{\nu}}}, {{\dot{\boldsymbol{\nu}}}}):=\dot{\Bar{\mathbf{M}}}({{\boldsymbol{\mathcal{Q}}}},{{\boldsymbol{\nu}}}) {{\dot{\boldsymbol{\nu}}}}+\dot{\boldsymbol{h}}({{\boldsymbol{\mathcal{Q}}}},{{\boldsymbol{\nu}}}, {{\dot{\boldsymbol{\nu}}}}) $, hence
\begin{equation}
       \Bar{\mathbf{M}}({{\boldsymbol{\mathcal{Q}}}}){{\ddot{\boldsymbol{\nu}}}}+\dot{\Bar{\boldsymbol{h}}}({{\boldsymbol{\mathcal{Q}}}},{{\boldsymbol{\nu}}}, {{\dot{\boldsymbol{\nu}}}})+\dot{\boldsymbol{g}}({{\boldsymbol{\mathcal{Q}}}},{{\boldsymbol{\nu}}})+\dot{\boldsymbol{\tau}}_{\mathrm{ext}}({{\boldsymbol{\mathcal{Q}}}},{{\boldsymbol{\nu}}})=\dot{\Bar{\boldsymbol{\tau}}}.
   \label{1_st_EoM}
\end{equation}

The 3rd order recursion in kinematics Algorithm \ref{alg:kine} in a matrix form is 
\begin{align}
        \Ddot{\boldsymbol{V}}=& \mathbf{G}_p \mathbf{S} {{\ddot{\boldsymbol{\nu}}}} + 2 \mathbf{G}_p \boldsymbol{\mathrm{ad}}_{\mathbf{G}_p \mathbf{S} {{\boldsymbol{\nu}}}} \mathbf{S}\, {{\dot{\boldsymbol{\nu}}}}\nonumber\\
        &+ \mathbf{G}_p( \boldsymbol{\mathrm{ad}}_{\mathbf{G}_p \mathbf{S} {{\dot{\boldsymbol{\nu}}}}}+\boldsymbol{\mathrm{ad}}_{\mathbf{G}_p \boldsymbol{\mathrm{ad}}_{\mathbf{G}_p \mathbf{S} {{\boldsymbol{\nu}}}} \mathbf{S} {{\boldsymbol{\nu}}}}+\boldsymbol{\mathrm{ad}}^2_{\mathbf{G}_p \mathbf{S} {{\boldsymbol{\nu}}}})\mathbf{S} {{\boldsymbol{\nu}}},\\
        \Ddot{\boldsymbol{\Pi}}=&\mathbf{G}_c \mathbf{M}\Big(\Ddot{\boldsymbol{V}}-\boldsymbol{\mathrm{ad}}_{\mathbf{G}_p \mathbf{S} {{\boldsymbol{\nu}}}}(\mathbf{G}_p \mathbf{S} {{\dot{\boldsymbol{\nu}}}} + \mathbf{G}_p \boldsymbol{\mathrm{ad}}_{\mathbf{G}_p \mathbf{S} {{\boldsymbol{\nu}}}} \mathbf{S} {{\boldsymbol{\nu}}})\Big) \nonumber \\ 
        &-2 \mathbf{G}_c \boldsymbol{\mathrm{ad}}^T_{\boldsymbol{V}} \dot{\boldsymbol{\Pi}}- \mathbf{G}_c\big(\boldsymbol{\mathrm{ad}}_{\dot{\boldsymbol{V}}}+\boldsymbol{\mathrm{ad}}^2_{\boldsymbol{V}}\big)^T\boldsymbol{\Pi} \nonumber\\
        =&\mathbf{G}_c \mathbf{M}\Big(\mathbf{G}_p \mathbf{S} {{\ddot{\boldsymbol{\nu}}}} + 2 \mathbf{G}_p \boldsymbol{\mathrm{ad}}_{\mathbf{G}_p \mathbf{S} {{\boldsymbol{\nu}}}} \mathbf{S}\, {{\dot{\boldsymbol{\nu}}}} \nonumber\\
        &+ \mathbf{G}_p( \boldsymbol{\mathrm{ad}}_{\mathbf{G}_p \mathbf{S} {{\dot{\boldsymbol{\nu}}}}}+\boldsymbol{\mathrm{ad}}_{\mathbf{G}_p \boldsymbol{\mathrm{ad}}_{\mathbf{G}_p \mathbf{S} {{\boldsymbol{\nu}}}} \mathbf{S} {{\boldsymbol{\nu}}}}+\boldsymbol{\mathrm{ad}}^2_{\mathbf{G}_p \mathbf{S} {{\boldsymbol{\nu}}}})\mathbf{S} {{\boldsymbol{\nu}}} \nonumber\\
        &-\boldsymbol{\mathrm{ad}}_{\mathbf{G}_p \mathbf{S} {{\boldsymbol{\nu}}}}(\mathbf{G}_p \mathbf{S} {{\dot{\boldsymbol{\nu}}}} + \mathbf{G}_p \boldsymbol{\mathrm{ad}}_{\mathbf{G}_p \mathbf{S} {{\boldsymbol{\nu}}}} \mathbf{S} {{\boldsymbol{\nu}}})\Big) \nonumber\\
        &-2 \mathbf{G}_c \boldsymbol{\mathrm{ad}}^T_{\mathbf{G}_p \mathbf{S} {{\boldsymbol{\nu}}}} \Big( \mathbf{M} \mathbf{G}_p \mathbf{S} {{\dot{\boldsymbol{\nu}}}} + \mathbf{M} \mathbf{G}_p \boldsymbol{\mathrm{ad}}_{\mathbf{G}_p \mathbf{S} {{\boldsymbol{\nu}}}} \mathbf{S} {{\boldsymbol{\nu}}}  -  \boldsymbol{\mathrm{ad}}^T_{\mathbf{G}_p \mathbf{S} {{\boldsymbol{\nu}}}} \mathbf{M} \mathbf{G}_p \mathbf{S} {{\boldsymbol{\nu}}} \Big)  \nonumber \\
        &- \mathbf{G}_c \big( \boldsymbol{\mathrm{ad}}_{\mathbf{G}_p \mathbf{S} {{\dot{\boldsymbol{\nu}}}}}+\boldsymbol{\mathrm{ad}}_{\mathbf{G}_p \boldsymbol{\mathrm{ad}}_{\mathbf{G}_p \mathbf{S} {{\boldsymbol{\nu}}}} \mathbf{S} {{\boldsymbol{\nu}}}}+\boldsymbol{\mathrm{ad}}^2_{\mathbf{G}_p \mathbf{S} {{\boldsymbol{\nu}}}}\big)^T \mathbf{M} \mathbf{G}_p \mathbf{S} {{\boldsymbol{\nu}}}.
\end{align}

Since the expressions for the 1-st order gravitational and external wrenches are given by, using \eqref{extweriv} and \eqref{gravweriv}, 
\begin{align}
        \dot{\boldsymbol{F}}_{\mathrm{app}}&= - \mathbf{G}_c \,\dot{\boldsymbol{W}}_{I,\mathrm{app}}^0 = \mathbf{G}_c \boldsymbol{\mathrm{ad}}^T_{\mathbf{G}_p \mathbf{S} {{\boldsymbol{\nu}}}} \boldsymbol{W}_{I,\mathrm{app}}^0 - \mathbf{G}_c \boldsymbol{\mathrm{Ad}}^T_{\mathbf{C}^{-1}} \dot{\boldsymbol{W}}_{I,\mathrm{app}}^b,\\
        \dot{\boldsymbol{F}}_{\mathrm{grav}}&= - \mathbf{G}_c \,\dot{\boldsymbol{W}}_{I,\mathrm{grav}}^0 = \mathbf{G}_c(\mathbf{M} \boldsymbol{\mathrm{ad}}_{\mathbf{G}_p \mathbf{S} {{\boldsymbol{\nu}}}}+\boldsymbol{\mathrm{ad}}^T_{\mathbf{G}_p \mathbf{S} {{\boldsymbol{\nu}}}} \mathbf{M}) \boldsymbol{G}^0_I.
\end{align}
Thus, the 1-st order inverse dynamics admits the closed form
\begin{equation}
\begin{split}
         \boldsymbol{W}&=(\dot{\boldsymbol{\Pi}} + \boldsymbol{F}_{\mathrm{app}} + \boldsymbol{F}_{\mathrm{grav}}),\\
         \dot{\boldsymbol{W}}&=(\Ddot{\boldsymbol{\Pi}} + \dot{\boldsymbol{F}}_{\mathrm{app}} + \dot{\boldsymbol{F}}_{\mathrm{grav}}),\\
        \dot{\Bar{\boldsymbol{\tau}}} &= \dot{\mathbf{S}}^T\boldsymbol{W}+\mathbf{S}^T\dot{\boldsymbol{W}}\\
        &= \mathbf{S}^T\boldsymbol{\mathrm{ad}}^T_{\mathbf{G}_p \mathbf{S} {{\boldsymbol{\nu}}}} \big(\mathbf{G}_c \mathbf{M} \mathbf{G}_p \mathbf{S} {{\dot{\boldsymbol{\nu}}}} + \mathbf{G}_c \mathbf{M} \mathbf{G}_p \boldsymbol{\mathrm{ad}}_{\mathbf{G}_p \mathbf{S} {{\boldsymbol{\nu}}}} \mathbf{S} {{\boldsymbol{\nu}}} \\
        & - \mathbf{G}_c \boldsymbol{\mathrm{ad}}^T_{\mathbf{G}_p \mathbf{S} {{\boldsymbol{\nu}}}} \mathbf{M} \mathbf{G}_p \mathbf{S} {{\boldsymbol{\nu}}} - \mathbf{G}_c \mathbf{M} \boldsymbol{G}^0_I - \mathbf{G}_c \boldsymbol{\mathrm{Ad}}^T_{\mathbf{C}^{-1}} \boldsymbol{W}_{I,\mathrm{app}}^b \big)\\
        &+ \mathbf{S}^T \mathbf{G}_c \mathbf{M} \mathbf{G}_p \mathbf{S} {{\ddot{\boldsymbol{\nu}}}}+ \mathbf{S}^T\Tilde{\Ddot{\boldsymbol{\Pi}}}({{\boldsymbol{\mathcal{Q}}}},{{\boldsymbol{\nu}}},{{\dot{\boldsymbol{\nu}}}}) \\
        &+ \mathbf{S}^T \mathbf{G}_c\boldsymbol{\mathrm{ad}}^T_{\mathbf{G}_p \mathbf{S} {{\boldsymbol{\nu}}}} \boldsymbol{\mathrm{Ad}}^T_{\mathbf{C}^{-1}} \boldsymbol{W}_{I,\mathrm{app}}^b-\mathbf{S}^T \mathbf{G}_c\boldsymbol{\mathrm{Ad}}^T_{\mathbf{C}^{-1}} \dot{\boldsymbol{W}}_{I,\mathrm{app}}^b\\
        &+\mathbf{S}^T \mathbf{G}_c(\mathbf{M} \boldsymbol{\mathrm{ad}}_{\mathbf{G}_p \mathbf{S} {{\boldsymbol{\nu}}}}+\boldsymbol{\mathrm{ad}}^T_{\mathbf{G}_p \mathbf{S} {{\boldsymbol{\nu}}}} \mathbf{M}) \boldsymbol{G}^0_I\\
        &= \Bar{\mathbf{M}}({{\boldsymbol{\mathcal{Q}}}}){{\ddot{\boldsymbol{\nu}}}}+\dot{\Bar{\boldsymbol{h}}}({{\boldsymbol{\mathcal{Q}}}},{{\boldsymbol{\nu}}}, {{\dot{\boldsymbol{\nu}}}})+\dot{\boldsymbol{g}}({{\boldsymbol{\mathcal{Q}}}},{{\boldsymbol{\nu}}})+\dot{\boldsymbol{\tau}}_{\mathrm{ext}}({{\boldsymbol{\mathcal{Q}}}},{{\boldsymbol{\nu}}}),
\end{split}
\end{equation}
where $\Tilde{\Ddot{\boldsymbol{\Pi}}}({{\boldsymbol{\mathcal{Q}}}},{{\boldsymbol{\nu}}},{{\dot{\boldsymbol{\nu}}}}) :=\Ddot{\boldsymbol{\Pi}}-\mathbf{G}_c \mathbf{M} \mathbf{G}_p \mathbf{S} {{\ddot{\boldsymbol{\nu}}}}$. Hence, the terms in \eqref{1_st_EoM} can be explicitly expanded as in \eqref{eqprop1}.
\section{{{Proof of Lemma 2}}}
\label{append:proof_Lemma2}
We start from the equations of motion for a single isolated rigid body $B_i$, which can be recast as
\begin{equation}
     \boldsymbol{W}_i = \mathbf{M}^0_i \dot{\boldsymbol{V}}_i^0 - \boldsymbol{\mathrm{ad}}^T_{{\boldsymbol{V}}_i^0} \mathbf{M}^0_i{\boldsymbol{V}}_i^0-\boldsymbol{W}_{\mathrm{app},i}^0-\boldsymbol{W}_{\mathrm{grav},i}^0
     =\mathbf{M}_i^A \dot{\boldsymbol{V}}_i^0+ \boldsymbol{W}_i^A, 
     \label{arti}
\end{equation}
where $\mathbf{M}_i^A=\mathbf{M}^0_i$ and $\boldsymbol{W}_i^A=- \boldsymbol{\mathrm{ad}}^T_{{\boldsymbol{V}}_i^0} \mathbf{M}^0_i{\boldsymbol{V}}_i^0-\boldsymbol{W}_{\mathrm{app},i}^0-\boldsymbol{W}_{\mathrm{grav},i}^0$.  

Now, consider connecting two isolated rigid bodies $B_1$ and $B_2$ through a joint $j_2$ such that $B_2$ is a child of $B_1$. The combined body, called an articulated body and labeled $A_1$, has $B_1$ as its handle. The algorithm recursively solves for the joint acceleration $\Ddot{q}_2$ and base acceleration $\dot{\boldsymbol{V}}_{\mathrm{base}}^0$, given the state, the joint torque $\Bar{\boldsymbol{\tau}}_2$ and propeller wrench at the base $\boldsymbol{W}^0_{1,\mathrm{prop}}$.  

After connection through $j_2$, $\boldsymbol{W}_2$ becomes
\begin{equation}
\begin{split}
\boldsymbol{W}_2 &= \mathbf{M}^0_2 \dot{\boldsymbol{V}}_1^0+ \mathbf{M}^0_2(\boldsymbol{S}_2\Ddot{q}_2+\dot{\boldsymbol{S}}_2\dot{q}_2) - \boldsymbol{\mathrm{ad}}^T_{{\boldsymbol{V}}_2^0} \mathbf{M}^0_2{\boldsymbol{V}}_2^0-\boldsymbol{W}_{\mathrm{app},2}^0-\boldsymbol{W}_{\mathrm{grav},2}^0.
\end{split}
\end{equation}
Projecting this wrench onto the screw axis $\boldsymbol{S}_2$ yields
\begin{equation}
\begin{split}
        \Bar{\boldsymbol{\tau}}_2 &= \boldsymbol{S}_2^T \boldsymbol{W}_2 = \boldsymbol{S}_2^T \mathbf{M}^0_2 \dot{\boldsymbol{V}}_1^0+ \boldsymbol{S}_2^T \mathbf{M}^0_2 \boldsymbol{S}_2\Ddot{q}_2+\boldsymbol{S}_2^T \mathbf{M}^0_2 \dot{\boldsymbol{S}}_2\dot{q}_2 \\
       &- \boldsymbol{S}_2^T \boldsymbol{\mathrm{ad}}^T_{{\boldsymbol{V}}_2^0} \mathbf{M}^0_2{\boldsymbol{V}}_2^0-\boldsymbol{S}_2^T \boldsymbol{W}_{\mathrm{app},2}^0-\boldsymbol{S}_2^T \boldsymbol{W}_{\mathrm{grav},2}^0.
\end{split}
\end{equation}
Rearranging, we extract $\Ddot{q}_2$ as
\begin{equation}
\begin{split}
       \Ddot{q}_2  = &\big(\boldsymbol{S}_2^T \mathbf{M}^0_2 \boldsymbol{S}_2\big)^{-1} \bigg(\Bar{\boldsymbol{\tau}}_2 - \boldsymbol{S}_2^T\big( \mathbf{M}^0_2 \dot{\boldsymbol{V}}_1^0 +\mathbf{M}^0_2 \dot{\boldsymbol{S}}_2\dot{q}_2\\
       &- \boldsymbol{\mathrm{ad}}^T_{{\boldsymbol{V}}_2^0} \mathbf{M}^0_2{\boldsymbol{V}}_2^0- \boldsymbol{W}_{\mathrm{app},2}^0- \boldsymbol{W}_{\mathrm{grav},2}^0\big)\bigg) \\
       =&- \big(\boldsymbol{S}_2^T \mathbf{M}^0_2 \boldsymbol{S}_2\big)^{-1}  \boldsymbol{S}_2^T \mathbf{M}^0_2 \dot{\boldsymbol{V}}_1^0 +\Ddot{\Tilde{q}}_2,
\end{split}
\label{accq2}
\end{equation}
where $\Ddot{\Tilde{q}}_2(\boldsymbol{q}_1,\boldsymbol{q}_2,\boldsymbol{V}_1^0,\dot{q}_2) :=\Ddot{q}_2 + \big(\boldsymbol{S}_2^T \mathbf{M}^0_2 \boldsymbol{S}_2\big)^{-1}  \boldsymbol{S}_2^T \mathbf{M}^0_2 \dot{\boldsymbol{V}}_1^0$. 
Note that the product $\big(\boldsymbol{S}_2^T \mathbf{M}^0_2 \boldsymbol{S}_2\big)$ is a non-zero scalar when multi-DoF joints are treated as an arrangement of 1-DoF joints. Hence, the new value of the $\boldsymbol{W}_1$, denoted by $\Bar{\boldsymbol{W}}_1$, has to account for the wrench $\boldsymbol{W}_2$ coming from the child body $B_2$ as follows
\begin{equation}
\begin{split}
\Bar{\boldsymbol{W}}_1&= {\boldsymbol{W}}_1+{\boldsymbol{W}}_2\\
&= \big((\mathbf{M}^0_1 +  \mathbf{M}^0_2)-\mathbf{M}^0_2 \boldsymbol{S}_2 (\boldsymbol{S}_2^T \mathbf{M}^0_2 \boldsymbol{S}_2)^{-1}  \boldsymbol{S}_2^T \mathbf{M}^0_2\big)\dot{\boldsymbol{V}}_1^0  \\
&+\mathbf{M}^0_2  (\boldsymbol{S}_2\Ddot{\Tilde{q}}_2+\dot{\boldsymbol{S}}_2\dot{q}_2) 
- \boldsymbol{\mathrm{ad}}^T_{{\boldsymbol{V}}_1^0} \mathbf{M}^0_1{\boldsymbol{V}}_1^0 
- \boldsymbol{\mathrm{ad}}^T_{{\boldsymbol{V}}_2^0} \mathbf{M}^0_2{\boldsymbol{V}}_2^0\\
& -\boldsymbol{W}_{\mathrm{app},1}^0-\boldsymbol{W}_{\mathrm{grav},1}^0 
-\boldsymbol{W}_{\mathrm{app},2}^0-\boldsymbol{W}_{\mathrm{grav},2}^0
= \Bar{\mathbf{M}}_1^A \dot{\boldsymbol{V}}_1^0+ \Bar{\boldsymbol{W}}_1^A.
\end{split}
\label{handleB1}
\end{equation}

where the articulated inertia $\mathbf{M}_1^A$ and bias $\boldsymbol{W}_1^A$ change to these new values $\Bar{\mathbf{M}}_1^A$ and $\Bar{\boldsymbol{W}}_1^A$, respectively
\begin{equation}
\begin{split}
    \Bar{\mathbf{M}}_1^A=&\big((\mathbf{M}^0_1 +  \mathbf{M}^0_2)-\mathbf{M}^0_2 \boldsymbol{S}_2 (\boldsymbol{S}_2^T \mathbf{M}^0_2 \boldsymbol{S}_2)^{-1}  \boldsymbol{S}_2^T \mathbf{M}^0_2\big),\\ 
    \Bar{\boldsymbol{W}}_1^A =&\mathbf{M}^0_2  (\boldsymbol{S}_2\Ddot{\Tilde{q}}_2+\dot{\boldsymbol{S}}_2\dot{q}_2) 
    - \boldsymbol{\mathrm{ad}}^T_{{\boldsymbol{V}}_1^0} \mathbf{M}^0_1{\boldsymbol{V}}_1^0 
    - \boldsymbol{\mathrm{ad}}^T_{{\boldsymbol{V}}_2^0} \mathbf{M}^0_2{\boldsymbol{V}}_2^0\\
    & -\boldsymbol{W}_{\mathrm{app},1}^0-\boldsymbol{W}_{\mathrm{grav},1}^0 
    -\boldsymbol{W}_{\mathrm{app},2}^0-\boldsymbol{W}_{\mathrm{grav},2}^0.
\end{split}
\end{equation}

The articulated body $A_1$ now has a handle $B_1$ with updated equations of motion \eqref{handleB1}. Body $A_1$ can be connected to another body as its child or its parent through a joint between the handle $B_1$ and the new body. After reaching the root body of the tree, i.e., the floating base, the algorithm concludes this backward recursion with the articulated inertia of the root body $\mathbf{M}_{\mathrm{base}}^A$ accumulating all previous inertias, and its bias wrench $\boldsymbol{W}_{\mathrm{base}}^A$ encompassing all other bodies' wrenches. 

A forward recursion is then initiated by computing the twist of the base $\dot{\boldsymbol{V}}_{\mathrm{base}}^0$ from this system of six linear equations using $LDL^T$ decomposition, since the articulated inertia $\mathbf{M}_{\mathrm{base}}^A$ is symmetric positive-definite \cite{featherstone2014rigid},
\begin{equation}
    \boldsymbol{W}^0_{1,\mathrm{prop}}= \mathbf{M}_{\mathrm{base}}^A \dot{\boldsymbol{V}}_{\mathrm{base}}^0+ \boldsymbol{W}_{\mathrm{base}}^A.
    \label{baseacc}
\end{equation}
The result is then propagated to the children in order to solve for the joint accelerations using \eqref{accq2}. The children twists are then updated, which will be needed to re-evaluate \eqref{accq2} in the next iteration. 

Suppose now that it is desired to recursively calculate the 1-st time derivative of the joint and base accelerations, i.e., the joint $\dddot{q}_2$ and base $\Ddot{\boldsymbol{V}}_{\mathrm{base}}^0$ jerks, given the state, $\Bar{\boldsymbol{\tau}}_2$ and its time derivative $\dot{\Bar{\boldsymbol{\tau}}}_2$, $\boldsymbol{W}^0_{1,\mathrm{prop}}$ and its derivative $\dot{\boldsymbol{W}}^0_{1,\mathrm{prop}}$, along with $\Ddot{q}_2$ and $\dot{\boldsymbol{V}}_{\mathrm{base}}^0$ obtained from \eqref{accq2} and \eqref{baseacc}.  

The 1-st time derivative of the constraint wrench between $B_1$ and $B_2$, $\dot{\boldsymbol{W}}_2$, is given by
\begin{equation}
\begin{split}
\dot{\boldsymbol{W}}_2 &= \Ddot{\boldsymbol{\Pi}}^0_2-\dot{\boldsymbol{W}}_{\mathrm{app},2}^0-\dot{\boldsymbol{W}}_{\mathrm{grav},2}^0\\
&= \mathbf{M}_2^0\!\left( \Ddot{\Tilde{\boldsymbol{V}}}_2^0 + \Ddot{\boldsymbol{V}}_1^0+\boldsymbol{S}_2 \dddot{q}_2 - \boldsymbol{\mathrm{ad}}_{\boldsymbol{V}_2^0}\dot{\boldsymbol{V}}_2^0\right) \\
&-2  \boldsymbol{\mathrm{ad}}_{{\boldsymbol{V}}_2^0}^T (
         \mathbf{M}_2^0 \dot{\boldsymbol{V}}_2^0 -  \boldsymbol{\mathrm{ad}}_{\boldsymbol{V}_2^0}^T \mathbf{M}_2^0 \boldsymbol{V}_2^0 )\\
&- \left(\boldsymbol{\mathrm{ad}}_{\dot{\boldsymbol{V}}^0_2}+\boldsymbol{\mathrm{ad}}^2_{\boldsymbol{V}^0_2}\right)^T\mathbf{M}^0_2\boldsymbol{V}^0_2 
-\dot{\boldsymbol{W}}_{\mathrm{app},2}^0-\dot{\boldsymbol{W}}_{\mathrm{grav},2}^0\\
&=  \mathbf{M}_2^0 \Ddot{\Tilde{\boldsymbol{V}}}_2^0+ \mathbf{M}_2^0 \Ddot{\boldsymbol{V}}_1^0+\mathbf{M}_2^0 \boldsymbol{S}_2 \dddot{q}_2 +\Ddot{\Tilde{\boldsymbol{\Pi}}}^0_2 
-\dot{\boldsymbol{W}}_{\mathrm{app},2}^0-\dot{\boldsymbol{W}}_{\mathrm{grav},2}^0,
\end{split}
\end{equation}
where $\Ddot{\Tilde{\boldsymbol{V}}}_2^0$ and $\Ddot{\Tilde{\boldsymbol{\Pi}}}_2^0$ are defined as
\begin{equation}
        \Ddot{\Tilde{\boldsymbol{V}}}_2^0 := 2 \dot{\boldsymbol{S}}_2 \Ddot{q}_2 + \Ddot{\boldsymbol{S}}_2 \dot{q}_2,\qquad
        \Ddot{\Tilde{\boldsymbol{\Pi}}}_2^0 := \Ddot{\boldsymbol{\Pi}}_2^0 - \mathbf{M}_2^0\Ddot{\boldsymbol{V}}_2^0.
\end{equation}

Hence, the time derivative of the joint torque is expressed as
\begin{equation}
\begin{split}
        \dot{\Bar{\boldsymbol{\tau}}}_2 &= \dot{\boldsymbol{S}}_2^T \boldsymbol{W}_2 + {\boldsymbol{S}}_2^T \dot{\boldsymbol{W}}_2 \\
        &=\dot{\Tilde{\boldsymbol{\tau}}}_2+ \boldsymbol{S}_2^T \mathbf{M}^0_2 \Ddot{\boldsymbol{V}}_1^0+ \boldsymbol{S}_2^T \mathbf{M}^0_2 \boldsymbol{S}_2\dddot{q}_2+\boldsymbol{S}_2^T \mathbf{M}^0_2 \Ddot{\Tilde{\boldsymbol{V}}}_2^0 \\
        &+  \boldsymbol{S}_2^T \Ddot{\Tilde{\boldsymbol{\Pi}}}_2^0
        -\boldsymbol{S}_2^T \dot{\boldsymbol{W}}_{\mathrm{app},2}^0
        -\boldsymbol{S}_2^T \dot{\boldsymbol{W}}_{\mathrm{grav},2}^0,
\end{split}
\end{equation}
where $\dot{\Tilde{\boldsymbol{\tau}}}_2 :=  \dot{\Bar{\boldsymbol{\tau}}}_2 - {\boldsymbol{S}}_2^T \dot{\boldsymbol{W}}_2$.  

This yields the equation for $\dddot{q}_2$ as
\begin{equation}
\begin{split}
       \dddot{q}_2  = &\big(\boldsymbol{S}_2^T \mathbf{M}^0_2 \boldsymbol{S}_2\big)^{-1} \bigg(\dot{\Bar{\boldsymbol{\tau}}}_2-\dot{\Tilde{\boldsymbol{\tau}}}_2 - \boldsymbol{S}_2^T\big( \mathbf{M}^0_2 \Ddot{\boldsymbol{V}}_1^0 +\mathbf{M}^0_2 \Ddot{\Tilde{\boldsymbol{V}}}_2^0 \\
       &+ \Ddot{\Tilde{\boldsymbol{\Pi}}}_2^0- \dot{\boldsymbol{W}}_{\mathrm{app},2}^0- \dot{\boldsymbol{W}}_{\mathrm{grav},2}^0\big)\bigg) \\
       =&- \big(\boldsymbol{S}_2^T \mathbf{M}^0_2 \boldsymbol{S}_2\big)^{-1}  \boldsymbol{S}_2^T \mathbf{M}^0_2 \Ddot{\boldsymbol{V}}_1^0 +\dddot{\Tilde{q}}_2,
\end{split}
\label{acccq2}
\end{equation}
where $\dddot{\Tilde{q}}_2 :=\dddot{q}_2 + \big(\boldsymbol{S}_2^T \mathbf{M}^0_2 \boldsymbol{S}_2\big)^{-1}  \boldsymbol{S}_2^T \mathbf{M}^0_2 \Ddot{\boldsymbol{V}}_1^0$.  

When connected to $B_1$, the time derivative of the updated constraint wrench transmitted to its parent $\dot{\Bar{\boldsymbol{W}}}_1$ becomes
\begin{equation}
\begin{split}
\dot{\Bar{\boldsymbol{W}}}_1&= \dot{{\boldsymbol{W}}}_1+\dot{{\boldsymbol{W}}}_2\\
&= \big((\mathbf{M}^0_1 +  \mathbf{M}^0_2)-\mathbf{M}^0_2 \boldsymbol{S}_2 (\boldsymbol{S}_2^T \mathbf{M}^0_2 \boldsymbol{S}_2)^{-1}  \boldsymbol{S}_2^T \mathbf{M}^0_2\big)\Ddot{\boldsymbol{V}}_1^0 \\
&+\mathbf{M}_2^0 \Ddot{\Tilde{\boldsymbol{V}}}_2^0+\mathbf{M}_2^0 \boldsymbol{S}_2 \dddot{\Tilde{q}}_2 +\Ddot{\Tilde{\boldsymbol{\Pi}}}_1^0+\Ddot{\Tilde{\boldsymbol{\Pi}}}_2^0 \\
&-\dot{\boldsymbol{W}}_{\mathrm{app},1}^0-\dot{\boldsymbol{W}}_{\mathrm{grav},1}^0 
-\dot{\boldsymbol{W}}_{\mathrm{app},2}^0-\dot{\boldsymbol{W}}_{\mathrm{grav},2}^0
= {\Bar{\mathbf{M}}}_1^A \Ddot{\boldsymbol{V}}_1^0+ \dot{\Bar{\boldsymbol{W}}}_1^A,
\end{split}
\end{equation}
where $\Ddot{\Tilde{\boldsymbol{\Pi}}}_1^0 := \Ddot{\boldsymbol{\Pi}}_1^0 - \mathbf{M}_1^0\Ddot{\boldsymbol{V}}_1^0$.  
with the property ${\Bar{\mathbf{M}}}_1^A$ being the same as the acceleration level ($r=0$).  Hence, it can be calculated once and reused for any order.

The procedure continues by connecting $B_1$ as a child to its predecessor in the chain until the base body is reached. Then, the following system of six linear equations is solved for $\Ddot{\boldsymbol{V}}_{\mathrm{base}}^0$ using $LDL^T$ decomposition:
\begin{equation}
    \dot{\boldsymbol{W}}^0_{1,\mathrm{prop}}= \mathbf{M}_{\mathrm{base}}^A \Ddot{\boldsymbol{V}}_{\mathrm{base}}^0+ \dot{\boldsymbol{W}}_{\mathrm{base}}^A.
\end{equation}

Afterwards, the forward recursion begins by propagating the base jerk $\Ddot{\boldsymbol{V}}_{\mathrm{base}}^0$ to its children to compute their joint jerks from \eqref{acccq2} and the corresponding link jerks. It proceeds until all terminal nodes in each branch are reached. 

The steps for $r\geq 2$ follow the exact same pattern by induction, which concludes the proof.
\section{{{Initialization procedure}}}
To obtain the kinematics and its higher orders, the robot has to be set up as following so that its model parameters can become available to Algo. \ref{alg:kine}.
\label{append:initial}
\begin{itemize}
    \item \textbf{Step 1.} Assign a reference frame $\mathcal{F}_{i}$ to each body $i$ in the tree, preferably with the origin at CoM, and denote the inertial frame by $\mathcal{F}_{0}$ and the base frame by $\mathcal{F}_{1}$. Hence, $\mathbf{C}^{j}_{i} \in SE(3)$ is a function of all the configuration variables in the path connecting $i$ to $j$. 
    \item \textbf{Step 2.} Put the system at home configuration $\mathcal{H}_{0}$, i.e. a configuration where all joint variables $q_i(0) \equiv 0, \, \text{and }  \mathbf{C}^{0}_{1}(0)=\mathbf{I}_{4\times4}$, which can be chosen arbitrary. Without loss of generality, this assumes the inertial frame is coincident with the floating-base fixed frame at such configuration.
    \item \textbf{Step 3.} In the configuration $\mathcal{H}_{0}$, the spatial screws list $\{\boldsymbol{Y}_1, \boldsymbol{Y}_2,..., \boldsymbol{Y}_n\}$, encoding geometric information about the joint models \cite{featherstone2014rigid,muller2018screw}, is computed from \eqref{constantScrew} for all the $n$ 1-DoF joints in the tree. Spatial representation is chosen here since it is right-invariant, meaning that the spatial twist of a body is invariant under a change of body-fixed frame \cite{murray2017mathematical}. If the joint is not lower pair \cite{selig2007geometric}, like universal joint, it may be represented by an arrangement of 1-DoF joints. For example, the 2 DoF universal joint can be modeled by 2 consecutive 1-DoF revolute joints $\boldsymbol{q} \in \mathbb{T}^2\equiv \mathbb{S}^1 \times\mathbb{S}^1$ such that their axes are always mutually-orthogonal. 
    
    However, if the joint has a motion subgroup which is a generator of subgroup in $SE(3)$ and not isomorphic to $\mathbb{T}^{n_1} \times \mathbb{R}^{n_2}$, i.e. spherical joint $SO(3)$, the relative configuration may be better described using the corresponding group element instead of the minimal coordinates to avoid parameterization singularities. For instance, let $\boldsymbol{q}\in SO(3)$ be a spherical joint between bodies $j$ and $i$, then $\mathbf{C}^{j}_{i}(\boldsymbol{q}(t))= \left(\begin{smallmatrix}
        \boldsymbol{q}(t) & \boldsymbol{x} \\ \mathbf{0} & 1
    \end{smallmatrix}\right)$, where $\boldsymbol{x} \in \mathbb{R}^3$ is the constant position of  $\mathcal{F}_{i}$'s origin expressed in $\mathcal{F}_{j}$. Here, $\boldsymbol{q}\in SO(3)$ is preferred over $\boldsymbol{q}\in \mathbb{T}^{3}$ since they are not isomorphic. Contrarily to Euler Angles parameterization, this characterization of the spherical joint is singularity-free. In a floating-base system, the base, indexed here by 1, is connected to the ground 0, i.e. inertial frame, by a free motion joint whose motion subgroup is the entire $SE(3)$. Hence its relative configuration to the inertial frame is taken as $\mathbf{C}^{0}_{1}(t) \in SE(3)$.
    \item \textbf{Step 4.} Create 2 lists $c\{i\}$ and $p\{i\}$ for each body $i$ to store the indexes of children and parent of body $i$. In the directed graph Fig.~\ref{kinematicTree}, with in-degree $= 1$ and out-degree $\geq1$, a child of body $i$ is any body (node) $j_c \in c(i)$ connected to body $i$ by a joint (edge) $j_c$, whereas a parent of body $i$ is \textit{the} body $j_p \in p(i)$ whose connection to body $i$ is the joint $i$. 
    \item \textbf{Step 5.} Let $\mathbf{A}^{p\{i\}}_{i}$ be the relative constant configuration of each body $\mathcal{F}_{i}$ to its parent frame $\mathcal{F}_{p\{i\}}$ in the configuration $\mathcal{H}_{0}$. Denote by $\boldsymbol{S}_i$ the instantaneous spatial screw vector of joint $i$ expressed in the inertial frame at the current configuration.
\end{itemize} 

\section{{{Derivations For H-GRNE-FBS and H-GABI-FBS}}}
\label{append:Derivations}
This section collects analytical expressions for the time derivatives of quantities used in both algorithms \ref{alg:Dyn} and \ref{alg:ForwardDyn}. According to Newton's second law, body $j$ momentum time rate of change is equal to the total wrench acting on this body
\begin{equation}
    \dot{\boldsymbol{\Pi}}^0_j= (\boldsymbol{W}^0_j)^{\mathrm{tot}}
\end{equation}
where $(\boldsymbol{W}^0_j)^{\mathrm{tot}}$ is the total spatial wrench applied at body $j$ which can be calculated from
\begin{equation}
    (\boldsymbol{W}^0_j)^{\mathrm{tot}}= \boldsymbol{W}^0_{j,\mathrm{app}}+\boldsymbol{W}^0_{j,\mathrm{grav}} + \boldsymbol{W}^0_{p(j)}  - \sum_{c(j)} \boldsymbol{W}^0_{c(j)}
\end{equation}

The spatial momentum $\boldsymbol{\Pi}^0_j$ and its higher-order time derivatives can be evaluated from kinematical quantities, computed in Algorithm (\ref{alg:kine}), using Euler–Poincaré equation $\dot{\boldsymbol{\Pi}}^0_j=\mathbf{M}^0_j \dot{\boldsymbol{V}}^0_j-\boldsymbol{\mathrm{ad}}_{\boldsymbol{V}^0_j}\boldsymbol{\Pi}^0_j$ and its derivatives \cite{marsden2013introduction}. In the 0-th order it is readily given by 
\begin{equation}
    \boldsymbol{\Pi}^0_j= \mathbf{M}^0_j \boldsymbol{V}^0_j.
\end{equation}
where the spatial mass matrix $\mathbf{M}^0_j$ is related to the body-fixed constant mass matrix $\mathbf{M}^b_j$ by this frame transformation \cite{murray2017mathematical} 
\begin{equation}
    \mathbf{M}^0_j= \boldsymbol{\mathrm{Ad}}_{\mathbf{C}^0_j}^{-T} \mathbf{M}^b_j \boldsymbol{\mathrm{Ad}}_{\mathbf{C}^0_j}^{-1}
    \label{spaM}
\end{equation}
The time derivatives of the matrix $\mathbf{M}^0_j$ can be expressed in terms of the transformation matrix and the higher-order spatial twists of the body. In general, its $r$-th time derivative has the form, after some algebraic manipulation using \eqref{eqFormulas} and dropping the indexes for clarity
{{\begin{equation}
\resizebox{\columnwidth}{!}{$
\mathbf{M}^{(r)}=
\begin{cases}
\mathbf{M}, & r=0,\\[3pt]
\displaystyle
- \sum_{k=0}^{r-1} \binom{r-1}{k}
\left( \mathbf{M}^{(r-1-k)} \boldsymbol{\mathrm{ad}}_{\boldsymbol{V}^{(k)}}
+ \boldsymbol{\mathrm{ad}}_{\boldsymbol{V}^{(k)}}^T \mathbf{M}^{(r-1-k)} \right), & r\ge 1.
\end{cases}
$}
\label{Mass}
\end{equation}}}
Now, the $r$-th time derivative of body's $j$ momentum is obtained from this relation
\begin{equation}
    \boldsymbol{\Pi}^{(r)} = \sum_{k=0}^{r} \binom{r}{k} \mathbf{M}^{(r-k)} \boldsymbol{V}^{(k)}
    \label{Momu}
\end{equation}
Also, the spatial gravity wrench on body $j$ can be calculated from the spatial mass matrix \eqref{spaM} as
\begin{equation}
\begin{split}
       &\boldsymbol{W}^0_{j,\mathrm{grav}}=\mathbf{M}^0_j \, \boldsymbol{G}^0,\,\,
       \boldsymbol{G}^0=\left(\begin{smallmatrix}
           0&0&0&0&0&-g
       \end{smallmatrix}\right)^T
\end{split}
\end{equation}
where $g$ is the gravity acceleration constant. It is easily seen that its higher order time derivatives are a function of \eqref{Mass} and given by
\begin{equation}
   \boldsymbol{W}^{(r)}= \mathbf{M}^{(r)} \boldsymbol{G}^0
    \label{gravweriv}
\end{equation}

The bias momentum $\boldsymbol{\Pi}_{j,\mathrm{bias}}^{(r)}$ is given by 
\begin{equation}
\boldsymbol{\Pi}_{j,\mathrm{bias}}^{(r)}
=
\sum_{k=0}^{r-1}
\binom{r}{k}
(\mathbf{M}_{j}^{0})^{(r-k)}
(\boldsymbol{V}_{j}^{0})^{(k)}
\end{equation} 
whereas the bias spatial twist of body $i$, denoted by $(\boldsymbol{V}_{i,\mathrm{bias}}^0)^{(r)}$, is calculated from 
\begin{equation}
    (\boldsymbol{V}_{i,\mathrm{bias}}^0)^{(r)}
=
\sum_{k=1}^{r}
\binom{r}{k}
\boldsymbol{S}_{i}^{(k)}
\boldsymbol{q}_{i}^{(r-k+1)}
\end{equation}

Let $\boldsymbol{W}^b_{j}$ denote the applied wrench at body $j$ expressed in its body-fixed frame $\mathcal{F}_{j}$, then the mapping to the spatial wrench is \cite{lynch2017modern}
\begin{equation}
     \boldsymbol{W}^0_{j}= \boldsymbol{\mathrm{Ad}}_{(\mathbf{C}^0_j)^{-1}}^T \boldsymbol{W}^b_{j}.
\end{equation}
Hence, its first time derivative is, after applying \eqref{Adjointderiv},
\begin{equation}
  \dot{\boldsymbol{W}}^0_{j}= -\boldsymbol{\mathrm{ad}}^T_{\boldsymbol{V}^0}\,\boldsymbol{\mathrm{Ad}}_{\mathbf{C}^{-1}}^T \boldsymbol{W}^b_{j}+  \boldsymbol{\mathrm{Ad}}_{\mathbf{C}^{-1}}^T \dot{\boldsymbol{W}}^b_{j}
  =-\boldsymbol{\mathrm{ad}}^T_{\boldsymbol{V}^0} \boldsymbol{W}^0_{j}\,+  \boldsymbol{\mathrm{Ad}}_{\mathbf{C}^{-1}}^T \dot{\boldsymbol{W}}^b_{j}.
\end{equation}
Here, $\dot{\boldsymbol{W}}^b_{j}$ is the component-wise time derivative of $\boldsymbol{W}^b_{j}$. In general, the $r$-th time derivative is
{{\begin{equation}
\resizebox{\columnwidth}{!}{$
(\boldsymbol{W}^0)^{(r)}=
\begin{cases}
\boldsymbol{\mathrm{Ad}}_{\mathbf{C}^{-1}}^{T}\boldsymbol{W}^{b}, & r=0,\\[4pt]
\displaystyle
 \sum_{k=0}^{r-1} \binom{r-1}{k} \Big(-
\boldsymbol{\mathrm{ad}}^T_{(\boldsymbol{V}^0)^{(k)}} (\boldsymbol{W}^0)^{(r-k-1)}
+ (\boldsymbol{\mathrm{Ad}}_{\mathbf{C}^{-1}}^T)^{(k)} (\boldsymbol{W}^b)^{(r-k)}
\Big), & r\ge 1.
\end{cases}
$}
\label{extweriv}
\end{equation}}}
where the $r$-th time derivative of $\boldsymbol{\mathrm{Ad}}_{\mathbf{C}^{-1}}^T$ is given by applying the chain rule to \eqref{Adjointderiv}, which yields
\begin{equation}
     (\boldsymbol{\mathrm{Ad}}_{\mathbf{C}^{-1}}^T)^{(r)} =- \sum_{k=0}^{r-1} \binom{r-1}{k} \boldsymbol{\mathrm{ad}}^T_{(\boldsymbol{V}^{0})^{(k)}} (\boldsymbol{\mathrm{Ad}}_{\mathbf{C}^{-1}}^T)^{(r-k-1)}.
\end{equation}


\end{document}